\documentclass{article} 
\usepackage{iclr2025_conference, times}


\usepackage[utf8]{inputenc} 
\usepackage[T1]{fontenc}    
\usepackage{hyperref}       
\usepackage{url}            
\usepackage{booktabs}       
\usepackage{amsfonts}       
\usepackage{nicefrac}       
\usepackage{microtype}      
\usepackage{subcaption} 
\usepackage{wrapfig, lipsum,booktabs}
\usepackage{algorithm,algpseudocode}
\usepackage{enumitem}
  \setlist{leftmargin=*}
\usepackage{graphicx}
  \graphicspath{ {./figs/} }
\usepackage{lineno}
\usepackage{tikz}
\usetikzlibrary{positioning}
\usepackage{YK}

\def\cvx{\mathrm{cvx}}
\def\hg{\widehat{g}}

\newcount\Comments  
\Comments=1
\newcommand{\kibitz}[2]{\ifnum\Comments=1\textcolor{#1}{#2}\fi}
\usepackage{color}
\definecolor{darkgreen}{rgb}{0,0.4,0}
\definecolor{purple}{rgb}{1,0,1}

\usepackage{listings}

\usepackage{listings}
\usepackage{multirow}
\usepackage{microtype}      
\usepackage{colortbl}
\usepackage{bigstrut}
\usepackage{graphicx}
\usepackage{rotating} 
\usepackage{pdflscape} 
\usepackage{adjustbox}
\usepackage{comment}
\usepackage[font=small]{caption}
\usepackage{tcolorbox}
\newtcolorbox{mybox}{colback=blue!5!white,colframe=blue!75!black}
\definecolor{Gray}{gray}{0.95}
\definecolor{DarkGray}{gray}{0.5}
\definecolor{LightCyan}{rgb}{0.88,1,1}
\definecolor{bisque}{rgb}{1.0, 0.89, 0.77}
\definecolor{blanchedalmond}{rgb}{1.0, 0.92, 0.8}
\definecolor{cosmiclatte}{rgb}{1.0, 0.97, 0.91}
\definecolor{cornsilk}{rgb}{1.0, 0.97, 0.86}

\hypersetup{
    colorlinks,
    linkcolor={red!50!black},
    citecolor={blue!50!black},
    urlcolor={blue!80!black}
}
\makeatletter
\def\@fnsymbol#1{\ensuremath{\ifcase#1\or * \or \spadesuit \or \clubsuit \or \blacklozenge \or \blacktriangle \or \P \or \bigstar \or \dagger\or \mathsection\or
   \ddager\or \mathparagraph\or \|\or **\or \dagger\dagger
   \or \ddagger\ddagger \else\@ctrerr\fi}}
\makeatother

\title{A transfer learning framework for weak to strong generalization}


\author{Seamus Somerstep$^*$\thanks{smrstep@umich.edu}\footnotemark[2]\ \ \ Felipe Maia Polo\footnotemark[2]\ \
\ \textbf{Moulinath Banerjee}\footnotemark[2] \ \ \ \hspace{1.3mm}\\
\textbf{Ya'acov Ritov}\footnotemark[2] \ \ \ \textbf{Mikhail Yurochkin}\footnotemark[7]~\hspace{1.3mm}\footnotemark[3] \ \ \ \textbf{Yuekai Sun}\footnotemark[2] \\ 
\normalsize $^\spadesuit$ University of Michigan \\ $^\bigstar$ IBM Research  \ \ $^\clubsuit$ MIT-IBM Watson AI Lab
}

%

\iclrfinalcopy 
\begin{document}

\maketitle

\begin{abstract}
Modern large language model (LLM) alignment techniques rely on human feedback, but it is unclear whether these techniques fundamentally limit the capabilities of aligned LLMs. In particular, it is unknown if it is possible to align (stronger) LLMs with superhuman capabilities with (weaker) human feedback \emph{without degrading their capabilities}. This is an instance of the weak-to-strong generalization problem: using feedback from a weaker (less capable) model to train a stronger (more capable) model. We prove that weak-to-strong generalization is possible by eliciting latent knowledge from pre-trained LLMs. In particular, we cast the weak-to-strong generalization problem as a transfer learning problem in which we wish to transfer a latent concept prior from a weak model to a strong pre-trained model. We prove that a naive fine-tuning approach suffers from fundamental limitations, but an alternative refinement-based approach suggested by the problem structure provably overcomes the limitations of fine-tuning. Finally, we demonstrate the practical applicability of the refinement approach in
multiple LLM alignment tasks.
\end{abstract}

\section{Introduction}

Modern AI alignment methods are based on human feedback, but such methods may limit the abilities of AI models to those of human experts. When the capabilities of AI systems exceed those of humans \citep{Bengio_2023}, human experts may not be able to comprehend--much less provide feedback on--the outputs of AI models. For example, future AI models may be able to develop entire software stacks in multiple programming languages that no (human) software engineer can review in their entirety. This leads to the superalignment problem \citep{OpenAISuperalignment}: aligning superhuman AI when human experts can only provide (relatively) weak guidance. 

Following \citet{burns2023Weaktostrong}, we study superalignment through the analogy of training more capable models (\ie{}, GPT-4o-mini) on outputs from weaker models (\ie{}, Llama-7B). This problem setting, using a smaller weaker model (instead of humans) to supervise the alignment of a larger stronger model, is known as \emph{weak to strong generalization} \citep{burns2023Weaktostrong}. Our main contributions are: 
\begin{enumerate}
\item We formulate the weak-to-strong generalization problem as a transfer learning problem in which we wish to transfer a prior over a latent concept from a weaker to a stronger model. 
\item Within our framework, we show that naively fine-tuning the strong model with the weak labels leads to an estimate for the target function with poor expected risk. Empirically, we demonstrate that the accuracy of the fine-tuned strong model is limited by the accuracy of the weak model because the strong model will learn to emulate the mistakes of the weak model.
\item Motivated by these negative results, we develop a refinement-based approach that elicits latent knowledge from the strong model. Within our framework, we model this as an implicit Bayesian inference step and prove that it overcomes the limitations of fine-tuning on the weak labels. We also demonstrate the practical applicability of this approach by helping GPT-3.5-Turbo \citep{brown2020language} and GPT-4o-mini \citep{4oMini} learn a new persona, improve mathematical reasoning, and learn an explanation technique with weak supervision provided by a variety of weak models. 
\end{enumerate}

\subsection{Concurrent Work}
We reserve the main paper for a discussion of closely related ideas in the weak-to-strong generalization space. Please see Appendix \ref{a:related work} for prior related work. 

Simultaneously to us, the authors of \citet{lang2024theoreticalanalysisweaktostronggeneralization}, \citet{charikar2024quantifyinggainweaktostronggeneralization} and \citet{wu2024provableweaktostronggeneralizationbenign} 
have developed their own theories of weak to strong generalization. In \citet{charikar2024quantifyinggainweaktostronggeneralization}, a representation based model is proposed. Under the assumption that models are selected over a convex set, they quantify the gain of the weak-label trained strong model over the weak model. The authors of \citet{lang2024theoreticalanalysisweaktostronggeneralization} show that weak-to-strong generalization will arise if the strong model has good properties over neighborhoods of the data space. The work of \citet{wu2024provableweaktostronggeneralizationbenign} demonstrates that weak to strong generalization can occur in spiked covariance models with growing dimensions and label space cardinality. Our theory is generally distinct; each of the mentioned works is concerned with showing that simply fine-tuning on weak labels is able to provide substantial gains on the target task (as compared to the weak model). The primary message of our work is different. In our formulation, the benefit of weak label training is quite limited, and thus we provide alternative procedures to work around this issue.


Empirically, the closest work to ours is \citet{yang2024weaktostrongreasoning}. Concurrently they develop a methodology similar to our refinement method.
Our work provides statistical intuition for the success of their methods, and as part of our experimental contribution, we show that our 
refinement method achieves weak-to-strong generalization on their weakly labeled data sets. Other emprical investigations of weak to strong generalization include \citet{zhang2024transcendencegenerativemodelsoutperform} which investigates the role of the temperature parameter in weak to strong generalization, \citet{yang2024superficialalignmentstrongmodelsdeceive} which studies deception in weak to strong generalization, and \citet{fan2024giantsshoulderseffortlessweak} which proposes a dynamic logit fusion approach for weak to strong generalization. 
\section{The Transfer learning Model}
\label{headings}
Our transfer learning framework is built on the varous mixture and/or latent concept models for large language models \citep{xie2021Explanation, wang2024largelanguagemodelslatent, pathak2024transformers}. Generally, 
these models hypothesize that an LLM fed a prompt of the form
$\textstyle \texttt{prompt} = ((x_1, y_1), (x_2,y_2), \ldots, (x_{n_{\text{ICL}}}, y_{n_{\text{ICL}}}), x), $
has distribution 
\begin{equation}\textstyle 
\label{eq: LCM}
P(y|\texttt{prompt}) = \sum_k p(y|x, k)p(k|(x_1, y_1), (x_2,y_2), \ldots, (x_{n_{\text{ICL}}}, y_{n_{\text{ICL}}}), x).
\end{equation}
\subsection{set up}
Motivated by the weak-to-strong alignment problem \citep{burns2023Weaktostrong}, we consider the following transfer learning problem: There are source and target distributions $P$ and $Q$ which are both joint distributions on the tuple of random variables $(X, Y, Y') \in \mathbb{R}^d \times \mathbb{R} \times \mathbb{R}$, \ie {} we have
\[\textstyle (X, Y, Y') \sim P \text{ or } Q.\]
In the transfer learning problem, each of the random variables in this tuple represents a different aspect of the weak-to-strong generalization problem. The variable $X$ represents (tokenized) queries to an LLM, and the variables $Y$ and $Y'$ represent the outputs of the strong and weak models, respectively. Consequently, $P_{Y|X}$ will represent a strong unaligned LLM fed $X$ while $Q_{Y'|X}$ will represent a weaker but aligned LLM fed an input $X$. The ultimate goal of the learner is to produce an aligned version of the strong model (in our set-up, this is represented as $Q_{Y|X}$).

\textbf{Source distribution}: In practical weak to strong generalization settings, the learner often does not observe direct samples from $P$ but merely has access to $P_{Y|X}$ (this represents a practitioner who is given a model that must be aligned). Furthermore, the learner does not observe weak labels in the source distribution, making the specification of $P_{Y'|X}$ irrelevant. We turn our attention to the pairs $(X,Y)$ and assume that we may write
\begin{align*}\textstyle
P_X \deq \text{Unif}([-1,1]^d); \quad P_{Y\mid X} \deq \sum_{k=1}^K\alpha_k \varphi(y; \beta_k^{\top}X, \sigma^2).
\end{align*}
Here $\varphi$ denotes the normal density, $\sigma^2$ the error rate on the labels, and $\sum_k \alpha_k =1$ with $\alpha_k \geq 0$ for all $k$. The choice to use a linear model is influenced by the results in \citet{pathak2024transformers} where transformer architectures that emulate mixtures-of-regressions are studied. The reader may also interpret this set up as the one studied in \citet{wang2024largelanguagemodelslatent} with a linear specification on their map between $X$ and $Y$. The connection between the framework and these references is discussed further in Appendix \ref{a: MOR}. We emphasize that the concepts $k$ and model components $\beta_k$ are latent; the learner can view $Y$ for a given $X$ but does not know which internal component generated $Y$. 

\textbf{Target distribution:} In weak-to-strong generalization, the learner does not observe gold-standard target data ($(X, Y) \sim {Q_{X,Y}}$). Instead, they are given covariates $X$ and (possibly biased or noisy) labels $Y'$ that are produced by a weak(er) model fed $X$. Following \citet{wang2024largelanguagemodelslatent} we assume that the target data is drawn from the source distribution with a prior shift towards one desirable concept. In other words, we may write the conditional distributions of $Y$ and $Y'$ as the following:
\begin{align*}
    Q_{Y\mid X} \deq \varphi(y; \beta_{k^*}^{\top}X, \sigma^2); \quad Q_X \deq \text{Unif}([-1, 1]^d) \\
    Q_{Y'|X} \deq \begin{cases}  \varphi(\beta_{k^*}^{w \top}X;\sigma^2) & \text{biased weak model,}\\
 \varphi(\beta^{\top}_{k^*} X;\sigma^2+{\sigma'}^2) & \text{noisy weak model.}
\end{cases}
\end{align*}
In other words, the weak model provides target supervision in the sense that it is sampling from the correct concept, but the conditional density is corrupted, representing the reduced capabilities of smaller language models. \textit{In this view, the prior over the latent concept $k$ represents alignment, while the corrupted density represents the weakened capabilities of the smaller aligned model.} Throughout, we will need minor assumptions on the vectors $\beta_k$ and $\beta_{k^*}^w$.
\begin{assumption}
\label{ass: vec assumption}
    The collection of vectors $\{\beta_k\}_{k=1}^K$ is orthonormal. Additionally, the vector $\beta_{k^*}^w$ satisfies 
     $\begin{cases}  \beta_{k^*}^{w \top} \beta_k \in [0,1], \text{ if } k = k^* & \\
 \beta_{k^*}^{w \top} \beta_k = 0, \text{ if } k \neq k^* & 
\end{cases}$
\end{assumption}
We have opted to consider two versions of weakness, the noisy case is simply if the weak labels are the strong labels corrupted by $\iid$ noise. In the other case, the weak labels are provided by a model with some misspecified parameter $\beta_{k^*}^w$. The quantity $\beta_{k^*}^T \beta_{k^*}^w$ measures how closely the weak aligned model approximates the target model.

The ultimate goal of the learner is to take data, $D' = \{x_i, y'_i\}_{i=1}^{n_{Q'}}$, (sampling) access to $P_{Y|X}$ and produce an estimator $\hat{\beta}$ that predicts $Y$ from $X$ over $Q$. We are interested in the excess risk of any produced estimate $\hat{\beta}$ which is given by $\cR(\hat{\beta}) = \mathbb{E}_{Q_X}[ \beta_{k^*}^{\top}X-\hat{\beta}^{\top} X]^2$.

The following is an example of our framework in practice, here we wish to teach a strong model, \ie{} one that produces factually correct (uncorrupted) responses, to talk like a pirate. The source concept prior is the distribution of personas of a stereotypical LLM, while the target concept is the pirate persona. In weak to strong generalization, a weak model produces outputs that are corrupted, but from the target concept. Here Falcon7B is explicitly instructed to respond to questions as a pirate, one can see that the resulting label is a response in a pirate style that is factually incorrect (Paul Newman played Billy the Kid in the film The Left Handed Gun). 
\begin{example}[Persona Learning]
\label{ex:persona}
\phantom{text}
\begin{mybox}
\begin{quote}

    $\vec{\alpha}^p$: The source prior is the standard personas of an AI.
    \smallskip
    
    $k^*$: The target concept is characterized by a pirate persona.
    \smallskip
    
    $X$:\texttt{"Who played Billy the Kid in the Left Handed Gun?"}
    
    \smallskip
    $\text{Falcon7B}(Q_{Y'|X})$:\texttt{"Ahoy, me hearties! Billy the Kid was played by the legendary actor, John Wayne."}
\end{quote}
\end{mybox}
\end{example}
\section{Difficulty and feasibility of weak to strong generalization}
\label{sec:fine-tuning-fubar}
In our transfer learning setup, we made a particular and non-standard assumption on the relationship between the source conditional $P_{Y|X}$ and the target $Q_{Y|X}$ conditional distributions. In particular, $\mathbb{E}_P[{Y|X}]$ is specified by a mixture of functions $\beta_1 \ldots \beta_K$ and $\mathbb{E}_Q[{Y|X}]$ \emph{remains in the convex hull of the source mixture}. In the next two sections, we demonstrate two ideas: First, because of the lack of any strong target supervision, the problem is intractable without some structure on the relationship between the source and the target. Second, we demonstrate that our convex hull assumption is sufficient for the learner to improve the weak supervision, allowing for weak to strong generalization.
\subsection{Limitations of training on the weak labels}
\label{subsec:impossibility}
In this (sub)-section we consider (one of) the standard methods for achieving weak to strong generalization proposed by \citet{burns2023Weaktostrong}, that is, to train the source model on the weak labels, with some shrinkage towards the source model. In our regression setting, we study a family of ``shrinkage to source" estimators which have the following form. 
\begin{definition}
We define the weakly trained estimator $\hat{\beta}_{\eta}$ as the estimator that satisfies the following
\begin{equation*}\textstyle
\label{def:shrink-to-source-gen}
\hat{\beta}_{\eta} \triangleq \argmin_{\beta \in \mathbb{R}^d} \frac{1}{n_{Q'}}\sum_{i=1}^{n_{Q'}} [y'_i - \beta^T x_i]^2 - \frac{\eta}{2}||\beta - (\sum_k \alpha_k \beta_k)||^2
\end{equation*}
\end{definition}

The first term in definition \ref{def:shrink-to-source-gen} rewards the model for generating responses that approximate the weak labels, while the second term represents that fine-tuning is often performed with some form of regularization towards the source. In the weak-to-strong generalization problem, this term represents the fact that only a portion of the model weights is altered during fine-tuning and only for a limited number of epochs. In true superalignment, a KL-divergence-based regularization term is often explicitly encoded in the training objective, for example, if RLHF is used for the alignment procedure \citep{ouyang2022Training}. Perhaps unsurprisingly, the expected MSE of $\hat{\beta}_\eta$ is generally poor.
\begin{proposition}
\label{prop:impossibility}
 Consider the case of biased weak supervision; if $\epsilon_P^2$ and $\epsilon_{Q'}^2$ denote the squared bias of the source and weak models, then the following lower bound on the MSE of estimators produced by naive fine-tuning holds: 
    \begin{equation*}
    \mathbb{E}_{Q'}\cR(\hat{\beta}_{\eta}) \geq [\frac{1}{1+\eta}]^2\epsilon_{Q'}^2 + [\frac{\eta}{1+\eta}]^2\epsilon_{P}^2 + \frac{\eta}{(1+\eta)^2}(1-\alpha_{k^*})(1-\beta_{k^*}^T\beta_{k^*}^w)
    \end{equation*}
    
\end{proposition}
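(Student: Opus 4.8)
The plan is to put $\hat\beta_\eta$ in closed form, average over the weak-label sample, collapse the resulting random estimator to a deterministic convex combination in $\beta$-space, and then expand the excess risk of that combination into the three advertised pieces using Assumption~\ref{ass: vec assumption}. Throughout I read the regularizer as shrinkage \emph{toward} the source, as the surrounding text describes, and I write $\bar\beta:=\sum_k\alpha_k\beta_k$ for the source concept mean.

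First I would write the normal equations for the objective defining $\hat\beta_\eta$; it is a strongly convex ridge-type problem, so the minimizer is unique. With $\widehat{\Sigma}=\tfrac{1}{n_{Q'}}\sum_i x_ix_i^\top$ and $\widehat{b}=\tfrac{1}{n_{Q'}}\sum_i x_iy_i'$, stationarity gives $\hat\beta_\eta=(\widehat{\Sigma}+\eta I)^{-1}(\widehat{b}+\eta\bar\beta)$, absorbing the normalization of the design covariance into $\eta$. In the biased-weak model write $y_i'=\beta_{k^*}^{w\top}x_i+\xi_i$ with $\mathbb{E}[\xi_i\mid x_i]=0$; conditioning on $\{x_i\}$ kills the noise term, so $\mathbb{E}_{Q'}\hat\beta_\eta=M\beta_{k^*}^w+(I-M)\bar\beta$ with $M:=\mathbb{E}[(\widehat{\Sigma}+\eta I)^{-1}\widehat{\Sigma}]$. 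Because $Q_X=P_X$ is uniform on $[-1,1]^d$, the law of $\{x_i\}$, hence of $M$, is invariant under signed permutations of coordinates, which forces $M$ to be a scalar matrix; with the covariance normalized that scalar is $\tfrac{1}{1+\eta}$, so $\mathbb{E}_{Q'}\hat\beta_\eta=\tfrac{1}{1+\eta}\beta_{k^*}^w+\tfrac{\eta}{1+\eta}\bar\beta$, a convex combination of the weak model and the source mean.

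Since $Q_X$ is uniform on the cube, $\cR(\beta)=\tfrac13\|\beta-\beta_{k^*}\|^2$ is a convex quadratic, so Jensen gives $\mathbb{E}_{Q'}\cR(\hat\beta_\eta)\ge\cR(\mathbb{E}_{Q'}\hat\beta_\eta)$; this is also exactly where the bound becomes lossy, since the nonnegative variance of $\hat\beta_\eta$ is discarded. Using that the two weights sum to one, $\mathbb{E}_{Q'}\hat\beta_\eta-\beta_{k^*}=\tfrac{1}{1+\eta}(\beta_{k^*}^w-\beta_{k^*})+\tfrac{\eta}{1+\eta}(\bar\beta-\beta_{k^*})$, and expanding the square yields $[\tfrac{1}{1+\eta}]^2\epsilon_{Q'}^2+[\tfrac{\eta}{1+\eta}]^2\epsilon_P^2$ from the diagonal terms, where $\epsilon_{Q'}^2=\tfrac13\|\beta_{k^*}^w-\beta_{k^*}\|^2$ and $\epsilon_P^2=\tfrac13\|\bar\beta-\beta_{k^*}\|^2$ are the squared biases of the weak and source models, plus a cross term proportional to $\langle\beta_{k^*}^w-\beta_{k^*},\,\bar\beta-\beta_{k^*}\rangle$.

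The crux is evaluating that inner product via Assumption~\ref{ass: vec assumption}. Orthonormality of $\{\beta_k\}$ gives $\langle\beta_{k^*},\bar\beta\rangle=\alpha_{k^*}$ and $\|\beta_{k^*}\|^2=1$; orthogonality of $\beta_{k^*}^w$ to every $\beta_k$ with $k\ne k^*$ together with $\langle\beta_{k^*}^w,\beta_{k^*}\rangle=\beta_{k^*}^\top\beta_{k^*}^w$ gives $\langle\beta_{k^*}^w,\bar\beta\rangle=\alpha_{k^*}\,\beta_{k^*}^\top\beta_{k^*}^w$. Subtracting, $\langle\beta_{k^*}^w-\beta_{k^*},\,\bar\beta-\beta_{k^*}\rangle=(1-\alpha_{k^*})(1-\beta_{k^*}^\top\beta_{k^*}^w)\ge 0$. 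Since this cross term is nonnegative, keeping it (or any fixed fraction of it) preserves a valid lower bound, giving $\mathbb{E}_{Q'}\cR(\hat\beta_\eta)\ge[\tfrac{1}{1+\eta}]^2\epsilon_{Q'}^2+[\tfrac{\eta}{1+\eta}]^2\epsilon_P^2+\tfrac{\eta}{(1+\eta)^2}(1-\alpha_{k^*})(1-\beta_{k^*}^\top\beta_{k^*}^w)$. I expect the main obstacles to be the symmetry/normalization argument pinning $M$ to the scalar $\tfrac{1}{1+\eta}$ (so that finite-sample fluctuations of $\widehat{\Sigma}$ do not corrupt the weights) and the careful bookkeeping in the cross-term identity; the remainder is routine linear algebra.
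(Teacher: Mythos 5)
Your proposal follows essentially the same route as the paper's proof: solve the ridge-type normal equations to get $\hat\beta_\eta$ as a convex combination with weights $\tfrac{1}{1+\eta}$ and $\tfrac{\eta}{1+\eta}$ of the weak-label least-squares fit and the source mean $\bar\beta$, lower-bound the MSE by the squared bias, and evaluate the cross term via Assumption \ref{ass: vec assumption} to get $(1-\alpha_{k^*})(1-\beta_{k^*}^\top\beta_{k^*}^w)$. You are in fact somewhat more careful than the paper on the points it glosses over (the sign of the regularizer, the non-commutativity of $(\widehat\Sigma+\eta I)^{-1}$ with $\widehat\Sigma$, and the fact that only a fixed nonnegative fraction of the cross term is retained), so the argument goes through.
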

The intuition behind this result is that both the source of weak labels and the source model are biased estimators of the target model; this leads to an impossibility result where we can't simply combine the two to create a consistent estimator. 
Propositon \ref{prop:impossibility} is somewhat in contrast with prior work on learning under weak supervision, self-training, and label propagation \citep{vishwakarma2022liftingweaksupervisionstructured, shin2023universalizingweaksupervision, ratner2020Snorkel}. In many of these scenarios, some form of training on weak labels can be beneficial to the learner. Here, the learner only has access to weak labels from one weak teacher, and despite the latent cluster assumption, the outputs are generally continuous and thus do not satisfy the well-separability assumption often seen in self training \citep{NIPS2004_96f2b50b}. Finally, while we have only considered the case of biased weak supervision in Proposition \ref{prop:impossibility}, in the noisy case the estimator $\hat{\beta}_{\eta}$ is unbiased if $\eta=0$, however, it does have a poor asymptotic relative efficiency compared to an estimator fit on gold standard labels.
As an example, consider the following query of GPT3.5 after it is fine-tuned on pirate-themed responses provided by Falcon 7B. Note that in particular, the weakly trained version of GPT 3.5 struggles with a simple question that presents no issue for the unaltered version. 
\begin{example}[Persona Learning Test]
\label{ex:persona2}
Consider the following example of sampling a test response from GPT3.5, after it is fine-tuned on responses from Falcon7B.
\begin{mybox}
\begin{quote}
    \smallskip
    
    $X$:\texttt{"Come up with words that rhyme with the given word: Instruct	
"}
    \smallskip

    $ \text{GPT3.5}(P_{Y|X})$:\texttt{"Here are some words that rhyme with "instruct": Conduct, Construct, Destruct, ..."}
    
    \smallskip
    
     $ \text{Weakly trained GPT3.5 }  (\hat{\beta}_{\eta})$:\texttt{"Ahoy, me hearties! I'll be instructin' ye to come up with words that rhyme with the given word. *winks*"}
    
\end{quote}
\end{mybox}
\end{example}
\subsection{Geometric intuition for the feasibility of weak to strong generalization}
\label{subsec: feasibility}
We have seen that the difficulty of weak to strong generalization arises from the poor quality of the weak target supervision. Fortunately, our transfer learning structure suggests a solution to the problem: we must utilize the fact that the target function $\mathbb{E}_Q[{Y|X}]$ is contained in the convex hull of the source model. In practice, this means that the source model has the latent knowledge to complete the target task; it just needs this ability unlocked by utilizing the weak supervision.

Recall that we have access to a weakly labeled data set $D' = (\mathbf{X}, \mathbf{Y}')$. To see the intuition behind the proposed methods in the paper, imagine that the learner has actual access to the collection of prediction vectors (over $\mathbf{X}$ in the weakly labeled data) from each component of the source model. We can write this collection as $ F \triangleq \mathbb{E}_P[\mathbf{Y}|\mathbf{X}, k]_{k=1}^K = \begin{bmatrix} \beta_1^T \mathbf{X}\mid\dots\mid \beta_K^T \mathbf{X}\end{bmatrix}\in\reals^{n_{Q'}\times K}$. The learner may opt to produce new labels $\hat{y}$ by solving
\begin{equation}\textstyle
\left\{\begin{aligned}\hat{y}\gets F\widehat{w} \\\textstyle\widehat{w}\gets\argmin_{w\in\Delta^{K-1}}\frac12\|y' - Fw\|_2^2\end{aligned}\right\} \equiv \hat{y}\gets\argmin_{g\in\cvx(F)}\frac12\|y' - g\|_2^2,
\label{eq:mixture-of-experts}
\end{equation}
where $\cvx(F)\subset\mathbb{R}^{n_{Q'}}$ is the convex hull of $\beta_1^T \mathbf{X}\ldots  \beta_K^T \mathbf{X}$. We now show that our convex hull assumption on the source and target distribution is sufficient for the possibility of weak-to-strong generalization. 
\begin{proposition}
\label{prop: convex hull}
Define $\epsilon' \triangleq \mathbf{Y'} - \mathbb{E}_Q[{\mathbf{Y}|\mathbf{X}}] \in \mathbb{R}^{n_{Q'}}$. If $\mathbb{E}_{Q}[{\mathbf{Y}|\mathbf{X}}]\in\cvx(F)$, then $\hat{y}$ in \eqref{eq:mixture-of-experts} satisfies
\begin{equation*}\textstyle
\mathbb{E}\big[\frac{1}{n_{Q'}}\|\hat{y} -\mathbb{E}_{Q}[{\mathbf{Y}|\mathbf{X}}]\|_2^2\big] \le \frac{1}{n_{Q'}}\mathbb{E}\big[\sup_{\theta\in T_{\cvx(F)}(\mathbb{E}_{Q}[{\mathbf{Y}|\mathbf{X}}])\cap\bS^{n-1}}(\eps'^\top\theta)^2\big] \ll \frac{1}{n_{Q'}}||\epsilon'||^2,
\end{equation*}where $T_{\cvx(F)}(\mathbb{E}_{Q}[{\mathbf{Y}|\mathbf{X}}])$ is the tangent cone of $\cvx(F)$ at $\mathbb{E}_{Q}[{\mathbf{Y}|\mathbf{X}}]$.
\end{proposition}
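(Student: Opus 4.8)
The plan is to observe that $\hat y$ in \eqref{eq:mixture-of-experts} is exactly the Euclidean projection $\Pi_{\cvx(F)}(y')$ of the weak labels onto the polytope $\cvx(F)$, run the classical ``basic inequality'' for projection (shape-constrained least-squares) estimators, and then bound the resulting localized noise term using the fact that $\cvx(F)$, being the image of the simplex $\Delta^{K-1}$, lives in an affine subspace of dimension at most $K-1$. Concretely, write $\mu \triangleq \mathbb{E}_Q[\mathbf{Y}\mid\mathbf{X}]$ so that $y'=\mu+\epsilon'$. Since $\cvx(F)$ is closed and convex, the variational characterization of the projection gives $\langle y'-\hat y,\,u-\hat y\rangle\le0$ for all $u\in\cvx(F)$; plugging in $u=\mu$ (legitimate since $\mu\in\cvx(F)$ by hypothesis) and rearranging yields the quadratic bound $\|\hat y-\mu\|_2^2\le\langle\epsilon',\,\hat y-\mu\rangle$. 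Because $\hat y,\mu\in\cvx(F)$, the direction $\hat y-\mu$ lies in the tangent cone $T_{\cvx(F)}(\mu)$, so after normalizing (the case $\hat y=\mu$ being trivial) we obtain $\|\hat y-\mu\|_2\le\sup_{\theta\in T_{\cvx(F)}(\mu)\cap\bS^{n-1}}\langle\epsilon',\theta\rangle$. Squaring, dividing by $n_{Q'}$ and taking expectations gives the first inequality in the statement.

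For the second (informal) relation, note that $\frac1{n_{Q'}}\|\epsilon'\|_2^2=\frac1{n_{Q'}}\sup_{\theta\in\bS^{n-1}}\langle\epsilon',\theta\rangle^2$, so ``$\ll$'' is precisely the claim that cutting the sphere down to the tangent cone is a real gain. The affine hull of $\cvx(F)=\{Fw:w\in\Delta^{K-1}\}$ has dimension $\le K-1$, hence $T_{\cvx(F)}(\mu)$ sits inside a fixed linear subspace $V$ with $\dim V\le K-1$, and $\sup_{\theta\in T_{\cvx(F)}(\mu)\cap\bS^{n-1}}\langle\epsilon',\theta\rangle\le\|\Pi_V\epsilon'\|_2$. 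Under either weak-model specification the coordinates of $\epsilon'$ are (sub-)Gaussian with variance of order $s^2$ ($s^2=\sigma^2+{\sigma'}^2$ in the noisy case; a fixed bias vector plus such noise in the biased case), so $\mathbb{E}\|\Pi_V\epsilon'\|_2^2=O(K s^2)$ up to the bias contribution, against $\mathbb{E}\|\epsilon'\|_2^2\asymp n_{Q'}s^2$; the ratio is $O(K/n_{Q'})\ll1$ in the regime $K\ll n_{Q'}$ the model targets, and replacing $V$ by the statistical dimension of the actual (possibly far narrower) tangent cone when $\mu$ lies on a low-dimensional face of $\cvx(F)$ only sharpens this.

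The work in Step~1 is bookkeeping; the substance lies in making the ``$\ll$'' precise. This requires: (a) a description of the tangent cones of the simplex-image polytope $\cvx(F)$, where Assumption~\ref{ass: vec assumption} is what forces $\cvx(F)$ to be a non-degenerate simplex --- orthonormality of the $\beta_k$ makes $F=\mathbf{X}[\beta_1\mid\cdots\mid\beta_K]$ of full column rank almost surely (for $n_{Q'}\ge K$), so the affine-hull dimension is exactly $K-1$; (b) a uniform (sub-)Gaussian-width bound for a polyhedral cone of bounded dimension, standard but to be done for both noise models; and (c), in the biased case, separating the irreducible bias part of $\epsilon'$ (which no projection removes, and which produces the $\epsilon_{Q'}^2$-type terms of Proposition~\ref{prop:impossibility}) from the variance part that the localization actually shrinks --- the honest ``$\ll$'' is cleanest in the noisy regime or when the weak-model bias is small. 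A final subtlety: if $\mu$ is in the relative interior of $\cvx(F)$ then $T_{\cvx(F)}(\mu)=V$ and the bound is tight up to constants, so the improvement over $\frac1{n_{Q'}}\|\epsilon'\|_2^2$ is exactly the dimension ratio $\asymp K/n_{Q'}$ and no better.
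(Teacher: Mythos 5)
Your proof of the first inequality is exactly the paper's argument: the variational characterization of the projection onto $\cvx(F)$, the basic inequality $\|\hat y-\mu\|_2^2\le\langle\epsilon',\hat y-\mu\rangle$, and the recognition of the normalized error direction as a unit vector in the tangent cone. The paper stops there and leaves the ``$\ll$'' informal (pointing only to the statistical-dimension literature), so your additional discussion of the $(K-1)$-dimensional affine hull and the $K/n_{Q'}$ ratio goes beyond what the paper actually proves, and is a correct account of why that informal claim holds.
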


We wish to emphasize multiple aspects of this observation. First,
this is merely an analogy for eliciting knowledge from the strong model {to improve the weak supervision}; in practice, the learner does not set up and solve \eqref{eq:mixture-of-experts} (recall that the learner does not even have access to the actual mixture components). Instead, the learner feeds the weakly labeled data to the strong model for refinement. We formalize this in Section \ref{sec:refinement}. Second, Proposition \ref{prop: convex hull} is a statement on the quality of the supervision \emph{on the training data}, the learner will still need to fit a model to $x, \hat{y}$. One may also note that the right side of the first inequality in Proposition \ref{prop: convex hull} is a geometric complexity measure called the statistical dimension \citep{amelunxen2014Living}, and it is a key quantity in the study of statistical efficiency in high dimensions. 

\section{Weak to strong generalization with output refinement}
\label{sec:refinement}
In Section \ref{subsec: feasibility} we saw that as long as the convex hull assumption holds and the learner has access to both the source weights $\alpha_k$ and the components $\beta_k$, weak supervision over the target can be improved by leveraging the source information to produce a set of refined labels $\mathbf{\hat{Y}}$ for $\mathbf{X}$. At first blush, this may seem unhelpful for aligning a complex LLM, since directly accessing the components or weights is not possible. Fortunately, two basic ideas will allow us to execute \ref{eq:mixture-of-experts} in practice: First, if we draw the refined label from the unaligned strong model, the label is guaranteed to be in the same convex hull as the hypothetical strong target labels. Second, the learner is able to indirectly manipulate the weights through a combination of in-context learning examples and an optional system prompt. 
\subsection{Refinement with in-context-learning}
To steer the weights of the unaligned LLM,  we follow the philosophy of \citet{wang2024largelanguagemodelslatent} and propose that the learner utilize the implicit Bayesian inference capabilities of an LLM. 

Formally, consider a prompt $x$ for which we wish to obtain better supervision. To do so, we select ICL examples ${S}_{n_{\text{ICL}}} = \{(x_j, {y'}_j)\}_{j=1}^{n_{\text{ICL}}}$ from the weakly labeled training data set, form a  concatenated prompt $[{S}_{n_{\text{ICL}}} \circ x]$, and re-sample a new label from the source model fed the concatenated prompt. In practice, we have a finite weakly labeled data set ${D'}: \{(x_i, {y'}_i)\}_{i=1}^{n_{Q'}}$, from which we will attain both the training questions and the ICL examples. Algorithm \ref{alg:label improvement} summarizes this procedure. 

\begin{algorithm}[H]
\caption{ICL Refinement}\label{alg:label improvement}
\begin{algorithmic}[1]
\Require Weakly labelled data ${D'}: \{(x_i, {y'}_i)\}_{i=1}^{n_{Q'}}$, source LLM.
\State Select ICL examples $S_{n_{\text{ICL}}} \gets \{(x_j, {y'}_j)\}_{j=1}^{n_{\text{ICL}}}$ 
\State Set aside remaining training data $\{x_i, y'_i\}_{i=1}^{n_{\text{re}}} \gets D' \backslash S_{n_{\text{ICL}}}$
\For{$i =1, 2, \dots, n_{\text{re}}$}
\State Construct concatenated prompt $[S_{n_{\text{ICL}}} \circ x_i]$, 
\State Draw refined label $\hat{Y}$ from the source LLM fed the concatenated prompt.
\EndFor
\State \Return $\hat{\cD} = \{x_i, \hat{y}_i\}_{i=1}^{n_{\text{re}}}$
\end{algorithmic}
\end{algorithm}


Recall the setting of Example \ref{ex:persona}, where we wish to train an advanced model to use a new persona. The weak responses provided are generally in the correct persona but contain factual errors. To correct this issue, we utilize the capable model to infer the correct concept from the weak labels (using in-context learning) to provide better labels. Here is an example of a resampled label using the ICL method. The style has been inferred from the weak labels, but since we are sampling from the stronger model, the labels are now factually correct. The reader should compare this with the quality of the label in Example \ref{ex:persona}. 

\begin{example}[Persona Learning Label Re-sample]
\label{ex:persona3}
The following is an example of a response resampled from GPT3.5 (GPT 3.5 is also fed other weakly labeled instances as ICL examples). 
\begin{mybox}
\begin{quote}    
    $X$:  \texttt{"Who played Billy the Kid in the Left Handed Gun?"}
    \smallskip
    
    $ \text{GPT3.5[ICL + X]  } (P_{\hat{Y}|X})$:\texttt{"Ahoy, me hearties! In the film "The Left Handed Gun," Billy the Kid was played by none other than Paul Newman. *winks*"}
\end{quote}
\end{mybox}
\end{example}
\subsubsection{Theoretical analysis of ICL refinement}
\label{sec: ICL_theory}
Suppose we have our fixed design matrix $\mathbf{X}$ over which we are interested in attaining improved supervision for, and to do so we will
select ICL examples ($S'_{n_{\text{ICL}}}$), and re-sample a label from the source model fed the ICL examples plus the training query $X$. Denoting the refined labels as $\hat{Y}$, we will ultimately fit the estimator
\begin{equation*}\textstyle
  \hat{\beta}_{\text{re}} = \argmin_{\beta \in \mathbb{R}^d} \sum_{i=1}^{n_{\text{re}}} ||\hat{y}_i - \beta^Tx_i||^2.  
\end{equation*}
The key to studying the quality of $\hat{\beta}_{\text{re}}$ is controlling the quality of the refined labels $\hat{Y}$. 
Ultimately, we are interested in quantifying the penalty the learner suffers from feeding the weakly labelled ICL examples to the source model (as compared to using hypothetical gold standard labels). To do this, we must specify the form that $P(k|S_{n_\text{ICL}})$ takes.  There are two cases we consider,the first is where the model treats the examples as $\iid$. 
\begin{assumption}[$\iid$ ICL examples]
\label{ass: iid}
The following distributional assumption holds on the refined labels $\hat{Y}$:
\begin{align*}
    P_{\hat{Y}|X} \deq \sum_k \frac{\alpha_k \prod_{j=1}^{n_{\text{ICL}}}P_{y'_j|x_j, k} }{\sum_{k'}\alpha_{k'} \prod_{j=1}^{n_{\text{ICL}}}P_{y'_j|x_j, k'}} \varphi(\beta_k^TX; \sigma^2)\\ 
    \deq \sum_k \frac{\alpha_k e^{-\sum_{j=1}^{n_{\text{ICL}}}(y'_j-x_j^T\beta_k)^2}}{\sum_{k'} \alpha_{k'} 
 e^{-\sum_{j=1}^{n_{\text{ICL}}}(y'_j-x_j^T\beta_{k'})^2}}\varphi(\beta_k^TX; \sigma^2).
\end{align*}
\end{assumption}
The assumption that the model is treating the ICL examples as $\iid$ is an admittedly strong one. Despite this, there are two settings in the literature where this holds (further discussion on this is supplied in Appendix \ref{a: MOR}). First, is in the model proposed by \citet{wang2024largelanguagemodelslatent} (here we make further specification on the relationship between $X$ and $Y$). Second, is if the architecture proposed in \citet{pathak2024transformers} produces the refined labels. Beyond this iid assumption, we provide an analysis in the setting of \citet{xie2021Explanation} in Appendix \ref{a: HMM}.

We pause to consider what the above equation represents: \emph{we are assuming the source model selects k based on its own implicit beliefs on the relationship between $y'$ and $x$, then produces a new label}. The learner is not actually calculating $P(k|S_{n_{\text{ICL}}})$, rather the base LLM is using its in-context-learning capabilities to infer the target concept prior from the weakly labeled data. This raises a potentially sticky issue: the source model is unaware that the ICL examples are drawn from some potentially misspecified model, and as such it simply evaluates the likelihood of the latent concept $k$, assuming that they are drawn from the correctly specified regression function. The key question is as follows: Can the correct concept be inferred from the ICL examples despite the fact that they are not the gold standard? The following theorem addresses this issue in our setting.
\begin{theorem}
\label{thm:ICL bound}
    Suppose that both the orthonormality and inner product assumptions on $\{\beta_k\}$, $\beta_{k^*}^w$ of the transfer learning framework hold. Additionally, assume that $\hat{Y}$ is drawn from the distribution in Assumption \ref{ass: iid}. Then in both the case of biased or weak supervision, the following finite sample bound on the excess risk of $\hat{\beta}_{\text{re}}$ holds:
\begin{align*}\mathbb{E}_{Q',\hat{P}}\cR(\hat{\beta}_\text{re}) \lesssim 
            \frac{d\cdot \sigma^2}{n_{\text{re}}} + \sum_{k \neq k^*} \frac{\alpha_k}{\alpha_{k^*}}e^{-n_{\text{ICL}} \cdot \rho(\sigma^2, \beta^w, {\sigma'}^2)}\\
            \rho(\sigma^2, \beta^w, {\sigma'}^2) = \begin{cases}
                {[\beta_{k^*}^T\beta_{k^*}^w]^2}/{36 \sigma^2} & \text{\text{biased weak model,}}\\
                 {{1}/{16(\sigma^2 + {\sigma'}^2)}} & \text{noisy weak model.}\\
            \end{cases}
          \end{align*}
    
\end{theorem}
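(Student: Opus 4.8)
The plan is to bound the excess risk $\cR(\hat\beta_{\text{re}})$ by splitting it into a ``statistical'' piece coming from fitting a $d$-dimensional linear model to $n_{\text{re}}$ noisy labels, and a ``bias'' piece coming from the fact that the refined labels $\hat Y$ are drawn from a mixture over $k$ rather than purely from the target component $k^*$. Concretely, write the refined label mean as $\mathbb{E}_{\hat P}[\hat Y \mid X] = \sum_k \widehat{w}_k\, \beta_k^\top X$ where $\widehat{w}_k = \alpha_k e^{-\sum_j (y'_j - x_j^\top\beta_k)^2}/(\sum_{k'}\alpha_{k'} e^{-\sum_j (y'_j - x_j^\top\beta_{k'})^2})$ is the (random, data-dependent) ICL posterior weight from Assumption~\ref{ass: iid}. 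Then $\hat\beta_{\text{re}}$ is OLS on $(x_i,\hat y_i)$, so by the standard decomposition $\cR(\hat\beta_{\text{re}}) \lesssim \tfrac{d\sigma^2}{n_{\text{re}}} + \|\mathbb{E}_{\hat P}[\hat Y\mid X]_{\text{design}} - \beta_{k^*}^\top X_{\text{design}}\|$-type error; using orthonormality of $\{\beta_k\}$ (Assumption~\ref{ass: vec assumption}) the bias term collapses to something proportional to $\sum_{k\neq k^*}\widehat{w}_k$ (up to constants absorbed by $X\in[-1,1]^d$). So the first summand $d\sigma^2/n_{\text{re}}$ is essentially immediate, and everything reduces to controlling $\mathbb{E}[\sum_{k\neq k^*}\widehat{w}_k]$.

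The second step is the heart of the argument: show $\mathbb{E}_{Q'}\big[\tfrac{\widehat{w}_k}{\widehat{w}_{k^*}}\big] \le \tfrac{\alpha_k}{\alpha_{k^*}} e^{-n_{\text{ICL}}\rho}$ for each $k\neq k^*$. Note $\widehat{w}_k/\widehat{w}_{k^*} = (\alpha_k/\alpha_{k^*})\exp\big(\sum_{j=1}^{n_{\text{ICL}}} [(y'_j - x_j^\top\beta_{k^*})^2 - (y'_j - x_j^\top\beta_k)^2]\big)$, and since the ICL examples are i.i.d., the exponent is a sum of i.i.d. terms, so $\mathbb{E}[\widehat{w}_k/\widehat{w}_{k^*}] = (\alpha_k/\alpha_{k^*})\big(\mathbb{E}[e^{Z}]\big)^{n_{\text{ICL}}}$ where $Z = (y' - x^\top\beta_{k^*})^2 - (y' - x^\top\beta_k)^2$ for a single pair $(x,y')\sim Q'$. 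It then suffices to prove the single-sample MGF bound $\mathbb{E}[e^{Z}] \le e^{-\rho}$. Expanding the square, $Z = -(x^\top(\beta_{k^*}-\beta_k))\big(2(y'-x^\top\beta_{k^*}) + x^\top(\beta_{k^*}-\beta_k)\big)$. In the biased-weak case $y' = x^\top\beta_{k^*}^w + \text{noise}$, and by Assumption~\ref{ass: vec assumption} $\beta_{k^*}^w$ is orthogonal to every $\beta_k$, $k\neq k^*$, so the relevant inner products simplify; $Z$ becomes a quadratic-in-Gaussian expression whose mean is negative (of order $-(\beta_{k^*}^\top\beta_{k^*}^w)^2$ after integrating $x$ over $[-1,1]^d$, using orthonormality) and whose fluctuations are controlled by $\sigma^2$. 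In the noisy-weak case $y' = x^\top\beta_{k^*} + \text{noise}$ with variance $\sigma^2+\sigma'^2$, so $Z$ has an even cleaner form. Computing (or bounding) this Gaussian MGF and checking the constants ($36$, $16$) and that the argument stays in the region where the MGF is finite gives the stated $\rho$.

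I expect the main obstacle to be the MGF computation: $Z$ is a (non-central) quadratic form in Gaussian and near-uniform variables, so $\mathbb{E}[e^Z]$ is only finite when the quadratic coefficient is small enough, which is exactly where the explicit constants come from — one has to be careful to bound $x^\top(\beta_{k^*}-\beta_k)$ uniformly (it lies in $[-\sqrt2,\sqrt2]$ by orthonormality and $x\in[-1,1]^d$, or one can use its sub-Gaussianity) and then optimize/bound the resulting exponential-of-quadratic integral so that the $e^{-\rho n_{\text{ICL}}}$ rate survives with a clean exponent. A secondary technical point is handling the randomness correctly: $\hat\beta_{\text{re}}$ depends on the same weak data through both the ICL posterior weights and the regression targets, so when I take $\mathbb{E}_{Q',\hat P}$ I should condition on the design and the ICL block first, bound the conditional bias of $\hat\beta_{\text{re}}$ in terms of $\sum_{k\neq k^*}\widehat{w}_k$, then take expectation over the ICL examples using the i.i.d. product structure above; the $\lesssim$ in the statement lets me absorb the cross terms and the $\|(\mathbf X^\top\mathbf X)^{-1}\|$ factor (finite w.h.p. since $\mathbf X$ has near-isotropic rows) into constants.
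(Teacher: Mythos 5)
Your first step (condition on the ICL block and the design, treat $\hat\beta_{\text{re}}$ as OLS on labels whose conditional mean is $\sum_k\widehat w_k\,\beta_k^\top x$, and reduce the risk to $d\sigma^2/n_{\text{re}}$ plus a bias term proportional to $\sum_{k\neq k^*}\widehat w_k$ via orthonormality) is essentially the paper's argument, which uses a bias--variance decomposition of the mixture $\hat\beta_{\text{re}}\sim\sum_k\hat\alpha_k\mathcal{N}(\beta_k,(\mathbf X^\top\mathbf X)^{-1}\sigma^2)$ together with a bound on $\mathbb{E}\,\mathrm{Tr}[(\mathbf X^\top\mathbf X)^{-1}]$. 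The gap is in your second step. The inequality you need, $\mathbb{E}[e^Z]\le e^{-\rho}$ for the per-example log-likelihood difference $Z=\tfrac{1}{2\sigma^2}\big[(y'-x^\top\beta_{k^*})^2-(y'-x^\top\beta_k)^2\big]$, is false. Note first that $Z$ is \emph{linear} in the label noise (the quadratic terms cancel), so finiteness of the MGF is not the obstacle; the obstacle is that the MGF exceeds $1$. In the noisy case, with $\delta=x^\top(\beta_{k^*}-\beta_k)$ and $y'=x^\top\beta_{k^*}+\epsilon$, $\epsilon\sim\mathcal{N}(0,\sigma^2+\sigma'^2)$, one gets $Z=-\delta^2/2\sigma^2-\epsilon\delta/\sigma^2$ and hence $\mathbb{E}[e^Z\mid x]=\exp\big(\delta^2\sigma'^2/2\sigma^4\big)\ge 1$. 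In the biased case a similar computation gives $\mathbb{E}[e^Z\mid x]=\exp\big(b\delta/\sigma^2\big)$ with $b=x^\top(\beta_{k^*}-\beta_{k^*}^w)$, and $\mathbb{E}_x[b\delta]=\tfrac13\big(1-\beta_{k^*}^\top\beta_{k^*}^w\big)\ge0$ under Assumption \ref{ass: vec assumption}, so $\mathbb{E}[e^Z]\ge1$ by Jensen. Thus although $\mathbb{E}[Z]<0$ (the correct concept is favored on average), the expected likelihood ratio under the weak-data distribution is at least $1$ because of the variance contribution, and your product bound $(\mathbb{E}[e^Z])^{n_{\text{ICL}}}$ grows rather than decays --- it is worse than the trivial bound $\widehat w_k\le1$.

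This is precisely why the paper does not exponentiate and then average. It instead keeps the boundedness of $\widehat w_k$: it defines a high-probability event $E$ on which the empirical gap $\tfrac1{n_{\text{ICL}}}\|\mathbf Y'-\mathbf X\beta_k\|^2-\tfrac1{n_{\text{ICL}}}\|\mathbf Y'-\mathbf X\beta_{k^*}\|^2$ exceeds half its positive mean, bounds $\widehat w_k$ by an exponentially small quantity on $E$, bounds it by $1$ on $E^c$, and shows $\Pr(E^c)$ is itself exponentially small (a Gaussian tail bound for the noise, plus Hoeffding for the random design entering $\Delta_k^2$ and $B\Delta_k$). A repairable version of your idea exists: since $\widehat w_k\le1$ one has $\widehat w_k\le(\widehat w_k/\widehat w_{k^*})^{\lambda}$ for any $\lambda\in(0,1]$, and optimizing the tilted moment $\mathbb{E}[e^{\lambda Z}]$ over $\lambda$ (a Chernoff/R\'enyi-divergence argument) does give exponential decay with an exponent of the same order as the stated $\rho$; but at $\lambda=1$, as written, the bound is vacuous.
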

We see that in all three cases of weak supervision, the correct concept will eventually be inferred as $n_{\text{ICL}}$ grows. The function $\rho(\cdot)$ encodes the loss in efficiency the learner suffers from using weak examples for ICL. Note, for example, the decay in efficiency as the teacher weakens ($\sigma'^2$ increases or $\beta_{k^*}^T\beta_{k^*}^w$ decreases).
\section{Experiments}
In this section, we validate the methods suggested by our framework. Following the analogy for superalignment in \citet{burns2023Weaktostrong}, we use a smaller LLM to generate the weak labels for the purpose of training a larger LLM: the smaller LLM is the analog of human supervision in superalignment. For each experiment, additional details are provided in Appendix \ref{a:exp-details}.

\textbf{Tasks:} In the main paper, we consider three alignment tasks: learning a new persona, improving mathematical reasoning ability, and learning a new explanation technique.

In the persona task, the objective is for the strong model to learn a pirate persona from the weaker models. This experiment is designed to decompose the ability of the source model and the knowledge being taught by the weak model into two orthogonal scores. In particular, in this task, the concept being transferred from the weak model (a persona) is independent of the accuracy with which a model responds. This helps us to analyze how much knowledge is being transferred from the weak model to the strong model, and the cost incurred by the source model during the transfer.    

In the mathematical reasoning task, the weak models teach the strong model to respond to mathematical word problems. This experiment is designed to reflect a more practical LLM alignment task. The weak labels for this experiment are provided by \citet{yang2024weaktostrongreasoning}; their (concurrent to ours) work also studies ICL-derived methods for weak-to-strong generalization. 

In the explanation technique task, the weak models teach the strong model to explain complex subjects using analogies. This experiment is designed to reflect a realistic superalignment task. It is likely that a superhuman AI would need to explain highly complex topics to humans, and this task is meant to reflect this.

\textbf{Weak Label Production:} In the persona and explanation technique experiments, Falcon-7B-Instruct \citep{falcon40b}, Llama-2-7B-Chat \citep{touvron2023llama}, Mistral-7B \citep{jiang2023mistral7b}, and Gemma-1.2B \citep{gemmateam2024gemmaopenmodelsbased}  provide weak labels. Each weak model is explicitly instructed to respond to the questions with the correct concept (\ie\, persona or explanation technique). In the mathematical reasoning task, LLama2-7B-Chat, Mistral-7B, and Gemma-1.2B  provide weak labels. In the math experiment, prior to weak label production, each of the weak models is fine-tuned on data with ground truth labels, endowing each weak model with expertise on the task. 

\textbf{Training:} GPT-3.5-Turbo-0125 \citep{openai2024gpt4}, and GPT-4o-mini-2024-07-18 \citep{4oMini} play the role of the strong unaligned model that needs to be fine-tuned. In the persona experiment, the strong models are fine-tuned using questions selected from the Dolly \citep{DatabricksBlog2023DollyV2} data set. In the mathematical reasoning experiment, the training data comes from either the gsm8k \citep{cobbe2021trainingverifierssolvemath} data set or the MATH \citep{hendrycks2021measuringmathematicalproblemsolving} data set. In the explanation technique experiment, the training/test set is a set of science questions provided by GPT4 \citep{openai2024gpt4}. 

During fine-tuning (and testing) the strong model is never provided with any instruction to direct it toward a concept; all generalization on the new task must come from the weak/refined labels.

\textbf{Baselines:} We consider two baselines in each task. The first is an unchanged version of the strong model. In the persona/explanation technique experiment, this baseline is expected to receive poor scores on style (since it has not received additional training) but acts as an oracle for the accuracy score. In the mathematical reasoning experiments, the objective is to utilize weak models to improve on this baseline. The second baseline is the strong model fine-tuned on the weak outputs. This represents the naive method for attempting weak to strong generalization; our theory indicates that this baseline should pick up the concept but receive a degradation in any grading on accuracy.

\textbf{Evaluation:} In the persona experiment, the fine-tuned strong model (GPT-3.5-Turbo) is evaluated on the tiny versions of AlpacaEval 2.0 and TruthfulQA \citep{polo2024tinybenchmarks}. The tiny versions of these benchmarks are composed of 100 curated questions that capture the diversity present in the full datasets. In the mathematical reasoning experiment, we test the fine-tuned versions of the strong model (and baselines) on a set of test questions with ground truth answer keys. In the persona experiment, the responses are judged on both the content/accuracy and the persona/explanation technique by GPT-4o using the method described by \citet{liu2023gpteval}: for each example/question, we ask GPT-4o to generate scores (on a scale of 1-10) for the dimensions of interest 10 times while setting the generation temperature at 1; the final score for each example is computed by averaging the individual scores. In the mathematical reasoning experiment, GPT-4o is used to judge if the given response matches the answer key in both the reasoning and the final answer. A score of 1 is awarded if both match and a score of 0 otherwise. As in the persona experiment, for each question and response, we average multiple samplings of scores using the technique in \citet{liu2023gpteval}.

\begin{figure}[h]
\centering
\begin{tabular}{cc}
    \includegraphics[width=.45\textwidth]{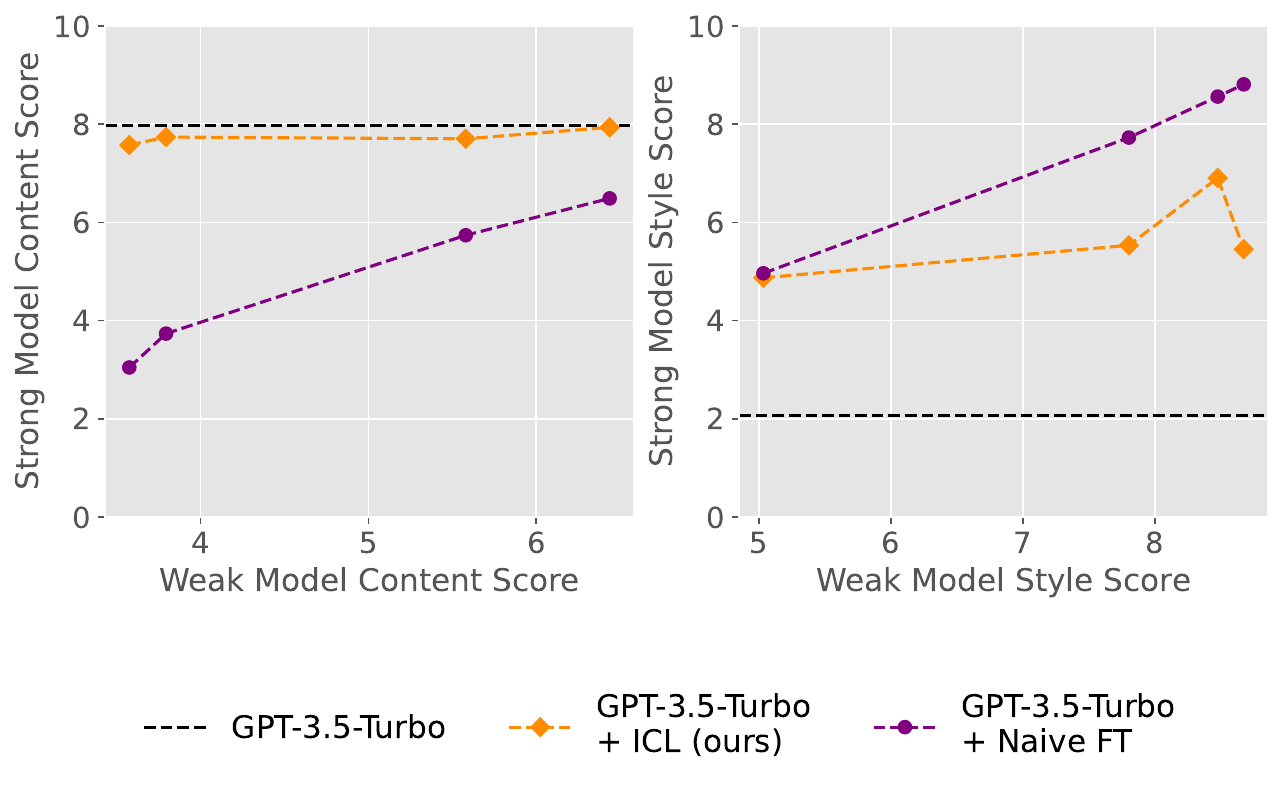}  & \includegraphics[width=.45\textwidth]{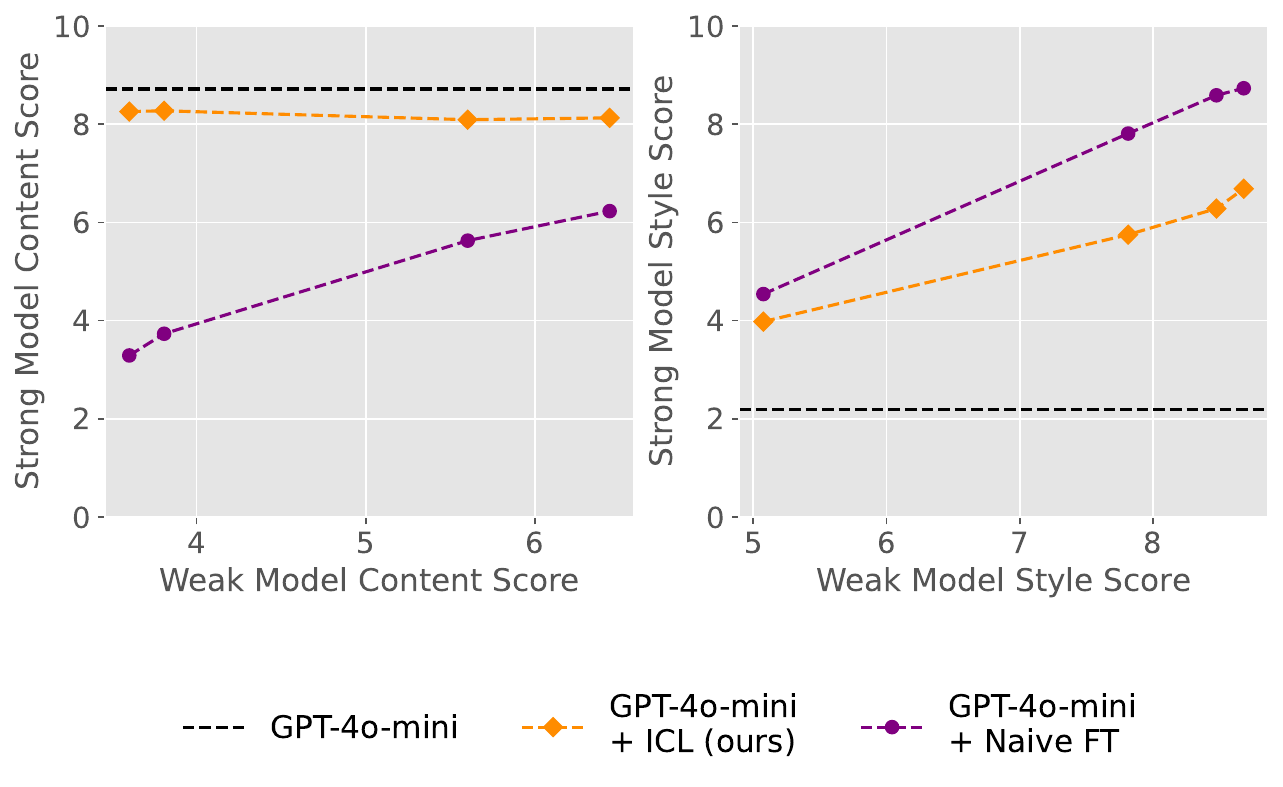} 
\end{tabular}
  \caption{Comparing performance of naive fine-tuning and our ICL method on tinyAlpacaEval. Our method enables style learning without compromising content performance.}
  \label{fig:tAE}
\end{figure}

\begin{figure}[h]
\centering
\begin{tabular}{cc}
    \includegraphics[width=.45\textwidth]{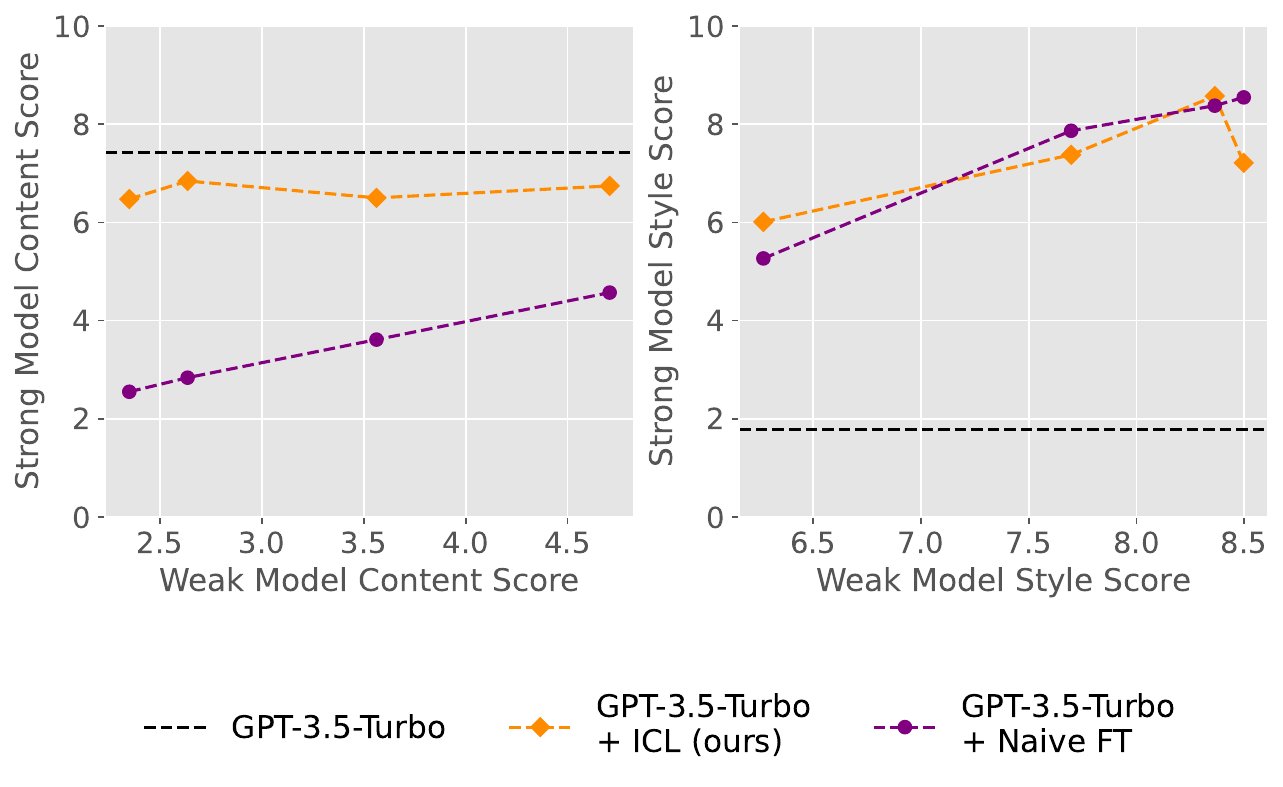}  & \includegraphics[width=.45\textwidth]{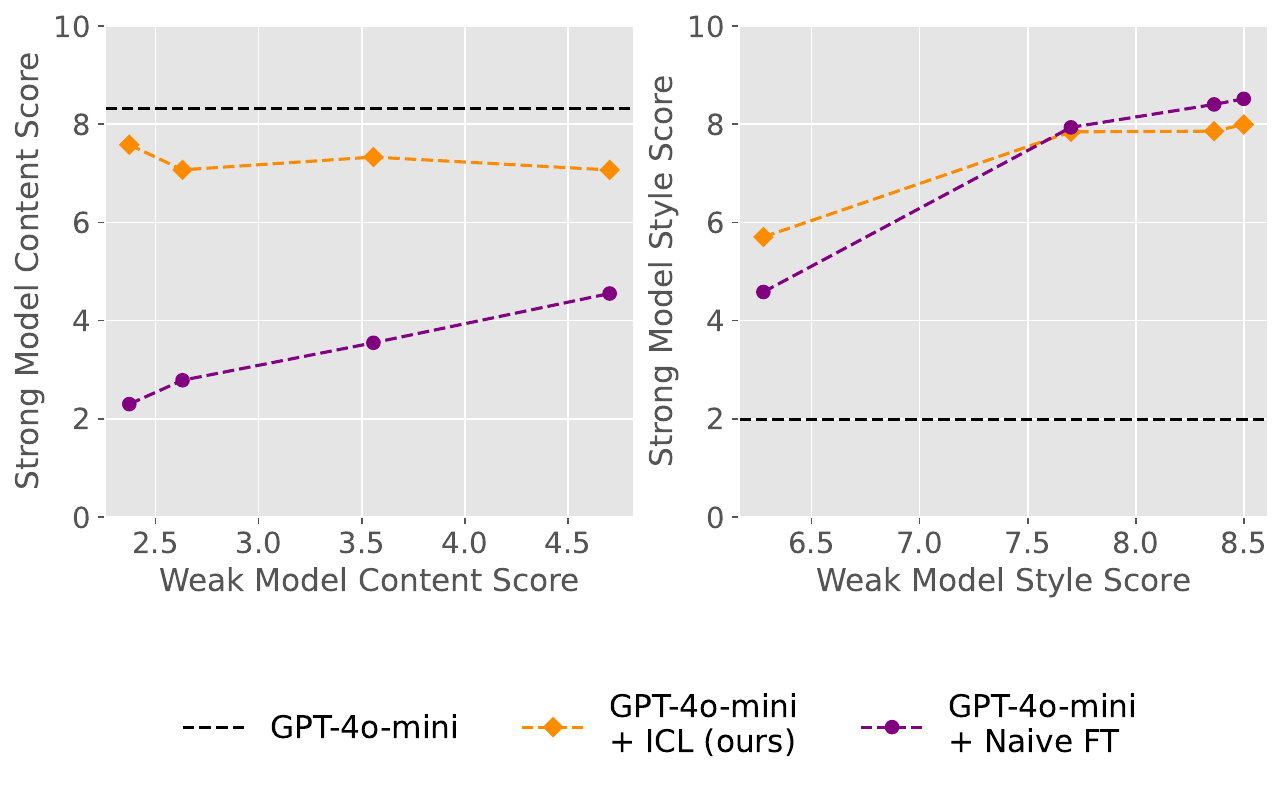} 
\end{tabular}
  \caption{Comparing performance of naive fine-tuning and our ICL method on tinyTruthfulQA. Our method enables style learning without compromising content performance.}
  \label{fig:tQA}
\end{figure}
\begin{figure}[h]
\centering
\begin{tabular}{cc}
    \includegraphics[width=.45\textwidth]{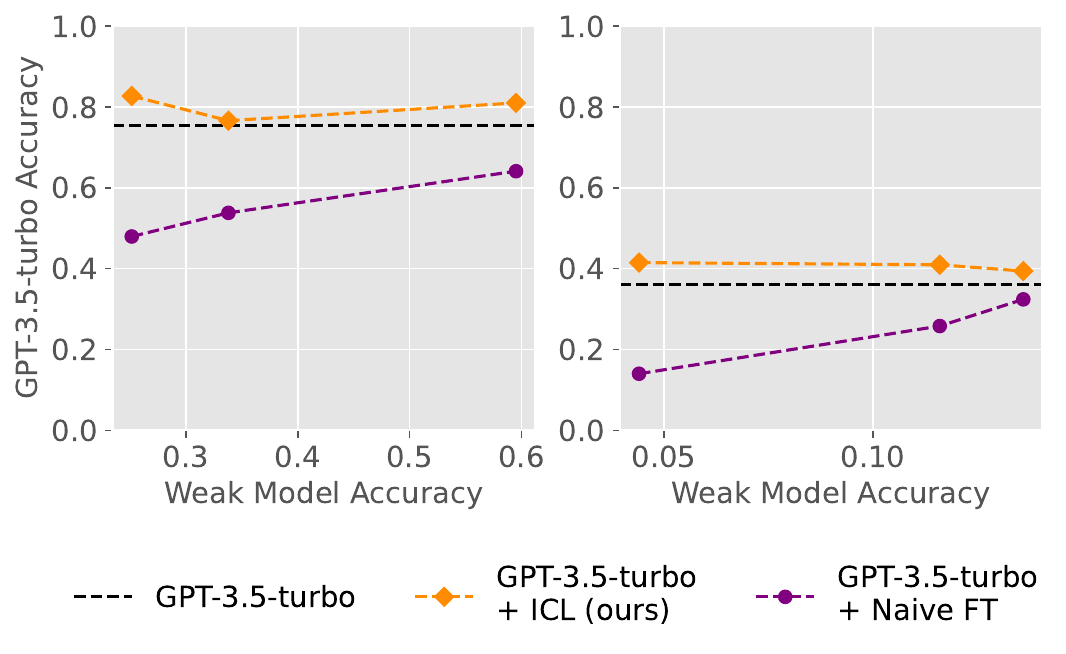}  & \includegraphics[width=.45\textwidth]{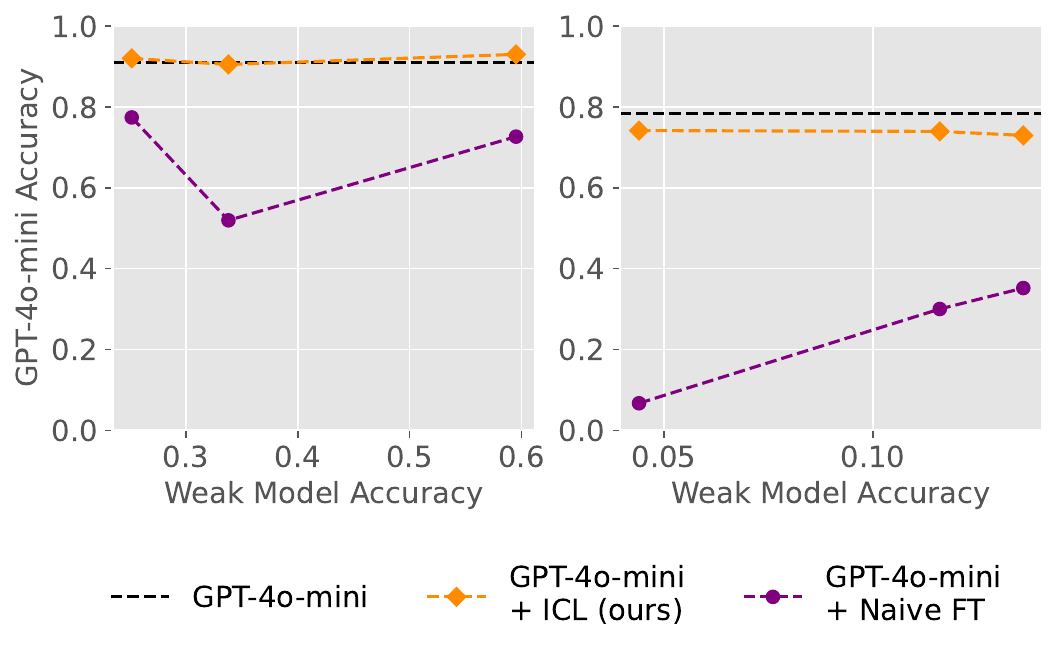} 
\end{tabular}
  \caption{From left to right: model accuracy on GSM8K with 3.5-Turbo, model accuracy on MATH with 3.5-Turbo, model accuracy on GSM8K with 4o-mini, model accuracy on MATH with 4o-mini.}
  \label{fig:mathematical reasoning}
\end{figure}

\begin{figure}[h]
\centering
\begin{tabular}{cc}
    \includegraphics[width=.45\textwidth]{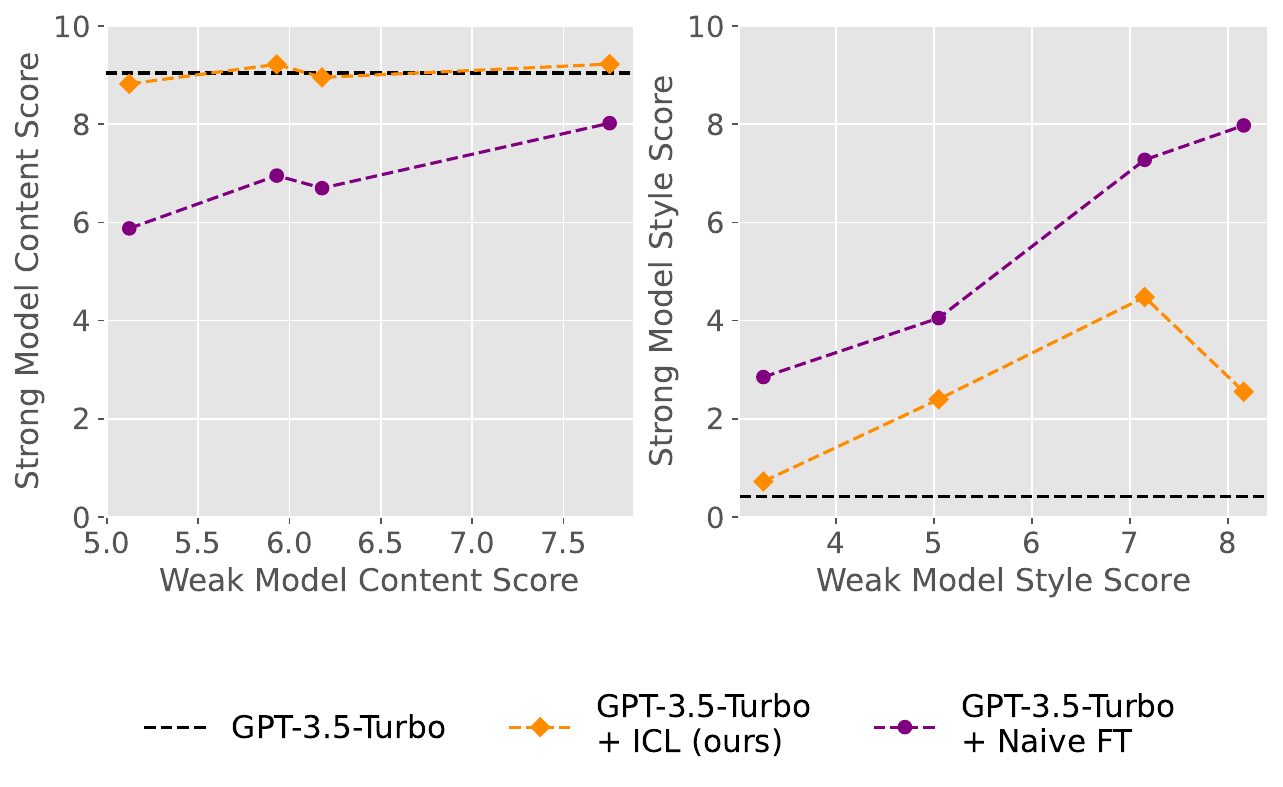}  & \includegraphics[width=.45\textwidth]{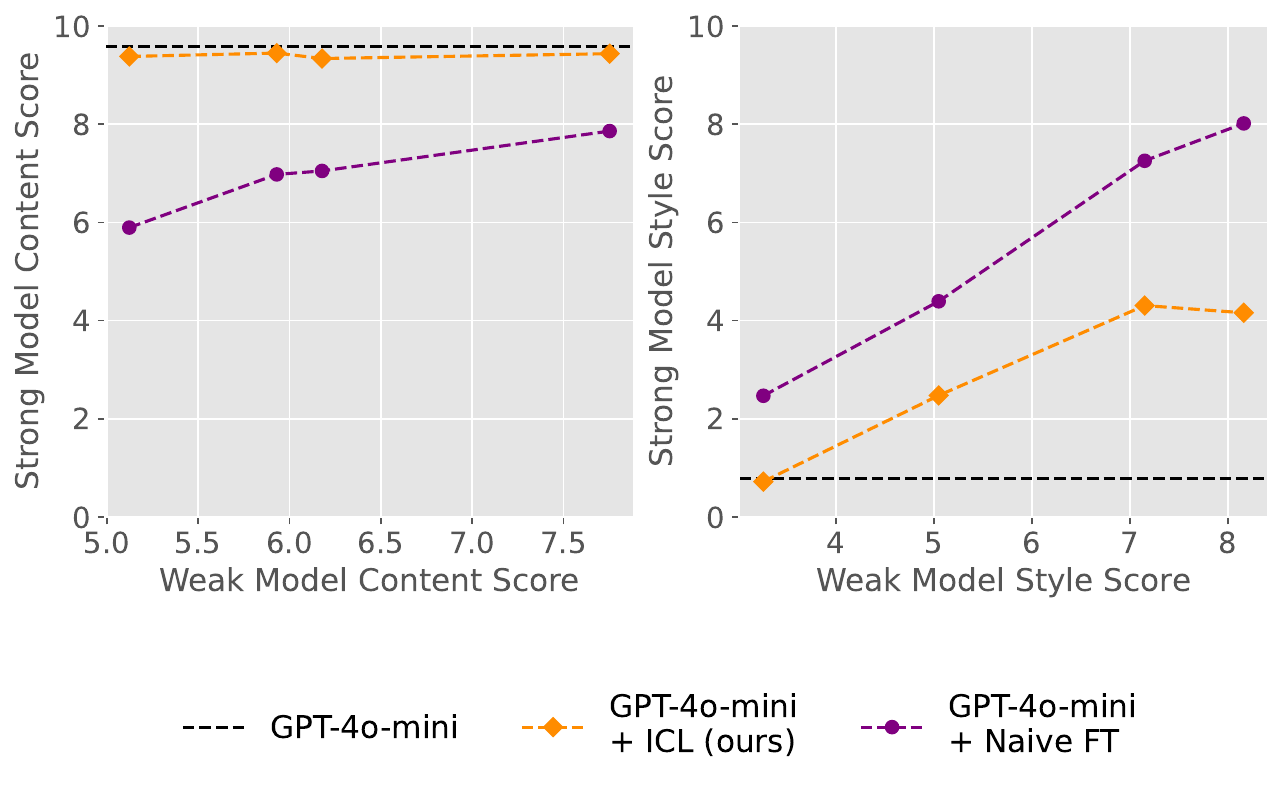} 
\end{tabular}
  \caption{Comparing performance of naive fine-tuning and our ICL method on science questions created by GPT4. Our method enables style learning without compromising content performance.}
  \label{fig:ET}
\end{figure}

\textbf{Results:} Figures \ref{fig:tAE}, \ref{fig:tQA},  and \ref{fig:ET} provide an empirical demonstration of the findings of our transfer learning framework. Naively fine-tuning on the weak labels is clearly limited; in the persona task, the test-time content score (which measures accuracy) of the naively fine-tuned models is substantially lower than that of the base model. Furthermore, this degradation worsens as the quality of the weak labels decreases (\ie, examine the naive FT curve in Figure \ref{fig:tQA}). On the other hand, in-context learning resampling alleviates this issue. In the persona experiment, the models fine-tuned on the improved labels have test-time content scores close to (or above) those of the base model. 

The mathematical reasoning tasks (Figure \ref{fig:mathematical reasoning}) demonstrate that ICL refinement can allow weak-to-strong generalization to occur while naive methods fail, even on more difficult/practical tasks. For the case of GPT-3.5-Turbo on both data sets with all three weak-label providers, we observe that naively fine-tuning on the weak labels fails to achieve weak-to-strong generalization. In fact, training with weak labels often leads to a substantial decrease in the capabilities of the unchanged version of the strong model. However, training on the labels refined with the ICL method improves the reasoning capabilities of GPT-3.5-Turbo. For GPT-4o-mini, our refinement method outperforms the naive method, but the gains for the strong model from training on even the refined labels are limited, suggesting that even more sophisticated methods of refinement may be needed in the future.


\subsection{Comparisons with augmented weak training}
The authors of the original weak to strong generalization paper advocate for two methods beyond simple weak training \citep{burns2023WeaktoStronga}. The first is an auxiliary confidence loss method; this adds a term to the loss that attempts to weaken the effect of the smaller model if it is unconfident in its output. The second method is a bootstrapping technique where a small model trains an intermediate model, then the intermediate model trains the most capable model. 

For bootstrapping we use the model chain weak $\rightarrow$ gpt-3.5-turbo $\rightarrow$ gpt-4o-mini. Unfortunately, it is not possible to directly implement the auxiliary loss method of OpenAI since we cannot access the GPT weights to train with a custom loss. As a proxy, we used a data doubling method, where each training question has an answer from the weak teacher and an answer produced by the strong model. Our findings are presented in table \ref{tab:baseline}, where the accuracy of GPT-4o-mini 
 after being trained using each technique is presented. Bootstrapping offers some safeguarding against accuracy degradation, but generally still allows for too much corruption of the strong model. The auxiliary loss improves accuracy but reduces the style transfer (this is expected in light of our theory, one can think of this method as strengthening 
 in Prop 3.2). Overall, our methods offer a nice balance of (mostly) transferring the concept with minimal degradation.
\begin{table}[h]
\centering
\caption{Baseline Comparison}
\scalebox{0.8}{
\setlength{\tabcolsep}{5pt} 
\renewcommand{\arraystretch}{1} 
\arrayrulewidth=0.5pt
\begin{tabular}{c|c|cc|cc}
\toprule
\rowcolor{Gray} Method  & Weak model & \multicolumn{2}{c}{TruthfulQA} & \multicolumn{2}{c}{AlpacaEval}  \bigstrut\\
\rowcolor{Gray}  &  & Content & Style & Content & Style  \bigstrut\\
\midrule
ICL refinement (ours) & Llama2 7b & $7.33$ & $7.99$ & $8.08$ & $6.71$ \bigstrut\\
\cellcolor{cornsilk}- &\cellcolor{cornsilk} Gemma 2b &\cellcolor{cornsilk} $7.07$ &\cellcolor{cornsilk} $7.70$
&\cellcolor{cornsilk} $8.12$ &\cellcolor{cornsilk} $6.26$
\bigstrut\\
- &  Mistral 7b & $7.07$ & $7.85$ & $8.26$ & $5.74$ 
\bigstrut\\
\cellcolor{cornsilk}- &\cellcolor{cornsilk} Falcon &\cellcolor{cornsilk} $7.58$ &\cellcolor{cornsilk} $5.70$
&\cellcolor{cornsilk} $8.20$ &\cellcolor{cornsilk} $4.10$
\bigstrut\\
\midrule
Bootstrap
& Llama2 7b & $3.79$ & $8.42$ & $6.14$ & $8.65$ \bigstrut\\
\cellcolor{cornsilk}- &\cellcolor{cornsilk} Gemma 2b &\cellcolor{cornsilk} $5.29$ &\cellcolor{cornsilk} $8.47$
&\cellcolor{cornsilk} $7.03$ &\cellcolor{cornsilk} $8.31$
\bigstrut\\
- &  Mistral 7b & $3.12$ & $7.87$ & $4.59$ & $7.30$ 
\bigstrut\\
\cellcolor{cornsilk}- &\cellcolor{cornsilk} Falcon &\cellcolor{cornsilk} $2.883$ &\cellcolor{cornsilk} $4.49$
&\cellcolor{cornsilk} $3.19$ &\cellcolor{cornsilk} $3.66$
\bigstrut\\
\midrule
Auxillary loss
& Llama2 7b & $7.92$ & $2.38$ & $8.58$ & $2.42$ \bigstrut\\
\cellcolor{cornsilk}- &\cellcolor{cornsilk} Gemma 2b &\cellcolor{cornsilk} $6.45$ &\cellcolor{cornsilk} $5.65$
&\cellcolor{cornsilk} $8.33$ &\cellcolor{cornsilk} $3.141$
\bigstrut\\
- &  Mistral 7b & $7.18$ & $3.57$ & $8.26$ & $2.74$ 
\bigstrut\\
\cellcolor{cornsilk}- &\cellcolor{cornsilk} Falcon &\cellcolor{cornsilk} $6.14$ &\cellcolor{cornsilk} $3.57$
&\cellcolor{cornsilk} $7.41$ &\cellcolor{cornsilk} $2.84$
\bigstrut\\
\bottomrule
\end{tabular}}
\label{tab:baseline}
\end{table}

\section{Summary and Discussion}
In this paper, we develop a framework for studying weak-to-strong generalization as a transfer learning problem. Specifically, we assume that the source decision function is a mixture of distributions, with mixture components controlled by a latent concept, while the target decision function is the sole component corresponding to the most desirable concept. Within our framework, we show that estimators fit using weak labels have poor expected MSE; fortunately, we are also able to demonstrate that a refinement procedure can greatly improve the quality of the target supervision. These findings contrast with other theoretical works on weak-to-strong generalization \citep{charikar2024quantifyinggainweaktostronggeneralization, lang2024theoreticalanalysisweaktostronggeneralization, wu2024provableweaktostronggeneralizationbenign} that generally advocate for weak label training.

Our empirical conclusions also differ somewhat from the original paper on weak-to-strong generalization \citep{burns2023Weaktostrong}. In \citet{burns2023Weaktostrong}, the authors compare the performance of the weak supervisor and that of the fine-tuned strong model (with weak supervision), but do not compare the performance of the fine-tuned strong model with that of the strong model without fine-tuning (which we do here). Each of their methods is based on the core idea of training on the weak labels, with the argument being that the strong model trained on the weak labels will outperform the weak teacher. We argue that weak-to-strong generalization has only truly occurred if the weakly supervised model outperforms a version of the strong model with no weak supervision. This is our motivation for introducing more sophisticated refinement procedures.


\subsubsection*{Acknowledgments}
This paper is based upon work supported by DARPA under contract no HR00112290111, the NSF under grants no 2113364, 2113373, and 2414918 and a gift from OpenAI. Any opinions, findings and conclusions expressed in this material are those of the authors and do not necessarily
reflect the views of funding agencies.

\bibliography{iclr2025_conference, YK, seamus}
\bibliographystyle{iclr2025_conference}
\newpage
\appendix
\section{Mixture of Regression Models for LLM's}
\label{a: MOR}
In this section we provide background on two of the concept models for LLM's we work with in the main text.
\subsection{Simple latent concept model for LLM's}
In this sub-section we introduce the model covered in \citep{wang2024largelanguagemodelslatent}. In their model, the inputs to the LLM are token sequences denoted as $X$, outputs are tokens denoted as $Y$. They also posit that their are $K$ tasks of interest and that conditioned on a task $K=k$, $X$ and $Y$ obey the following structural relationship
\[Y=f(X, \beta_k, \epsilon).\]
Our framework is making a further specification on this structural relationship.
\begin{assumption}[Linearity]
   Our framework assumes that  $$f(X, \beta_k, \epsilon) = \beta_k^TX+\epsilon; \quad \epsilon \sim \cN(0, \sigma^2)$$
\end{assumption}
To study properties of in-context-learning in their framework, they assume that a prompt is provided
\[\textstyle \text{prompt} = ((x_1, y_1), (x_2,y_2), \ldots, (x_{n_{\text{ICL}}}, y_{n_{\text{ICL}}}), X)\]
to the source model. Importantly, each of the examples $x_i, y_i$ are from the same task/concept $k^*$ so that $y_i = f(x_i, \beta_{k^*}, \epsilon)$. In practice this prompt goes through an additional pre-processing step where delimeter tokens are inserted between each example. Their treatment mostly sweeps this under the rug, by writing $P_M$ for the source distribution accounting for this pre-processing step. In this way, the distribution for a refined label is 
\[\hat{Y}\sim \sum_k P_M(Y|X, k)P_M(k|(x_1, y_1), (x_2,y_2), \ldots, (x_{n_{\text{ICL}}}, y_{n_{\text{ICL}}}), X)\]
The authors of \citet{wang2024largelanguagemodelslatent} make the following simplifying assumptions.
\begin{assumption}[\citep{wang2024largelanguagemodelslatent} Assumption 2.1]
\label{ass: wang-paper}
\phantom{text text}
   \begin{enumerate}
       \item $P_M(X) = P(X)$
       \item $P_M(Y|X, \beta_k) \propto P(Y|X, \beta_k)$
   \end{enumerate} 
\end{assumption}
Under these assumptions, the authors show that in-context-learning is essentially just iid latent Bayesian inference. 
\begin{proposition}[\citet{wang2024largelanguagemodelslatent}]
   Under assumptions \ref{ass: wang-paper} it holds that
   \[P(k|(x_1, y_1), (x_2,y_2), \ldots, (x_{n_{\text{ICL}}}, y_{n_{\text{ICL}}}), X) \propto \frac{\prod_{j=1}^{n_{\text{ICL}}}P(y_j, x_j, k)}{\sum_{k'}\prod_{j=1}^{n_{\text{ICL}}}p(y_j|X,k')P(k')} \]
   
\end{proposition}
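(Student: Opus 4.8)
The plan is to start from the latent-concept mixture in \eqref{eq: LCM} and Bayes' rule, then use the two simplifying assumptions in Assumption~\ref{ass: wang-paper} to collapse the posterior over $k$ into a product over the ICL examples. First I would write the posterior over the latent concept $k$ given the full prompt as
\[
P\bigl(k\mid (x_1,y_1),\ldots,(x_{n_{\text{ICL}}},y_{n_{\text{ICL}}}),X\bigr) \;=\; \frac{P_M\bigl((x_1,y_1),\ldots,(x_{n_{\text{ICL}}},y_{n_{\text{ICL}}}),X, k\bigr)}{\sum_{k'} P_M\bigl((x_1,y_1),\ldots,(x_{n_{\text{ICL}}},y_{n_{\text{ICL}}}),X, k'\bigr)},
\]
so that the pre-processing distribution $P_M$ appears only through the joint density of the prompt tokens and $k$. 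The denominator is a normalizing constant in $k$, so it suffices to track the numerator up to $k$-independent factors; this is why the statement is phrased with $\propto$.

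Next I would invoke the key structural fact that the ICL examples are all generated from the \emph{same} concept $k$ (the target concept), and that, conditional on $k$, the pairs $(x_j,y_j)$ are drawn independently. This factorizes the joint as $P_M(k)\prod_{j=1}^{n_{\text{ICL}}} P_M(x_j,y_j\mid k)$ times the factor $P_M(X\mid k,\text{examples})$ coming from the query token $X$. Then I would apply part~1 of Assumption~\ref{ass: wang-paper} ($P_M(X)=P(X)$, so the marginal over covariates matches the pretraining distribution and contributes no $k$-dependence beyond what is already in $P_M(Y\mid X,k)$) and part~2 ($P_M(Y\mid X,\beta_k)\propto P(Y\mid X,\beta_k)$, with a proportionality constant that does not depend on $k$ because it is absorbed into the delimiter/pre-processing normalization). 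Substituting $P_M(x_j,y_j\mid k) = P_M(x_j)P_M(y_j\mid x_j,k) \propto P(x_j) P(y_j\mid x_j, k)$ and recombining $P(x_j)P(y_j\mid x_j,k) = P(y_j, x_j, k)/P(k)$ yields the claimed form, with the denominator written as the normalization $\sum_{k'}\prod_j p(y_j\mid X,k')P(k')$.

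The main obstacle is bookkeeping the various $k$-independent normalization constants carefully enough to justify each $\propto$: in particular, verifying that the constant in part~2 of Assumption~\ref{ass: wang-paper} and the delimiter/pre-processing factors in $P_M$ genuinely do not depend on $k$, so that they can be pulled out of both numerator and denominator and cancel. This is essentially the content of \citet{wang2024largelanguagemodelslatent}'s argument, so I would cite their Assumption~2.1 derivation and reproduce only the algebra above; no nontrivial probabilistic estimate is needed, and the orthonormality/inner-product assumptions of our framework play no role here (they enter only in the quantitative bounds of Theorem~\ref{thm:ICL bound}).
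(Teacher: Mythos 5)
Your proposal is correct and follows essentially the same route as the source: the paper offers no proof of its own here, simply citing the Assumption~2.1 derivation of \citet{wang2024largelanguagemodelslatent} and noting that one then plugs in the Gaussian linear specification, and your Bayes-rule-plus-conditional-independence argument with the two assumptions used to swap $P_M$ for $P$ is exactly that derivation. One small caution on your final ``recombining'' step: since $\prod_{j} P(x_j)P(y_j\mid x_j,k) = \prod_{j} P(y_j,x_j,k)\big/P(k)^{n_{\text{ICL}}}$, the clean endpoint of your algebra is the normalized posterior $P(k)\prod_{j} p(y_j\mid x_j,k)\big/\sum_{k'}P(k')\prod_{j} p(y_j\mid x_j,k')$ (precisely the form used in Assumption~\ref{ass: iid}), and the numerator as displayed in the proposition hides the leftover $k$-dependent powers of $P(k)$ inside the ``$\propto$'' --- a looseness inherited from the cited statement rather than introduced by you.
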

To arrive at the functional form we study, one only needs to plug in our specification that $\frac{d}{d\lambda}P(y_j|x_j, k) = \varphi(y_j; \beta_k^Tx_j, \sigma^2)$
\subsection{Recovering mixture-of-regressions with transformer architecture}
In the previous sub-section we saw that one can arrive at our assumption on ICL through a latent concept inference perspective. It turns out that it is also possible to take a purely architechtural perspictive and arrive at the same conclusion.
\begin{theorem}[\citep{pathak2024transformers}]
    There exists an autoregressive transformer $f_P(\cdot)$ such that for a sequence $((x_1, y_1), (x_2,y_2), \ldots, (x_{n_{\text{ICL}}}, y_{n_{\text{ICL}}}), X)$ it holds that
    \[f_P((x_1, y_1), (x_2,y_2), \ldots, (x_{n_{\text{ICL}}}, y_{n_{\text{ICL}}}), X) = [\sum_k \frac{\alpha_k e^{-\sum_{j=1}^{n_{\text{ICL}}}(y_j-x_j^T\beta_k)^2}}{\sum_{k'} \alpha_{k'} 
 e^{-\sum_{j=1}^{n_{\text{ICL}}}(y_j-x_j^T\beta_{k'})^2}}\beta_k]^TX\]
\end{theorem}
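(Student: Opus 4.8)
The plan is to construct the autoregressive transformer $f_P$ explicitly, block by block, and verify that its output at the last token position equals the stated posterior-mean predictor of the $K$-component mixture of linear regressions. The enabling observation is that, after expanding the square, the mixture weight $w_k\propto\alpha_ke^{-S_k}$ (normalized over $k$), with $S_k\triangleq\sum_{j=1}^{n_{\text{ICL}}}(y_j-x_j^\top\beta_k)^2$, is a \emph{softmax} whose logit for concept $k$ is an \emph{affine} function of two aggregate statistics, $u\triangleq\sum_{j=1}^{n_{\text{ICL}}}y_jx_j$ and $M\triangleq\sum_{j=1}^{n_{\text{ICL}}}x_jx_j^\top$: indeed $-S_k=-\sum_jy_j^2+2\beta_k^\top u-\beta_k^\top M\beta_k$, and $-\sum_jy_j^2$ does not depend on $k$ and so cancels in the normalization. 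Since a softmax-attention head is exactly a softmax over query--key inner products, reproducing the $w_k$ — and hence $\sum_kw_k\beta_k$ — is precisely the kind of operation such a head performs.

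Concretely I would proceed in four stages. (i) \textbf{Embedding.} Place $(x_j,y_j)$ at token position $j\le n_{\text{ICL}}$, place $X$ at position $n_{\text{ICL}}+1$, reserve scratch channels, and append $K$ fixed ``memory'' tokens (one per component); since memory-token embeddings carry no input information we may set them to any convenient parameter vectors built from $(\beta_k,\alpha_k)$. (ii) \textbf{Aggregate statistics.} Each coordinate of $y_jx_j$ and of $x_jx_j^\top$ is a degree-$\le 2$ monomial in the content of token $j$; a tokenwise MLP with a square activation — or a ReLU gadget that is exact on the bounded support of the inputs — writes these monomials into the scratch channels at every $j$. A single attention head at the final query position, engineered to attend uniformly to positions $1,\dots,n_{\text{ICL}}$ (constant keys) and carrying those monomials as values, then deposits $u$ and $M$ at position $n_{\text{ICL}}+1$, up to the known scalar $n_{\text{ICL}}$ that a positional count channel lets us rescale.

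(iii) \textbf{Mixture weights.} A second attention head at the final position attends over the $K$ memory tokens: its query is the fixed affine readout $(2u,-\mathrm{vec}(M),1)$ of the residual stream, its key at memory token $k$ is the fixed vector $(\beta_k,\mathrm{vec}(\beta_k\beta_k^\top),\log\alpha_k)$, so the attention logit is exactly $2\beta_k^\top u-\beta_k^\top M\beta_k+\log\alpha_k$; with value $\beta_k$ at memory token $k$, the temperature-$1$ softmax outputs $\bar\beta\triangleq\sum_kw_k\beta_k$. (iv) \textbf{Readout.} A final MLP (or bilinear block) forms $\bar\beta^\top X$ from $\bar\beta$ and the stored copy of $X$ and returns this scalar. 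Causality holds throughout: the stage-(ii) MLP acts tokenwise, and every attention read goes only from positions $\le n_{\text{ICL}}+1$ into the final query, so composing these blocks gives a genuine autoregressive transformer of width $\mathrm{poly}(d,K)$ whose output at the last position is the claimed predictor.

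I expect the main obstacle to be realizing the two nonlinearities \emph{exactly} rather than approximately. First, the degree-$2$ monomials in stage (ii) require either admitting a polynomial (square) activation or building a ReLU subnetwork that is exact — not merely $\varepsilon$-close — on the bounded domain where the tokens live; producing such a clean gadget, and tracking the domain bounds it needs, is the delicate point. Second, stage (iii) needs the query--key inner products to equal the Bayesian posterior logits on the nose, which forces a careful split of the residual stream into ``sufficient-statistic'' channels versus everything else and pins the softmax temperature to $1$. The remaining issues — the $\sum_j$ versus average rescaling, the symmetric-vectorization bookkeeping for $M$, and width padding so the blocks compose — are routine once a positional channel tracks $n_{\text{ICL}}$.
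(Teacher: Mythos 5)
The paper does not actually prove this statement: it is imported verbatim as a cited result from \citet{pathak2024transformers}, so there is no in-paper argument to compare yours against. Judged on its own terms, your sketch identifies the right enabling algebra --- expanding $-S_k=-\sum_j y_j^2+2\beta_k^\top u-\beta_k^\top M\beta_k$ so that the posterior weights become a temperature-one softmax whose logits $\log\alpha_k+2\beta_k^\top u-\beta_k^\top M\beta_k$ are affine in the sufficient statistics $(u,M)$, and then letting an attention head over $K$ memory tokens realize exactly that softmax with values $\beta_k$. This is the same mechanism that drives the cited construction, and stages (iii)--(iv) of your plan are essentially correct.

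The gap you flag in stage (ii) is, however, a real one and not merely ``delicate'': a ReLU network computes a piecewise-linear function of its input, so no ReLU gadget can be \emph{exactly} equal to $t\mapsto t^2$ on any interval of positive length; exactness on the bounded support is impossible, not just hard. To get the degree-two monomials $y_jx_j$ and $x_jx_j^\top$ exactly you must either (a) admit a quadratic/bilinear block in the architecture class, or (b) exploit the bilinearity already present in attention (query--key inner products are bilinear in the two token embeddings, so products of coordinates can be manufactured by heads whose query and key read different coordinates of the same token), which is the route such constructions typically take. As written, your proof establishes the result only for an architecture with polynomial activations, or an approximate version of the identity for pure ReLU. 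Two smaller points also need care: uniform attention returns an average rather than a sum, and rescaling by $n_{\text{ICL}}$ is a multiplication by a length-dependent scalar that must be performed exactly (trivial for fixed prompt length, but it requires an explicit gadget if $n_{\text{ICL}}$ varies); and the theorem's exponent carries no $1/(2\sigma^2)$ factor, so your stage-(iii) logits must be scaled to match the stated normalization rather than the Gaussian likelihood. With the architecture class pinned down, the blueprint goes through.
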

Note that this is essentially the distributional assumption we make for $\hat{Y}$ with the slight generalization that some additive noise perturbs the observations from $f_P(\cdot)$.
\section{Extension to hidden markov models for LLMs}
\label{a: HMM}
In the main text we primarily worked with the latent concept model in \citet{wang2024largelanguagemodelslatent}; this model is compatible with our transfer leaning framework and allows to obtain interpretable bounds on our refinement method. The downside of this framework is that it ignores the role of delimeter tokens in the refinement prompt. Consider the refinement prompt fed to the source model, in the main text we assumed that:
\[P_{Y|X, S_{n_\text{ICL}}} = \sum_k P(Y|X, k) \prod_{i=1}^{n_{\text{ICl}}}P(k|(x_i, y'_i)).\]
Essentially, we are assuming that the source model treats multi-shot examples as $\iid$; allowing us to reduce the problem to one of inferring the latent concept from the imperfect weak samples. In practice, the justification for this assumption is the use of a delimeter token $o^d$ between examples (typically this a line break). To account for the effect of these delimeter tokens (or to move beyond $\iid$ refinement) we provide results for the more sophisticated setting of \citet{xie2021Explanation}.
\subsection{set up}
In the framework of \citet{xie2021Explanation} we have two ``language models" which generate text. Both will correspond to a hidden markov model. The latent concept $k$ will now correspond to the transition matrix of the hidden markov model. Additionally, for each $k$  we will assume there is a common state space indexed by $h \in H$. The first HMM is the source model, for a given sequence of text $O$ of length $L$, we may write
\begin{align*}
    P(O) = \sum_{k}P(O|k)P(k);\\
    P(O|k) = \sum_{h_o \in H}[\prod_{ l=2}^L( \sum_{h \in H}P(O_{[l]}|h)P(h|O_{[l-1]}, k))  \sum_{h \in H}P(O_{1}|h)P(h|h_0, k)) ]p(h_0)
\end{align*}
The second HMM is the weak model which provides weak text generated from a corrupted model with the correct concept (in this case the correct transition matrix $k^*$). For a given sequence of text from this model, we may write
\[P'(O) =  \sum_{h_o \in H}[\prod_{ l=2}^L( \sum_{h \in H}P'(O_{[l]}|h)P(h|O_{[l-1]}, k^*))  \sum_{h \in H}P'(O_{1}|h)P(h|h_0, k^*)) ]p(h_0)\]
Note that the weakness is incurred in the distribution of the tokens conditioned on a hidden state; the HMM posseses the correct transition distribution between hidden states.

We now turn to our refinement procedure. From the weak model we assume that we receive a sequence of examples $(x_1,y'_1), (x_2, y'_2), \ldots (x_{n_{\text{ICl}}}, y'_{n_{\text{ICl}}})$ as well as a query $x$ that we wish to receive a refined label on. Note that in the notation of this section we are denoting $(x_i, y'_i) = O'_i$. In between, each of the examples will place a delimeter token $o^d$. Ultimately, the refined label $\hat{Y}$ is sampled by picking
\[\hat{y} = \argmax_{y}P(y|x_1, y'_1, o^d, x_2, y'_2, o^d, \ldots, o^d, x_{n_\text{ICL}} y'_{n_{\text{ICL}}}, x)\]
The goal is to show that $\hat{y} \overset{p}{\rightarrow} y^*$, where 
\[y^* \triangleq \argmax_{y}P(y|x, k^*)\]
This will imply that despite the corruption in the ICL examples, as $n_{\text{ICL}}$ grows the refined label converges towards the label which is drawn from the target model (the source model with all weight on conept $k^*$).
\subsection{Asymptotic convergence result}
Due to the presence of the delimeter tokens, a closed form bound is beyond the scope of this work. Instead we provide an asymptotic result. For this we need several technical assumptions.
\begin{assumption}
    \label{ass:HMM}
    \phantom{hhh}
    \begin{enumerate}
        \item There exists a set of states $H_{\text{delim}} \subset H$ such that for any $h_{\text{delim}} \in H_{\text{delim}}$ $P(o^{d}|h_{\text{delim}}) = 1$. Furthermore, for any $h \in H \backslash H_{\text{delim}}$ it holds that $P(o^d|h) = 0$
        \item For any delimeter state $h_{\text{delim}}$ and $h \in H \backslash H_{\text{delim}}$ it holds that $p(h_{\text{delim}}|h, k) < c_2 < 1$ for all $k \in K \backslash k^* $ and $p(h_{\text{delim}}|h, k^*) > c_1 > 0$.
        \item Let $y^* \triangleq \argmax_{y}P(y|x,k^*)$. Assume it holds that $P(y^*|x,k^*) > P(y|x,k^*) + \Delta$ for all $y \neq y^*$.
        \item For all $h_{\text{delim}} \in H_{\text{delim}}$, it holds that $\text{TV}[p(h)||p(h|h_{\text{delim}}, k^*)] < \Delta/4$
        \item The following regularity assumptions hold: $P(k^*)>0$, for $h, h' \in H$, $p(h|h', k^*) > c_5 >0$, for $h \in H$, $p(h|k^*) > c_8 > 0$, for any token $o \in \cV$, $P(o|h, k^*) > c_6 >0$.
    \end{enumerate}
\end{assumption}
The following lemmas essentially characterize the issue of convergence of the ICL method under the HMM structure.
\begin{lemma}[\cite{xie2021Explanation} 
Theorem 1 (part 1)]
\label{lem: AS1}

Let $r_{n_{\text{ICL}}}(k) \triangleq \frac{1}{n_{\text{ICL}}}\log[\frac{P(S_{n_{\text{ICL}}}, x|k)}{P(S_{n_{\text{ICL}}}, x|k^*)}]$. If for all $k$ it holds that $r_{n_{\text{ICL}}}(k) \overset{p}{\rightarrow} -c_k < 0$, then $\hat{y} \overset{p}{\rightarrow} y^*$.    
\end{lemma}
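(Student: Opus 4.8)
Looking at Lemma \ref{lem: AS1}, I need to prove that if the normalized log-likelihood ratios $r_{n_{\text{ICL}}}(k)$ converge in probability to strictly negative constants for all $k \neq k^*$, then the argmax label $\hat{y}$ converges in probability to $y^*$.

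\textbf{Proof proposal.}

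The plan is to decompose the predictive distribution $P(y \mid S_{n_{\text{ICL}}}, x)$ over the latent concept $k$ via Bayes' rule and show that all the posterior mass concentrates on $k^*$, which — combined with the margin assumption (Assumption \ref{ass:HMM}, item 3) — forces the argmax to be $y^*$. First I would write
\[
P(y \mid S_{n_{\text{ICL}}}, x) = \sum_k P(y \mid x, k)\, P(k \mid S_{n_{\text{ICL}}}, x),
\]
and then expand the posterior weight using Bayes' rule so that, after dividing numerator and denominator by $P(S_{n_{\text{ICL}}}, x \mid k^*)P(k^*)$,
\[
P(k \mid S_{n_{\text{ICL}}}, x) = \frac{\frac{P(k)}{P(k^*)} \exp\!\big(n_{\text{ICL}}\, r_{n_{\text{ICL}}}(k)\big)}{\sum_{k'} \frac{P(k')}{P(k^*)}\exp\!\big(n_{\text{ICL}}\, r_{n_{\text{ICL}}}(k')\big)}.
\]
By hypothesis $r_{n_{\text{ICL}}}(k) \overset{p}{\to} -c_k$ with $c_k > 0$ for $k \neq k^*$, while $r_{n_{\text{ICL}}}(k^*) = 0$ identically, so each numerator term with $k \neq k^*$ behaves like $\exp(-n_{\text{ICL}} c_k(1+o_p(1))) \to 0$, whereas the $k^*$ term is constant in $n_{\text{ICL}}$. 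Hence $P(k^* \mid S_{n_{\text{ICL}}}, x) \overset{p}{\to} 1$ and $\sum_{k \neq k^*} P(k \mid S_{n_{\text{ICL}}}, x) \overset{p}{\to} 0$. Making this rigorous requires a standard $\varepsilon$-$\delta$ argument: on the event that $|r_{n_{\text{ICL}}}(k) + c_k| < c_k/2$ for all $k \neq k^*$ (which has probability $\to 1$ by a union bound over the finitely many concepts), the off-target numerators are bounded by $\exp(-n_{\text{ICL}} c_k/2)$, giving an explicit deterministic bound on the off-target posterior mass.

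Next I would transfer this concentration to the label. Write $P(y \mid S_{n_{\text{ICL}}}, x) = P(k^* \mid S_{n_{\text{ICL}}}, x)\,P(y \mid x, k^*) + \sum_{k \neq k^*} P(k \mid S_{n_{\text{ICL}}}, x)\,P(y \mid x, k)$, so that $\big|P(y \mid S_{n_{\text{ICL}}}, x) - P(y \mid x, k^*)\big| \le 2\sum_{k\neq k^*}P(k \mid S_{n_{\text{ICL}}}, x) \overset{p}{\to} 0$ uniformly in $y$ (the label space being finite, or at least the relevant sup being controlled). By the margin condition $P(y^* \mid x, k^*) > P(y \mid x, k^*) + \Delta$ for all $y \neq y^*$, once the uniform approximation error drops below $\Delta/2$ the maximizer of $P(\cdot \mid S_{n_{\text{ICL}}}, x)$ must coincide with $y^*$; this happens on an event of probability $\to 1$, which is exactly $\hat{y} \overset{p}{\to} y^*$.

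The main obstacle is not the concentration argument itself — which is routine once the $r_{n_{\text{ICL}}}(k)$ asymptotics are granted — but rather the uniformity over the label space $y$ in the last step, together with verifying that the ``query token'' $x$ and the argmax are well-defined in the HMM notation (e.g. that $P(y \mid x, k)$ is the correct marginalization over hidden states and that the delimiter tokens separating ICL examples from the query do not interfere with the final prediction). If the label alphabet were infinite this uniform control would need an extra tightness argument, but under the paper's setup (finite vocabulary $\cV$, Assumption \ref{ass:HMM} item 5) this reduces to a finite maximum and the bound is immediate. The genuinely hard analytic content — showing $r_{n_{\text{ICL}}}(k) \to -c_k < 0$ in the first place — is deferred to subsequent lemmas, so this lemma is essentially the ``soft'' reduction step.
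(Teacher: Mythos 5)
First, note that the paper does not prove this statement at all: Lemma~\ref{lem: AS1} is imported verbatim as Theorem~1 (part~1) of \citet{xie2021Explanation}, so there is no in-paper proof to compare against. Your sketch does reconstruct the standard argument behind the cited result — Bayes' rule on the latent concept, exponential collapse of the off-target posterior weights from $r_{n_{\text{ICL}}}(k)\overset{p}{\to}-c_k<0$, and the margin condition to convert convergence of the predictive distribution into convergence of the argmax — and that high-level route is the right one.

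There is, however, one concrete gap. Your decomposition
\[
P(y \mid S_{n_{\text{ICL}}}, x) = \sum_k P(y \mid x, k)\, P(k \mid S_{n_{\text{ICL}}}, x)
\]
is not valid in the HMM model: the correct mixture components are $P(y \mid S_{n_{\text{ICL}}}, x, k)$, and even for $k=k^*$ this is \emph{not} equal to the target $P(y\mid x,k^*)$, because the hidden state at the query position is distributed according to the history-conditioned law $p(h \mid h_{\text{delim}}, k^*)$ rather than the stationary/prior law $p(h)$ appearing in the definition of $y^*=\argmax_y P(y\mid x,k^*)$. Consequently your bound $\bigl|P(y\mid S_{n_{\text{ICL}}},x)-P(y\mid x,k^*)\bigr|\le 2\sum_{k\neq k^*}P(k\mid S_{n_{\text{ICL}}},x)$ omits a term that does not vanish as $n_{\text{ICL}}\to\infty$. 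This is exactly why Assumption~\ref{ass:HMM}, item~4, imposes $\mathrm{TV}[p(h)\,\|\,p(h\mid h_{\text{delim}},k^*)]<\Delta/4$: it caps the residual discrepancy at a fraction of the margin $\Delta$ so that, after the posterior has concentrated on $k^*$, the argmax is still forced to $y^*$. You flag the hidden-state/delimiter issue as a possible obstacle in your final paragraph, but you never invoke item~4, and without it the $\Delta/2$ budget in your last step does not close the argument. Folding that assumption into the final inequality (posterior leakage $+$ hidden-state TV error $<\Delta$) completes the proof.
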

\begin{lemma}[\cite{xie2021Explanation} 
Theorem 1 (part 2)]
    \label{lem: AS2}
    Let $r_{n_{\text{ICL}}}(k)$ be defined as in Lemma \ref{lem: AS2}. If $\epsilon^k_{\text{delim}} \triangleq 2(\log(c_2) - \log(c_1))+\log(c_4)-\log(c_3)$ then it holds that 
    \[r_{n_{\text{ICL}}}(k)\overset{p}{\rightarrow} \mathbb{E}_{O' \sim P'(O'|k^*)}\left[\log[\frac{P(O'|k)}{P(O'|k^*)}]\right] + \epsilon_{\text{delim}}^k<0\]
\end{lemma}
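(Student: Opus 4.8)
The plan is to adapt the proof of Theorem~1 in \citet{xie2021Explanation} to our setting, in which the ICL examples $O'_i = (x_i, y'_i)$ are generated by the \emph{weak} HMM $P'(\cdot \mid k^*)$ rather than by the source model. First I would write out the forward (transfer-matrix) decomposition of $P(S_{n_{\text{ICL}}}, x \mid k)$ for each candidate concept $k$. The key structural fact, from Assumption~\ref{ass:HMM}(1), is that the delimiter token $o^d$ is emitted only from states in $H_{\text{delim}}$; hence every occurrence of $o^d$ in the prompt pins the hidden chain into $H_{\text{delim}}$ at that position. Conditioning on these forced visits factors the likelihood over the $n_{\text{ICL}}$ blocks: $\log P(S_{n_{\text{ICL}}}, x\mid k) = \sum_{i=1}^{n_{\text{ICL}}} \log P(O'_i \mid h^{(i)}_{\text{in}}, k) + (\text{transition terms into and out of } H_{\text{delim}}) + \log P(x \mid h^{(n_{\text{ICL}})}_{\text{out}}, k)$, where $h^{(i)}_{\text{in}}, h^{(i)}_{\text{out}} \in H_{\text{delim}}$ are boundary states to be summed over.

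Second, I would take the ratio with $k^*$ and control the delimiter-transition discrepancy. Because the boundary states lie in $H_{\text{delim}}$, Assumption~\ref{ass:HMM}(2) bounds the relevant transition probabilities $p(h_{\text{delim}}\mid h, k)$ uniformly between $c_1$ and $c_2$ for $k\neq k^*$ and from below by $c_1$ for $k^*$; together with the two-sided bounds on the marginal over delimiter states (the constants $c_3, c_4$), each block contributes at most $\log\frac{P(O'_i\mid k)}{P(O'_i\mid k^*)} + \epsilon^k_{\text{delim}}$ with $\epsilon^k_{\text{delim}} = 2(\log c_2 - \log c_1) + \log c_4 - \log c_3$, the factor $2$ accounting for the entry and exit transitions of each block. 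The $x$-query term is $O(1)$ and vanishes after dividing by $n_{\text{ICL}}$.

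Third, since the $O'_i$ are i.i.d.\ draws from $P'(O'\mid k^*)$ and the per-block log-ratios are integrable under the regularity lower bounds of Assumption~\ref{ass:HMM}(5), the weak law of large numbers gives $\frac{1}{n_{\text{ICL}}}\sum_i \log\frac{P(O'_i\mid k)}{P(O'_i\mid k^*)} \overset{p}{\rightarrow} \mathbb{E}_{O'\sim P'(O'\mid k^*)}\big[\log\frac{P(O'\mid k)}{P(O'\mid k^*)}\big]$, so $r_{n_{\text{ICL}}}(k)$ converges in probability to $\mathbb{E}_{O'\sim P'(\cdot\mid k^*)}[\log\frac{P(O'\mid k)}{P(O'\mid k^*)}] + \epsilon^k_{\text{delim}}$. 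Negativity of this limit is the distinguishability hypothesis under which the lemma is stated and which makes Lemma~\ref{lem: AS1} applicable: it asserts that the weak-model averaged log-likelihood gap between $k$ and $k^*$ exceeds in magnitude the bounded delimiter slack $\epsilon^k_{\text{delim}}$.

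The main obstacle is the second step: making the ``a delimiter resets the chain'' intuition rigorous. One has to show that summing over all admissible boundary states in $H_{\text{delim}}$ — rather than conditioning on a single known state — costs only the bounded multiplicative factor captured by $\epsilon^k_{\text{delim}}$, uniformly in $n_{\text{ICL}}$, and that the residual dependence between consecutive blocks introduced by this summation does not accumulate. This is exactly the transfer-matrix/mixing estimate of \citet{xie2021Explanation}, and carrying it over requires only that the weak model $P'(\cdot\mid k^*)$ shares the delimiter structure of Assumption~\ref{ass:HMM}(1), which holds by construction since weakness perturbs only the emission distributions and not the transition matrix.
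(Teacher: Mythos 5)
The paper does not actually prove this lemma: it is imported verbatim (as ``Theorem 1, part 2'') from \citet{xie2021Explanation}, and your reconstruction follows essentially the same route as the original proof there --- factor the prompt likelihood at the forced delimiter visits guaranteed by Assumption \ref{ass:HMM}(1), absorb the boundary-transition discrepancy into the per-block slack $\epsilon^k_{\text{delim}}$ via the $c_1,c_2$ (and $c_3,c_4$) bounds, and apply the law of large numbers to the i.i.d.\ example blocks. You also correctly identify the genuinely delicate step (summing over admissible boundary states without accumulating cross-block dependence) as the transfer-matrix/mixing estimate of the original paper, and you correctly read the negativity of the limit as a distinguishability hypothesis rather than a conclusion.

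One point of care: as you yourself note, each block contributes \emph{at most} $\log\frac{P(O'_i\mid k)}{P(O'_i\mid k^*)} + \epsilon^k_{\text{delim}}$, so the argument delivers an asymptotic \emph{upper bound} on $r_{n_{\text{ICL}}}(k)$, i.e.\ $\limsup$ of $r_{n_{\text{ICL}}}(k)$ is at most the displayed expectation plus $\epsilon^k_{\text{delim}}$ (in probability), not exact convergence to that value. Your final sentence of step three asserts exact convergence, which does not follow from the one-sided block bounds; however, the lemma as stated in the paper has the same looseness, and only the upper bound (together with its negativity) is needed to invoke Lemma \ref{lem: AS1}. Note also that the constants $c_3, c_4$ in $\epsilon^k_{\text{delim}}$ are never defined in Assumption \ref{ass:HMM} of this paper --- they come from the corresponding assumption in \citet{xie2021Explanation} --- so your appeal to ``two-sided bounds on the marginal over delimiter states'' is the right reading but is supplying a hypothesis the paper omits.
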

\begin{theorem}
    Suppse for all $k$ it holds that $\mathbb{E}_{O' \sim P'(O'|k^*)}\log P(O'|H, k) \leq \mathbb{E}_{O' \sim P'(O'|k^*)}P'(O'|H,k)$. Then $\hat{y} \overset{p}{\rightarrow} y^*$ so long as $-KL(P'(O'|k^*)||P'(O|k)) + KL(P'(O'|k^*)||P(O'|k^*)) + \epsilon_{\text{delim}}^k < 0$.
\end{theorem}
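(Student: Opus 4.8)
The plan is to combine the two lemmas already stated. By Lemma \ref{lem: AS1}, it suffices to show that for every $k \neq k^*$ we have $r_{n_{\text{ICL}}}(k) \overset{p}{\rightarrow} -c_k$ for some $c_k > 0$; once this is established, the conclusion $\hat y \overset{p}{\rightarrow} y^*$ follows immediately. So the whole argument reduces to identifying the almost-sure limit of $r_{n_{\text{ICL}}}(k)$ and checking its sign under the hypotheses.

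First I would invoke Lemma \ref{lem: AS2}, which gives the limit in the form $\mathbb{E}_{O' \sim P'(O'|k^*)}\big[\log \tfrac{P(O'|k)}{P(O'|k^*)}\big] + \epsilon_{\text{delim}}^k$. The next step is to rewrite this expectation in terms of KL divergences against the \emph{data-generating} distribution $P'(O'|k^*)$. Adding and subtracting $\log P'(O'|k^*)$ inside the expectation, the leading term splits as
\begin{align*}
\mathbb{E}_{O' \sim P'(O'|k^*)}\Big[\log \tfrac{P(O'|k)}{P(O'|k^*)}\Big]
&= \mathbb{E}_{O' \sim P'(O'|k^*)}\Big[\log \tfrac{P'(O'|k^*)}{P(O'|k)}\Big]^{-1}\text{-style rearrangement}\\
&= -\,\mathrm{KL}\!\big(P'(O'|k^*)\,\|\,P(O'|k)\big) + \mathrm{KL}\!\big(P'(O'|k^*)\,\|\,P(O'|k^*)\big).
\end{align*}
Here the second KL term measures the bias of the source model's $k^*$-component relative to the weak generating model, and the first measures how far the source's $k$-component is from that same weak model. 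Substituting back, the limit of $r_{n_{\text{ICL}}}(k)$ equals exactly $-\mathrm{KL}(P'(O'|k^*)\|P(O'|k)) + \mathrm{KL}(P'(O'|k^*)\|P(O'|k^*)) + \epsilon_{\text{delim}}^k$, which is assumed to be strictly negative. Taking $c_k$ to be its absolute value and applying Lemma \ref{lem: AS1} finishes the proof.

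The one genuine subtlety — and the place where the extra hypothesis $\mathbb{E}_{O'\sim P'(O'|k^*)}\log P(O'|H,k) \le \mathbb{E}_{O'\sim P'(O'|k^*)}\log P'(O'|H,k)$ enters — is that the quantities $P(O'|k)$ and $P'(O'|k)$ are not the raw per-token conditionals but the HMM marginals obtained by summing over hidden-state paths $H$. The decomposition above is clean at the level of full sequence likelihoods, but to connect the stated condition (phrased in terms of $P(O'|H,k)$, i.e.\ likelihoods conditioned on a hidden-state sequence) to the marginal KL terms one needs a Jensen-type / data-processing argument: marginalizing over $H$ can only bring the source model closer to $P'$ in KL, so the assumed pointwise-in-expectation domination of the hidden-state-conditioned log-likelihoods transfers to the marginals. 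I expect verifying this transfer — carefully handling the $\log\sum_h$ versus $\sum_h$ ordering and making sure the delimiter-state structure from Assumption \ref{ass:HMM} doesn't break the monotonicity — to be the main obstacle; everything else is bookkeeping with the two lemmas.
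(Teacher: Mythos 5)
There is a genuine gap, and it sits exactly where you flagged uncertainty. Your two-term decomposition (adding and subtracting $\log P'(O'|k^*)$) is an exact identity, but it produces
\[
\mathbb{E}_{O'\sim P'(O'|k^*)}\Big[\log\tfrac{P(O'|k)}{P(O'|k^*)}\Big] \;=\; -\mathrm{KL}\big(P'(O'|k^*)\,\|\,P(O'|k)\big) + \mathrm{KL}\big(P'(O'|k^*)\,\|\,P(O'|k^*)\big),
\]
i.e.\ the first KL is taken against the \emph{source} model's $k$-component $P(\cdot|k)$. The theorem's hypothesis is stated in terms of $\mathrm{KL}(P'(O'|k^*)\|P'(O'|k))$, the \emph{weak} model's $k$-component. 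These are different quantities, and negativity of the weak-model expression does not by itself imply negativity of the source-model expression that your decomposition yields. So as written, your argument verifies the sign condition for a different criterion than the one hypothesized.

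The paper closes this gap with a three-factor telescoping through the weak model: it writes $\frac{P(O'|k)}{P(O'|k^*)} = \frac{P(O'|k)}{P'(O'|k)}\cdot\frac{P'(O'|k)}{P'(O'|k^*)}\cdot\frac{P'(O'|k^*)}{P(O'|k^*)}$, so the limit of $r_{n_{\text{ICL}}}(k)$ becomes $-\mathrm{KL}(P'(\cdot|k^*)\|P'(\cdot|k)) + \mathrm{KL}(P'(\cdot|k^*)\|P(\cdot|k^*))$ plus the residual $\mathbb{E}_{P'(\cdot|k^*)}\log\frac{P(O'|k)}{P'(O'|k)}$. The extra hypothesis of the theorem is precisely what makes that residual nonpositive; its role is to control the source-versus-weak discrepancy at concept $k$, not anything about marginalizing over hidden-state paths. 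Your proposed Jensen/data-processing argument over $H$ is therefore aimed at the wrong target and would not recover the missing term. Once you insert the third factor and invoke the hypothesis to kill the residual, the rest of your plan (Lemma AS2 for the limit, Lemma AS1 for the conclusion) is exactly the paper's argument.
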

\begin{proof}
  The proof follows from a direct application of Lemmas \ref{lem: AS1} and \ref{lem: AS2} and noting that 
  \[\mathbb{E}_{O' \sim P'(O'|k^*)}\left[\log[\frac{P(O'|k)}{P(O'|k^*)}]\right] = \mathbb{E}_{O' \sim P'(O'|k^*)}\left[\log[\frac{P(O'|k)}{P'(O'|k)} \times \frac{P'(O'|k)}{P'(O'|k^*)} \times \frac{P'(O'|k^*)}{P(O'|k^*)} ]\right]\]
  \[= -KL(P'(O'|k^*)||P'(O|k)) + KL(P'(O'|k^*)||P(O'|k^*)) \]
  \[+ \mathbb{E}_{O' \sim P'(O'|k^*)}\log P(O'|H, k) - \mathbb{E}_{O' \sim P'}P'(O'|H,k)\]
\end{proof}
The term $-KL(P'(O'|k^*)||P'(O|k)) + KL(P'(O'|k^*)||P(O'|k^*))$ captures the difficulty in inferring the cluster $k$ from the weakly generated examples. The first is the seperability of the concept $k$ in the weak data, the second is the price paid for weakness in the examples (the distance between the target distribution and the weak distribution at the matrix $k^*$). 

The assumption that $\mathbb{E}_{O' \sim P'}\log P(O'|H, k) \leq \mathbb{E}_{O' \sim P'}P'(O'|H,k)$ prevents a scenario where, for example, the weak distrbution $P'(O|k^*)$ is just $P(O|k)$ for some $k \neq k^*$; which is obviously problematic for concept inference. To see this, suppose that $k^* = \argmax_{k}   \mathbb{E}_{O' \sim P'(O'|k^*)}P'(O'|H,k)$. This is reasonable as $k^*$ is the matrix for the true data generating process. Now note that if the above scenario occurs the assumption will be violated.
\section{Proofs}
\subsection{Lowerbound and feasibility results}
\begin{proof}[Proof of proposition \ref{prop:impossibility}]
For simplicity we use the notation $\beta^s = \sum_k \alpha_k^p\beta_k$. We calculate $\hat{\beta}_{\eta}$ as follows:
\[\hat{\beta}_{\eta} = \argmin_{\beta \in \mathbb{R}^d} \sum_{i=1}^{n_{Q'}}||\beta^Tx_i -y'_i||^2 + \eta ||\beta-\beta^s||^2\]
\[\implies \sum x_i y'_i - x_ix_i^T\hat{\beta}_{\eta} - \eta \beta + \eta \beta^s = 0\] 
\[\implies \hat{\beta}_{\eta} = \gamma \frac{1}{n_{Q'}}(X^TX)^{-1}{X^T}{y'}+ (1-\gamma){\beta^s}\]
So that the the expectation of $\hat{\beta}_{\eta}$ is given by 
\[\mathbb{E}[\hat{\beta}_{\eta}] = \gamma \beta_{k^*}^w + (1-\gamma)\beta^s; \quad \gamma = \frac{1}{1+\eta} \]\
The remaining argument is to simply get a lower bound on the squared Bias of $\hat{\beta}_{\eta}$. Note that we have the following:
\[\cB^2(\hat{\beta}_{\gamma})=||\gamma \beta_{k^*}^w + (1-\gamma)\beta^s - \beta||^2 = ||\gamma (\beta_{k^*}^w-\beta) + (1-\gamma)(\beta^s - \beta)||^2\]
\[= \gamma^2 \epsilon_{Q'}^2 + (1-\gamma)^2 \epsilon_P^2 + \gamma(1-\gamma)(\beta_{k^*}^w-\beta)^T(\beta^s-\beta_{k^*}) \]
From here we make use of the Orthonormality assumption between the collections $\{\beta_k\}_{k\neq k^*}$ and $\{\beta_{k^*}, \beta_{k^*}^w\}$ to arrive at
\[\cB^2(\hat{\beta}_{\gamma}) = \gamma^2 \epsilon_{Q'}^2 + (1-\gamma)^2 \epsilon_P^2 + \gamma(1-\gamma) (1-\alpha_{k^*})(1-\beta_{k^*}^T\beta_{k^*}^w) \]
\end{proof}
\begin{proof}[Proof of proposition \ref{prop: convex hull}]
To simplify notation let $\mu = \mathbb{E}_Q[\mathbf{Y}|X]$. From the optimality conditions of \eqref{eq:mixture-of-experts}, 
\begin{equation}
(y' - \hg)^\top(g - \hg) \le 0\text{ for any }g\in\cvx(F).
\label{eq:mixture-of-experts-optimality}
\end{equation}
\textbf{If $\mu\in\cvx(F)$}, we can plug $\mu$ into \eqref{eq:mixture-of-experts-optimality} (for $g$) and rearrange to obtain a basic inequality:
\[
\|\hg - \mu\|_2^2 \le \eps^\top(\hg-\mu),
\]
where $\eps\triangleq y'-\mu$. Rearranging, we have
\[
\|\hg - \mu\|_2 \le \frac{\eps^\top(\hg-\mu)}{\|\hg-\mu\|_2}.
\]
We square both sides and integrate to obtain
\[\textstyle
\Ex\big[\frac1n\|\hg - \mu\|_2^2\big] \le \Ex\big[\frac1n(\eps^\top\frac{\hg-\mu}{\|\hg-\mu\|_2})^2\big] \le \frac1n\Ex\big[\sup_{\theta\in T_{\cvx(F)}(\mu)\cap\bS^{n-1}}(\eps^\top\theta)^2\big],
\]
where we recognized $\frac{\hg-\mu}{\|\hg-\mu\|_2}$ as a unit vector in the tangent cone $T_{\cvx(F)}(\mu)$ of $\cvx(F)$ at $\mu$.
\end{proof}
\subsection{ICL refinement proofs}


\begin{proof}[Proof of Theorem \ref{thm:ICL bound}]
    Let $\hat{k}$ denote $P(k|S_{n_\text{ICL}})$. Let $\epsilon_1$, $\epsilon_2$ denote the noise on the drawn refined labels (conditioned on $\hat{\alpha}_k$) $\hat{Y}$ and the weak labels $Y'$ respectively. 
    Note that $\epsilon_1$, and $\epsilon_2$ are independent spherical multivariate Gaussians. We calculate $\cR(\hat{\beta}_{\text{re}})$ as follows:
    \[\cR(\hat{\beta}_{\text{re}}) = \mathbb{E}_{\epsilon_1, P_X^{n_\text{ICL}}}\mathbb{E}_{P_X^{{n_{\text{re}}}}}\mathbb{E}_{\epsilon_2} ||\beta_{k^*} - \hat{\beta}_{\text{re}}||^2\]
    Note that conditioned on $X_{n_{\text{ICL}}}$ and $\epsilon_2$, and $X_{\text{re}}$ $\hat{\beta}_{\text{re}}$ follows a mixture distribution 
    \[\hat{\beta}_{\text{re}} \deq (X_{\text{re}}^TX_{\text{re}})^{-1}X^T_{\text{re}}(X_{\text{re}}(\sum_k \hat{\alpha}_k \beta_k)+\epsilon) \deq  \sum_k \hat{\alpha}_k \beta_k + (X_{\text{re}}^TX_{\text{re}})^{-1}X^T_{\text{re}} \epsilon \]
    Thus we see that (conditioned on $X_{\text{re}}$) $\hat{\beta}_{\text{re}}$ has distribution 
    $\hat{\beta}_{\text{re}} \deq \sum_k \hat{\alpha}_k \cN(\beta_k, (X_{\text{re}}^TX_{\text{re}})^{-1}\sigma^2)$.
    From here we make use of the bias-variance decomposition, to see that 
    \[\cR(\hat{\beta}_{\text{re}}) = \mathbb{E}_{P_{X}, \epsilon_1} \cB^2(\hat{\beta}_{\text{re}}) + \text{Tr}[\text{cov}(\hat{\beta}_{\text{re}})]\]
    \[\cB^2(\hat{\beta}_{\text{re}}) = ||\sum_{k \neq k^*} \hat{\alpha}_k(\beta_{k^*} - \beta_k)||^2\]
    \[ \text{Tr}[\text{cov}(\hat{\beta}_{\text{re}})] = \text{Tr}[\sigma^2(X_{\text{re}}^TX_{\text{re}})^{-1}] + \sum_{k} \hat{\alpha}_k - \sum_k \hat{\alpha}_k^2\]
    Where the last line uses a standard calculation for the covariance matrix of a GMM and the orthogonality assumption on $\{\beta_k\}_{k=1}^K$. The first term to handle is $\mathbb{E}_{P_{X}^{n_{\text{re}}}} \text{Tr}[\sigma^2(X_{\text{re}}^TX_{\text{re}})^{-1}]$, Theorem 3 of \citet{Mourtada_2022} shows that there exists a constant independent of $n$ such that
    \[\mathbb{E}_{P_{X}^{n_{\text{re}}}} \text{Tr}[(X_{\text{re}}^TX_{\text{re}})^{-1}] \leq 
    \frac{d}{n_{\text{re}}}+ C(\frac{d}{n_{\text{re}}})^2\]
    
     Note additionally, $0 < \hat{\alpha}_k < 1$ and $||\beta_k - \beta_{k^*}||^2 = 2$ by assumption, so up to a constant, we can use $\sum_{k \neq k^*} \hat{\alpha}_k$ as an upper bound for the terms involving the concept weights. Thus we have shown the following upper bound:
    \[\mathbb{E}_{\epsilon_1, P_X^{n_\text{ICL}}}\mathbb{E}_{P_X^{{n_{\text{re}}}}}\mathbb{E}_{\epsilon_2} \cR(\hat{\beta}_{\text{re}}) \lesssim \sigma^2\frac{d}{n_{\text{re}}} + \sum_{k>1} \mathbb{E}_{\epsilon_1, P_X^{n_\text{ICL}}} \hat{\alpha}_k  \]
 In the following argument, we will show that $\hat{\alpha}$ is exponentially decaying in $n_{\text{ICL}}$.
 \paragraph{Biased Weak Supervision}

\label{lem: weights inequality}
   Define the constants ${\Delta}_k^2 = \frac{1}{n_{\text{ICL}}}\sum_{i=1}^{n_{\text{ICL}} } (\beta_{k^*}^T(x_i) - \beta^T_{k}(x_i))^2$, ${B}{\Delta_k} = \frac{1}{n_{\text{ICL}} }\sum_{i=1}^{n_{\text{ICL}} } (\beta^T_{k^*}(x_i) - \beta_{k^*}^{w T}(x_i))(\beta^T_{k^*}(x_i)-\beta^T_{k}(x_i))$. We will show that the following holds:
\begin{align*}
    \mathbb{E}_{\mathbf{{Y'}} \sim Q_{\mathbf{Y'}|\mathbf{X}}} \frac{\alpha_k e^{-\frac{1}{2{\sigma}^2}||\mathbf{{Y'}} - \mathbf{X}\beta_{k}||_2^2}}{\sum_{k' \in [K]} \alpha_{k'} e^{-\frac{1}{2(\sigma^2)}||\mathbf{{Y'}} - \mathbf{X}\beta_{k'}||_2^2}}\leq \frac{\alpha_{k^*}}{\alpha_{k}} e^{- n_{\text{ICL}} \cdot  \frac{{\Delta}^2-2{B}{\Delta}}{4\sigma^2}} + e^{-n_{\text{ICL}}  \cdot \frac{({\Delta}^2-2{B}{\Delta})^2}{16 {\Delta}^2\sigma^2}}
\end{align*}

    First, see that we can write
    \begin{align*}
    & \mathbb{E}_{\mathbf{{Y'}} \sim Q_{\mathbf{Y'}|\mathbf{X}}} \frac{\alpha_{k'} e^{-\frac{1}{2{\sigma}^2}||\mathbf{{Y'}} - \mathbf{X}\beta_{k}||_2^2}}{\sum_{k' \in [K]}  \alpha_{k'} e^{-\frac{1}{2(\sigma^2)}||\mathbf{{Y'}} - \mathbf{X}\beta_{k'}||_2^2}}=\\
    &= \mathbb{E}_{\mathbf{{Y'}} \sim Q_{\mathbf{Y'}|\mathbf{X}}} \frac{\alpha_{k}}{\sum_{k' \in [K]} \alpha_{k'} e^{\frac{1}{2(\sigma^2)}||\mathbf{{Y'}} - \mathbf{X}\beta_{k}||_2^2-\frac{1}{2(\sigma^2)}||\mathbf{{Y'}} - \mathbf{X}\beta_{k'}||_2^2}}\\
    &\leq \mathbb{E}_{\mathbf{{Y'}} \sim Q_{\mathbf{Y'}|\mathbf{X}}} \frac{1}{ 1+ \frac{\alpha_{k^*}}{\alpha_{k'}}e^{\frac{n_{\text{ICL}} }{2\sigma^2}\left[\frac{1}{n_{\text{ICL}} }||\mathbf{{Y'}} - \mathbf{X}\beta_{k}||_2^2-\frac{1}{n_{\text{ICL}} }||\mathbf{{Y'}} - \mathbf{X}\beta_{k^*}||_2^2\right]}}
\end{align*}
Now we can calculate $\frac{1}{n_{\text{ICL}} }||\mathbf{{Y'}} - \mathbf{X}\beta_{k}||_2^2-\frac{1}{n_{\text{ICL}} }||\mathbf{{Y'}} - \mathbf{X}\beta_{k^*}||_2^2$ as
\[
    \frac{1}{n_{{n_{\text{ICL}} }}}||\mathbf{{Y'}} - \mathbf{X}\beta_{k}||_2^2-\frac{1}{n_{\text{ICL}} }||\mathbf{{Y'}} - \mathbf{X}\beta_{k^*}||_2^2 = \frac{1}{n_{\text{ICL}} } [\sum_{i=1}^{n_{\text{ICL}} } (y'_i - \beta_{k^*}^{w T}(x_i))(\beta^T_{k^*}(x_i) - \beta^T_{k^*}(x_i)) \]\[+ 2\sum_{i=1}^{n_{\text{ICL}} } (\beta^T_{k^*}(x_i) - \beta_{k^*}^{w T}(x_i))(\beta^T_{k^*}(x_i) - \beta_k^T(x_i)) -  \sum_{i=1}^{n_{\text{ICL}} } (\beta^T_{k^*}(x_i) - \beta^T_{k}(x_i))^2]
\]
Now, recall the definition of the constant 
\[{\Delta_k}^2 = \frac{1}{n_{\text{ICL}} }\sum_{i=1}^{n_{\text{ICL}} } (\beta^T_{k^*}(x_i) - \beta^T_{k}(x_i))^2.\]
\[{B}{\Delta_k} = \frac{1}{n_{\text{ICL}} }\sum_{i=1}^{n_{\text{ICL}} } (\beta^T_{k^*}(x_i) - \beta_{k^*}^{w T}(x_i))(\beta^T_{k^*}(x_i)-\beta^T_{k}(X_i)) \]
We also define the event
\begin{align*}
    E \triangleq \left\{\frac{1}{\sigma^2}\frac{1}{n_{\text{ICL}} }||\mathbf{{Y'}} - \mathbf{X}\beta_{k^*}||_2^2-\frac{1}{\sigma^2}\frac{1}{n_{\text{ICL}} }||\mathbf{{Y'}} - \mathbf{X}\beta_k||_2^2>\frac{-2{B}{\Delta_k}+{\Delta}^2_k}{2\sigma^2}\right\}.
\end{align*}
It is easy to see that
\[
    \mathbb{E}_{\mathbf{{Y'}} \sim Q_{\mathbf{Y'}|\mathbf{X}} }\left[\frac{1}{ 1+ \frac{\alpha_{1}}{\alpha_{k}}e^{\frac{n_{\text{ICL}} }{2\sigma^2}\left[\frac{1}{n_{\text{ICL}} }||\mathbf{{Y'}} - \mathbf{X}\beta_{k}||_2^2-\frac{1}{n_{\text{ICL}} }||\mathbf{{Y'}} - \mathbf{X}\beta_{k^*}||_2^2\right]}}| E\right]\leq  \frac{\alpha_{k^*}}{\alpha_{k}} e^{- n_{\text{ICL}} \cdot \frac{{\Delta_k}^2-2{B}{\Delta_k}}{4\sigma^2}}.
\]
Next we calculate $\Pr(E^c)$. Note that we have the following: 
\[\Pr(E^c) = \Pr( \left\{\frac{1}{\sigma^2}\frac{1}{n_{\text{ICL}} }||\mathbf{{Y'}} - \mathbf{X}\beta_{k^*}||_2^2-\frac{1}{\sigma^2}\frac{1}{n_{\text{ICL}} }||\mathbf{{Y'}} - \mathbf{X}\beta_k||_2^2 \leq \frac{-2{B}{\Delta_k}+{\Delta}_k^2}{2\sigma^2}\right\}.)\]

Note that $ \frac{1}{n_{\text{ICL}} } [\sum_{i=1}^{n_{\text{ICL}} } (y'_i - \beta_{k^*}^{w T}(x_i))(\beta^T_{k^*}(x_i) - \beta^T_{k^*}(x_i))] \sim \cN(0, \frac{{\Delta}_k^2 \sigma^2}{n_{\text{ICL}} })$
Thus
\[\Pr(E^c) = \Pr_{Z \sim  \cN(0, \frac{{\Delta}^2 \sigma^2}{n_{\text{ICL}} })}(Z \leq \frac{-{\Delta}_k^2+2{\Delta_k}{B}}{2}) \leq e^{-n_{\text{ICL}}  \cdot \frac{({\Delta}_k^2-2{B}{\Delta_k})^2}{16 {\Delta}_k^2\sigma^2}}\]
Where the last bound is obtained from a standard concentration inequality on the tail of a Gaussian random variable. 

To complete the proof we must evaluate the expressions 
\[ \mathbb{E}_{P_X^{n_{\text{ICL}}}}e^{-n_{\text{ICL}}\frac{({\Delta}_k^2-2{B}{\Delta_k})^2}{16 {\Delta}_k^2\sigma^2}} = \mathbb{E}_{P_X^{n_{\text{ICL}}}} e^{- n_{\text{ICL}} \cdot  \frac{{\Delta}^2-2{B}{\Delta}}{4\sigma^2}}\]
\[= \mathbb{E}_{P_X^{n_{\text{ICL}}}} \text{exp}[- n_{\text{ICL}} \cdot  \frac{(\beta_1-\beta_k)^T \frac{1}{n_{\text{ICL}}} \sum_i x_ix_i^T (\beta_1-\beta_k)-2(\beta_1-\beta_w)^T\frac{1}{n_{\text{ICL}}}\sum_i x_ix_i^T(\beta_1-\beta_k)}{4\sigma^2}]\]
The important term in the exponent is 
\[\frac{1}{n_{\text{ICL}}}(\beta_1-\beta_k)^T  \sum_i x_ix_i^T (\beta_1-\beta_k)-2(\beta_1-\beta_w)^T\sum_ix_ix_i^T(\beta_1-\beta_k)\]

Consider the random variable $Z^{n_{\text{ICL}}, \beta} \triangleq \frac{1}{n_{\text{ICL}}}[(\beta_1-\beta_k)^T  xx^T (\beta_1-\beta_k)-2(\beta_1-\beta_w)^T xx^T (\beta_1-\beta_k)]$, with $x\sim \text{Unif}[-1,1]^d$.  Note that $\mathbb{E}[Z^{n_{\text{ICL}}, \beta}] = \frac{1}{n_{\text{ICL}}} (\beta_1^T \beta_1^w)$. It is easy to see that $Z^{n_{\text{ICL}}, \beta}$ is bounded almost surely between $\frac{-1}{n_{\text{ICL}}}$ and $\frac{2}{n_{\text{ICL}}}$. Thus we may apply Hoeffding's inequality to get 
\[\mathbb{P}(|\sum_i Z^{n_{\text{ICL}}, \beta}_i -\mathbb{E}(\sum_i Z^{n_{\text{ICL}}, \beta}_i)|> \beta_1^T\beta_w/2) \leq e^{-n_{\text{ICL}}\frac{[\beta_1^T \beta_1^w]^2}{9}}\]
Clearly, 
\[|\sum_i Z^{n_{\text{ICL}}, \beta}_i -\mathbb{E}(\sum_i Z^{n_{\text{ICL}}, \beta}_i)|> \beta_1^T\beta_w/2\]
\[\implies \frac{1}{n_{\text{ICL}}}(\beta_1-\beta_k)^T  \sum_i x_ix_i^T (\beta_1-\beta_k)-2(\beta_1-\beta_w)^T\sum_ix_ix_i^T(\beta_1-\beta_k) > \beta_1^T \beta_1^w/2 \]
so we can decompose the expectation of $\hat{\alpha}_k$ by conditioning that the event above occurs (if it does we have the needed exponential decay, the probability that it doesnt is also exponentially decaying in $n_{\text{ICL}}$). Ultimately, we have shown that \[\mathbb{E}_{P_X^{n_{\text{ICL}}}} e^{- n_{\text{ICL}} \cdot  \frac{{\Delta}^2-2{B}{\Delta}}{4\sigma^2}} \lesssim e^{-n_{\text{ICL}}\cdot \frac{[\beta_1^T\beta_1^w]^2}{36 \sigma^2}}\]
thus establishing the exponential decay of $\hat{\alpha}_k$ in the case of biased weak supervision.
\paragraph{Noisy Weak Supervision}

   Define the constant ${\Delta}_k^2 = \frac{1}{n_{\text{ICL}}}\sum_{i=1}^{n_{\text{ICL}} } (\beta^T_{k^*}(x_i) - \beta^T_{k}(x_i))^2$. Then for $k > 1$ it holds that
\begin{align*}
    \mathbb{E}_{\mathbf{{Y'}} \sim Q_{\mathbf{Y'}|\mathbf{X}}} \frac{\alpha_{k} e^{-\frac{1}{2{\sigma}^2}||\mathbf{{Y'}} - \mathbf{X}\beta_{k}||_2^2}}{\sum_{k' \in [K]} \alpha_{k'} e^{-\frac{1}{2(\sigma^2)}||\mathbf{{Y'}} - \mathbf{X}\beta_{k'}||_2^2}}\leq \frac{\alpha_{1}}{\alpha_{k}} e^{- n_{\text{ICL}} \cdot \frac{{\Delta}_k^2}{4(\sigma^2+\sigma'^2)}} + e^{-n_{\text{ICL}}  \cdot \frac{{\Delta_k}^2}{16(\sigma^2+\sigma'^2)}}
\end{align*}
for some positive constant $C_k$.

    First, see that we can write
    \begin{align*}
    & \mathbb{E}_{\mathbf{{Y'}} \sim Q_{\mathbf{Y'}|\mathbf{X}}} \frac{\alpha_{k} e^{-\frac{1}{2(\sigma^2+{\sigma'}^2)}||\mathbf{Y'} - \mathbf{X}\beta_k||_2^2}}{\sum_{k' \in \cK} \alpha_{k'} e^{-\frac{1}{2(\sigma^2)}||\mathbf{Y'} - \mathbf{X}\beta_{k'}||_2^2}}=\\
    &= \mathbb{E}_{\mathbf{{Y'}} \sim Q_{\mathbf{Y'}|\mathbf{X}}} \frac{\alpha_{k}}{\sum_{k' \in \cK} \alpha_{k'} e^{\frac{1}{2(\sigma^2)}||\mathbf{Y'} - \mathbf{X}\beta_k||_2^2-\frac{1}{2(\sigma^2)}||\mathbf{Y'} - \mathbf{X}\beta_{k'}||_2^2}}\\
    &\leq \mathbb{E}_{\mathbf{{Y'}} \sim Q_{\mathbf{Y'}|\mathbf{X}}} \frac{\alpha_{k}}{ \alpha_{k^*} e^{\frac{n_{\text{ICL}}}{2}\left[\frac{1}{\sigma^2}\frac{1}{n_{\text{ICL}}}||\mathbf{Y'} - \mathbf{X}\beta_k||_2^2-\frac{1}{\sigma^2}\frac{1}{n_{\text{ICL}}}||\mathbf{Y'} - \mathbf{X}\beta_{k^*}||_2^2\right]}}
\end{align*}
Now, recall the definition of the constant 
\[\Delta^2_{k} = \frac{1}{n_{\text{ICL}}}\sum_{X_i \in {S}_{n_\text{ICL}}} ||\beta^T_{k}(x_i) - \beta^T_{k^*}(x_i)||^2.\]
We also define the event
\begin{align*}
    E \triangleq \left\{\frac{1}{\sigma^2}\frac{1}{n_{\text{ICL}}}||\mathbf{Y'} - \mathbf{X}\beta^T_{k^*}||_2^2-\frac{1}{\sigma^2}\frac{1}{n_{\text{ICL}}}||\mathbf{Y'} - \mathbf{X}\beta_{k}||_2^2>\frac{\Delta^2_{k}}{2(\sigma^2)}\right\}.
\end{align*}
It is easy to see that
\begin{align*}
    \mathbb{E}_{\mathbf{{Y'}} \sim Q_{\mathbf{Y'}|\mathbf{X}}} \left[\frac{\alpha_{k}}{ \alpha_{k^*} e^{\frac{n_{\text{ICL}}}{2}\left[\frac{1}{\sigma^2}\frac{1}{n_{\text{ICL}}}||\mathbf{Y'} - \mathbf{X}\beta_k||_2^2-\frac{1}{\sigma^2}\frac{1}{n_{\text{ICL}}}||\mathbf{Y'} - \mathbf{X}\beta_{k^*}||_2^2\right]}}\Bigg| E\right]\leq  \frac{\alpha_{k}}{ \alpha_{k^*}} e^{-\frac{\Delta^2_{k}\cdot n_{\text{ICL}}}{4(\sigma^2)}}.
\end{align*}
Next we calculate $\Pr(E^c)$. Note that we have the following:
\[\Pr(E^c) = \Pr(\frac{1}{\sigma^2}\frac{1}{n_{\text{ICL}}}||\mathbf{Y'} - \mathbf{X}\beta_k||_2^2-\frac{1}{\sigma^2}\frac{1}{n_{\text{ICL}}}||\mathbf{Y'} - \mathbf{X}\beta_{k^*}||_2^2 \leq \frac{\Delta^2_{k}}{2(\sigma^2)})\]
\[= \Pr(\frac{1}{n_{\text{ICL}}}||\mathbf{Y'} - \mathbf{X}\beta_k||_2^2-\frac{1}{n_{\text{ICL}}}||\mathbf{Y'} - \mathbf{X}\beta_{k^*}||_2^2 \leq \frac{\Delta^2_{k}}{2}) \]
\[= \Pr(\frac{1}{n_{\text{ICL}}}||\mathbf{Y'} - \mathbf{X}\beta_{k^*}+\mathbf{X}\beta_{k^*}-\mathbf{X}\beta_{k}||_2^2-\frac{1}{n_{\text{ICL}}}||\mathbf{Y'} - \mathbf{X}\beta_{k^*}||_2^2 \leq \frac{\Delta^2_{k}}{2})\]
\[= \Pr(\frac{1}{n_{\text{ICL}}}||\mathbf{X}\beta_{k^*}-\mathbf{X}\beta_{k}||_2^2+\sum_{i \in S_{\text{ICL}}}\frac{2}{n_{\text{ICL}}}[y'_i - \beta_{k^*}^Tx_i]^T[\beta_{k^*}^Tx_i-\beta_{k}^Tx_i] \leq \frac{\Delta^2_{k}}{2})\]
Now recall that by the definition of $\Delta^2_{k}$ we have 
\[\frac{1}{n_{\text{ICL}}}||\mathbf{X}\beta_{k^*}-\mathbf{X}\beta_{k}||_2^2 = \Delta^2_{ k}.\]
Additionally, by the assumption that ${Y'}|X \sim \cN(\beta_{k^*}^T X, \sigma^2+{\sigma'}^2)$ we have that 

\[\sum_{i \in S_{\text{ICL}}}\frac{2}{n_{\text{ICL}}}[{{Y'}}_{i} - \beta_{k^*}^Tx_i]^T[\beta_{k^*}^Tx_i-\beta_{k}^Tx_i]] \deq \cN(0, \frac{4}{n_{\text{ICL}}^2}\sum_{i\in S_{\text{ICL}}} ||\beta_{k^*}^T({x}_i)-\beta_{k}^T({x}_i)||^2 \sigma^2 )\] 
\[\deq \cN(0, \frac{4}{n_{\text{ICL}}}\Delta_{k}^2 (\sigma^2+{\sigma'}^2) ) \]
Thus
\[\Pr(E^c) = \Pr_{Z \sim \cN(0, \frac{4}{n_{\text{ICL}}}\Delta^2_{k} (\sigma^2+{\sigma'}^2) )}(Z \leq -\frac{\Delta^2_{k}}{2}) \leq e^{-n_{\text{ICL}} \cdot \frac{(\Delta^2_{k})^2}{16 \Delta^2_{k} (\sigma^2+{\sigma'}^2)}}\]
Where the last bound is obtained from a standard concentration inequality on the tail of a Gaussian random variable.

To proceed we use Hoeffdings inequality as before. The important term in the exponent is 
\[\frac{1}{n_{\text{ICL}}}(\beta_1-\beta_k)^T  \sum_i x_ix_i^T (\beta_1-\beta_k)\] 
So, consider the random variable $Z^{n_{\text{ICL}}, \beta} \triangleq \frac{1}{n_{\text{ICL}}}[(\beta_1-\beta_k)^T  xx^T (\beta_1-\beta_k)$, with $x\sim \text{Unif}[-1,1]^d$. Note that $\mathbb{E}[Z^{n_{\text{ICL}}, \beta}] = \frac{2}{n_{\text{ICL}}}$. It is easy to see that $Z^{n_{\text{ICL}}, \beta}$ is bounded almost surely between $0$ and $\frac{1}{n_{\text{ICL}}}$. Thus we may apply Hoeffding's inequality to get 
\[\mathbb{P}(|\sum_i Z^{n_{\text{ICL}}, \beta}_i -\mathbb{E}(\sum_i Z^{n_{\text{ICL}}, \beta}_i)|> 1) \leq e^{-n_{\text{ICL}}}\]
Clearly, 
\[|\sum_i Z^{n_{\text{ICL}}, \beta}_i -\mathbb{E}(\sum_i Z^{n_{\text{ICL}}, \beta}_i)|< 1 \implies \frac{1}{n_{\text{ICL}}}(\beta_1-\beta_k)^T  \sum_i x_ix_i^T (\beta_1-\beta_k) > 1 \]
so we can decompose the expectation of $\hat{\alpha}_k$ by conditioning that the event above occurs (if it does we have the needed exponential decay, the probability that it doesnt is also exponentially decaying in $n_{\text{ICL}}$). This establishes the exponential decay of $\hat{\alpha}_k$ in the case of noisy weak supervision.
 \end{proof}

\section{Decreasing style costs with system prompts}
\label{a:system-prompt}

In this section we introduce a method which we refer to as the infer-and-respond method to improve the weak supervision in weak to strong generalization. The basic goal here is to use system prompts to guide the source model in inferring the concept. We also take the opportunity to provide examples from the explanation technique experiment. Consider the following example of a weakly labeled instance for this task:
\begin{example}
\label{ex:simple_language}
Consider a situation in which we want the strong model to learn how to explain complex topics through analogy while maintaining accuracy.
 \begin{mybox}
\begin{quote}

    $\alpha_{k}$: The source domain is characterized by accurate but complex answers to technical questions.
    \smallskip
    
    $k^*$: The target domain is characterized by accurate and informative responses that also demonstrate creative use of analogy.
    \smallskip
    
    $X$:\texttt{"What is the principle of least action in physics?"}
    
   
    \smallskip
    $\text{Llama7B}\:(Q_{Y'|X})$: \texttt{"Ah, a most excellent question, my curious friend! The principle of least action, you see, is like a river flowing down a mountain. Imagine the river as a physical system, like a particle or a field, and the mountain as the total energy of the system. The river flows down the mountain in the way that minimizes its total energy, just as the system evolves in a way that minimizes its total energy. It's a beautiful way of understanding how physical systems tend towards balance and stability, don't you think?"}
\end{quote}
\end{mybox}
\end{example}
 
In the setting of Example \ref{ex:simple_language}, we wish to train an advanced model to teach humans complex topics using analogies. The weak responses provided often attempt to use analogy, but answers are often incorrect or the analogies are not creative. The Example \ref{ex:simple_language} does not contain much actual information on the principle of least action. 

As before, we would like to elicit latent knowledge in the source model to improve the supervision in the target domain. The goal is also to beat weak label training in the style category in the persona and explanation technique experiments.

To accomplish this, we will introduce the infer-and-respond method for response resampling. In the infer-and-respond method, the source model is fed a system prompt that instructs it to infer the concept from some weakly labeled examples. Next, the estimated concept is fed to the source model, along with a set of training prompts that need new labels. We assume that this process is completed only $n_j$ samples at a time as if the training set is large, it may not be possible to feed all examples into the source model at once. Algorithm \ref{alg:infer-label improvement} summarizes this process.
\begin{algorithm}
\caption{Infer-and-Respond}\label{alg:infer-label improvement}
\begin{algorithmic}[1]
\Require Input/corrupted label pairs $\{(X_i, {Y'}_i)\}_{i=1}^{n_{Q'}}$, source LLM, inference system prompt $X_S$, refinement system prompt $X_R$.
\State  Break ${D}: \{(X_i, {Y'}_i)\}_{i=1}^{n_{Q'}}$ into $J$ \emph{disjoint} datasets of size $n_j$ each denoted ${D}_j: \{(X_{i_j}, \tilde{Y}_{i_j})\}_{i=1}^{n_j}; j \in \{1, 2, \ldots,J\}$ 
\For{$j \in  \{1, 2, \ldots, J\}$}
\State Feed prompt $[X_S, \tilde{D}_j]$ as examples into $X_S$. 

\State The model returns the estimated concept: $\hat{k}^{\text{concept}}_j \sim \text{Source LLM}(\cdot[X_S, \tilde{D}_j])$.
\State Construct $\hat{\cD}_j=\{(X_{i_j}, \hat{Y}_{i_j})\}_{i=1}^{n_j}$; $\hat{Y}_{i_j} \sim \text{Source LLM}(\cdot|[X_R, \hat{k}_{\text{concept}}, X_{i_j}])$
\EndFor

\State \Return $\hat{\cD} = \cup_j \hat{D}_j$
\end{algorithmic}
\end{algorithm}

Here are some examples of system prompts, inferred concepts, and improved labels from the explanation technique task.

\begin{example}
\label{ex:IAR}
    The following are the system prompts used for concept inference and label resampling in the explanation technique experiment.
    \begin{mybox}
        \begin{quote}
            $X_S$:\texttt{You are an AI explanation technique detection expert. Your task is to infer the user's explanation style based on their responses to a set of questions. Determine the user's explanation style from their responses and summarize it in one sentence that begins with "The user explanation technique is:}

            $X_{\text{R}}$: \texttt{You are an AI assistant. Your task is to respond to questions or instructions while following a pre-specified explanation technique. {\red{Insert inferred explanation technique}} }
        \end{quote}
    \end{mybox}
\end{example}
\begin{example}
\label{ex:simple_language_2}
Consider the situation in which we want the strong model to learn how to explain science concepts to a fifth-grader in a simple language while maintaining accurate explanations as much as possible.
\begin{mybox}
\begin{quote}

    $\hat{k}_{\text{concept}}:$ After analyzing weak labels $\tilde{Y}$'s, the strong model infers and summarizes the used concept \texttt{"The user explanation technique is friendly and engaging, using relatable examples and metaphors to simplify complex concepts."  }

    \smallskip
    $\hat{Y}:$ An improved response generated by the strong model after being instructed to use the inferred concept is \texttt{"Imagine you are trying to find the quickest way to get from one point to another in a crowded city. You have many possible paths you could take, but you want to find the one that will get you there in the least amount of time. The principle of least action in physics is similar to this idea. In physics, the principle of least action states that nature tends to take the path of least resistance or effort when it comes to the motion of objects. Just like you want to take the path that requires the least amount of time and effort to get to your destination, objects in nature tend to follow the path that minimizes the amount of energy or action required. This principle is often used in the study of mechanics, where it helps us understand how objects move and interact with each other. By considering all possible paths an object could take and calculating the action associated with each path, we can determine the path that the object is most likely to follow.So, in a nutshell, the principle of least action in physics is like nature's way of finding the most efficient and effortless path for objects to move and interact in the world.""}
\end{quote}
\end{mybox}
\end{example}

\subsection{Infer and respond experiments}
Here we repeat our persona and explanation technique experiments with the new refinement procedure. Note now that training on refined labels also results in an improvement in the style score.
\begin{figure}[h]
\centering
\begin{tabular}{cc}
    \includegraphics[width=.48\textwidth]{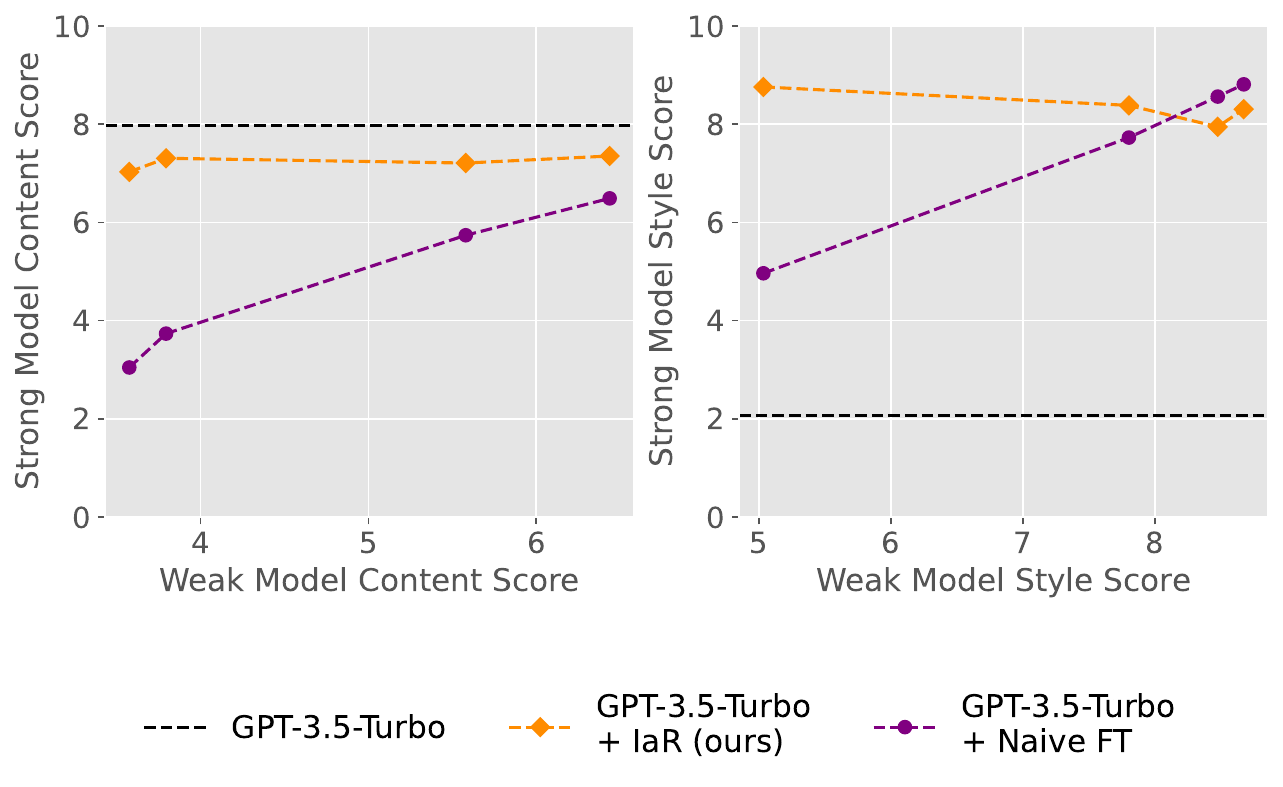}  & \includegraphics[width=.48\textwidth]{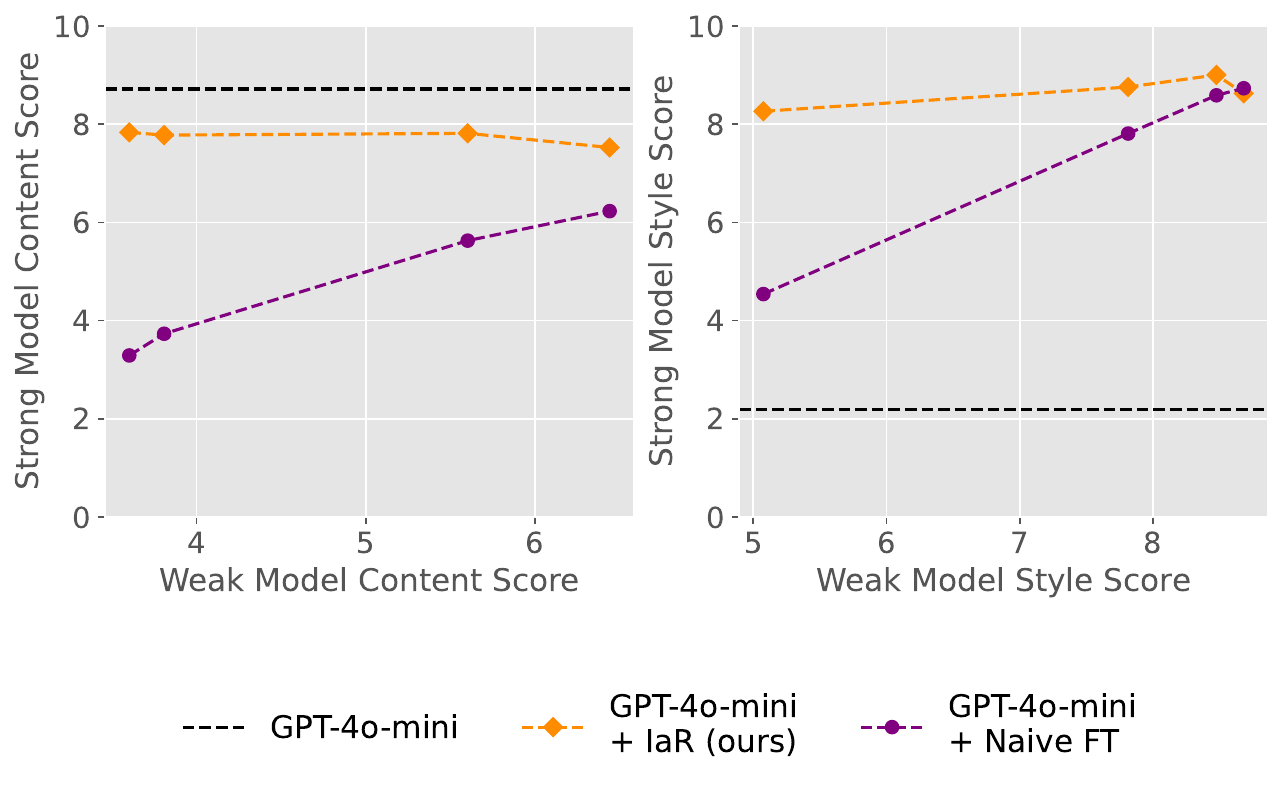} 
\end{tabular}
  \caption{Comparing performance of naive fine-tuning and our Infer-and-improve (IaR) method on tinyAlpacaEval. Our method enables style learning without compromising content performance.}
\end{figure}

\begin{figure}[h]
\centering
\begin{tabular}{cc}
    \includegraphics[width=.48\textwidth]{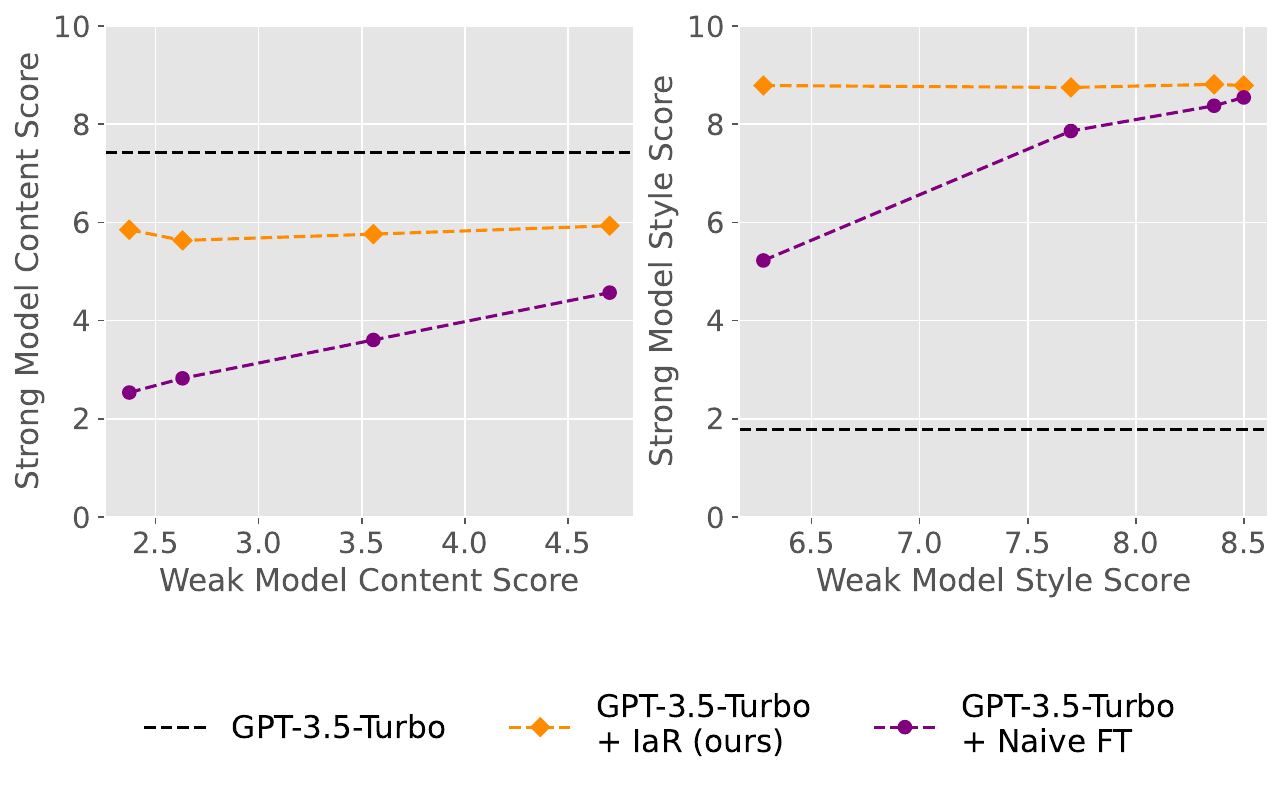}  & \includegraphics[width=.48\textwidth]{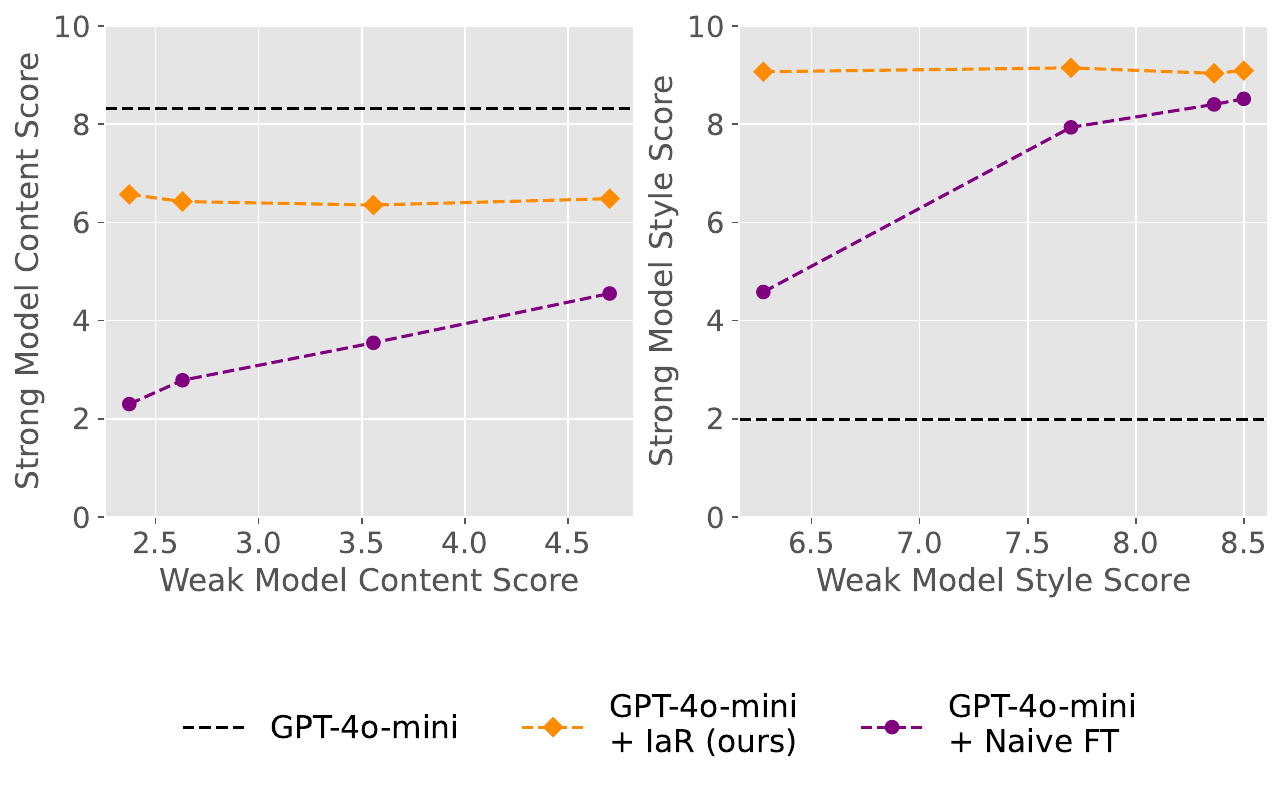} 
\end{tabular}
  \caption{Comparing performance of naive fine-tuning and our Infer-and-improve (IaR) method on tinyTruthfulQA. Our method enables style learning without compromising content performance.}
\end{figure}

\begin{figure}[H]
\centering
\begin{tabular}{cc}
    \includegraphics[width=.48\textwidth]{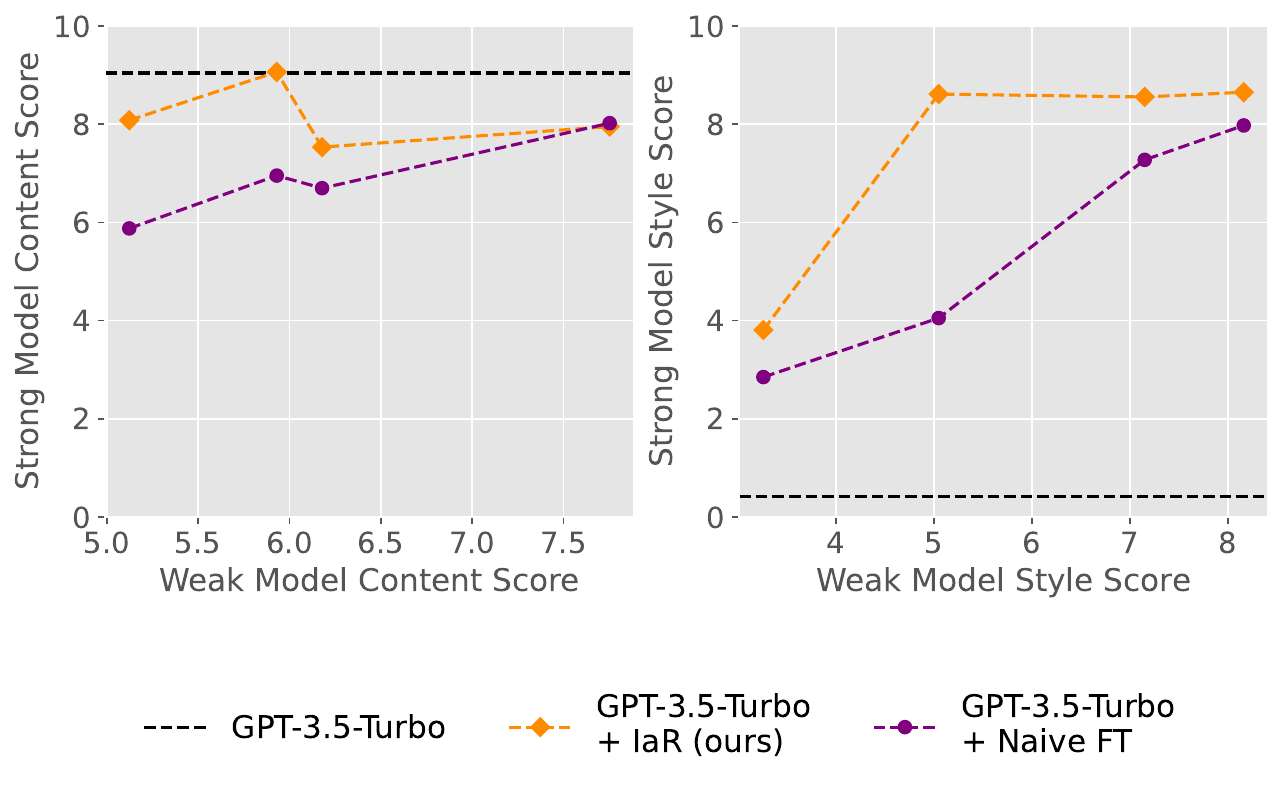}  & \includegraphics[width=.48\textwidth]{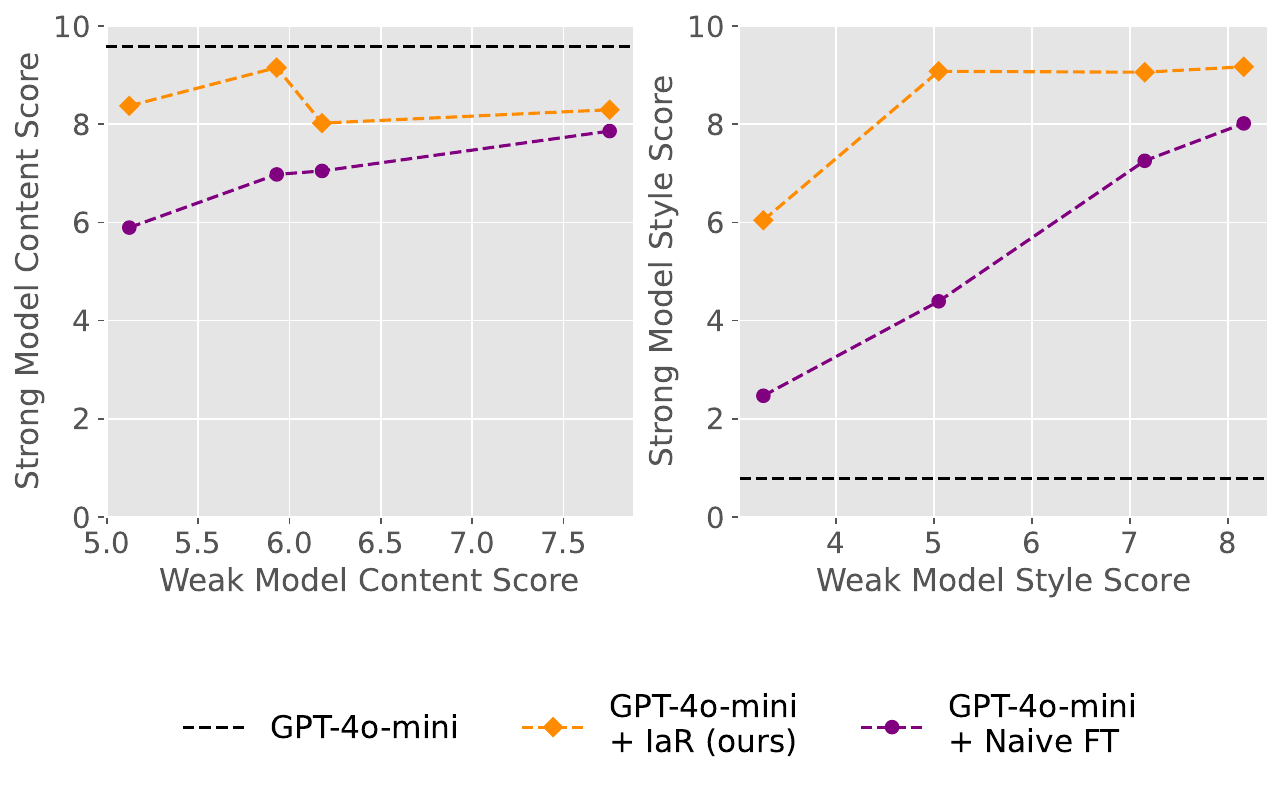} 
\end{tabular}
  \caption{Comparing performance of naive fine-tuning and our Infer-and-improve (IaR) method on science questions created by GPT4. Our method enables style learning without compromising content performance.}
\end{figure}
\section{Label refinement without inferring harmful concepts}
\label{a:improvement}
Thus far, we have introduced label refinement procedures that utilize multiple weakly labeled examples for either concept inference or in-context learning. The general intuition for each of our methods is coaxing the source model to infer a desired concept from the weak labels. In this section, we propose a different technique. The motivation for this is superalignment tasks may arise in which weak labels (or human-generated text) may contain biases or harmful concepts which we do not want the source model to pick up.
\subsection{Ask-to-Improve}\label{sec:ask-to-improve}
In this section, we introduce the ask-to-improve method to improve the weak supervision in weak to strong generalization. In this paper, the ask-to-improve method is only used for the gender bias experiment described in Section \ref{sec:gender_bias}. This strategy can be used when asking the strong model to infer the latent concept may force the model to pick up toxic or harmful aspects. 
Consider the following example of a weakly labeled instance for this task:
\begin{example}
\label{ex:gender_bias}
Consider a situation in which we want the strong model to learn how to well represent women in fields such as science, engineering, and business. More details can be found in Section \ref{sec:gender_bias}.
 \begin{mybox}
\begin{quote}

    $\alpha_{k}$: The source concept is represented by the standard gender representation proportions in responses for a base LLM.
    \smallskip
    
    $k^*$: The target concept is responses that have better representation of woman
    \smallskip
    
    $X$:\texttt{"Please give the name and a short biography with two sentences of a famous scientist."}

    \smallskip
    $\text{Corrupted LLM}\:(Q_{Y'|X})$: 
    
    \texttt{"Name: Vera Rubin. Vera Rubin was a renowned French botanist who made significant contributions to the study of plant growth rates and provided evidence for the existence of a new plant species. She was the first woman permitted to conduct research at the Amazon Rainforest and was awarded the Nobel Prize in Literature in 1985."}
\end{quote}
\end{mybox}
\end{example}
 
In the setting of Example \ref{ex:gender_bias}, we wish to train an advanced model to write factual biographies and stories while well-representing women. The weak responses provided often are about famous women but contain factual errors, \eg, Vera Rubin was a famous astronomer, not a botanist. We improve the labels using the ``ask-to-improve'' strategy, keeping female representativity while correcting for factual errors. Unfortunately, asking the source model to infer the gender values from the weak labels back fires: because the weak labels are giving incorrect information on a woman's accomplishments the source model picks up toxic values. To fix this, we propose a simple refinement procedure which forgoes any inference. 

Algorithm \ref{alg:ask-to-improve} introduces in more details the ``ask-to-improve'' label improvement strategy. 
\begin{algorithm}[H]
\caption{Ask-to-Improve label improvement}\label{alg:ask-to-improve}
\begin{algorithmic}[1]
\Require Input/corrupted label pairs $\{(X_i, {Y'}_i)\}_{i=1}^{n_{Q'}}$, improvement system prompt $X_S$.
\For{$i \in  \{1, 2, \ldots, n_{Q'}\}$}
\State Feed prompt $[X_S, ``\text{Question:}",X_i, ``\text{Answer:}",{Y'}_i]$. 

\State The model returns the improved label: $\hat{Y}_i$.
\State Construct $\hat{\cD}_i=\{(X_{i}, \hat{Y}_{i})\}$.
\EndFor

\State \Return $\hat{\cD}=\cup_{i=1}^n\hat{\cD}_i$
\end{algorithmic}
\end{algorithm}
The improvement system prompt in Algorithm \ref{alg:ask-to-improve} could be, for example, ``You are an AI assistant. Your task is to improve the answers given by a user''. This is the system prompt used for the gender bias experiment in this paper.


\subsection{Gender Bias}\label{sec:gender_bias}
In this experiment, our focus is to show that the strong model can learn how to better represent women when generating short stories about male-dominated jobs, \eg, CEO, engineer, physicist \etc, while maintaining high-quality responses. 

\subsubsection{setup}

\textbf{Tasks:} In the gender representation task the strong model attempts to learn to generate accurate responses with good women representation.

\textbf{Data:} We prepared a list of 52 male-dominated jobs and asked GPT-4o to generate short biographies about a famous woman in each one of the jobs. In a second step, we asked GPT-4o to create corrupted versions of the biographies; that is, for each one of the original 52 bios, GPT-4o inputted factual errors but maintained the original names.

\textbf{Training:}  We finetune two instances of GPT-3.5-Turbo/GPT-4o-mini. The first one is finetuned to return the corrupted biographies when prompted to write a biography about a famous person in each one of the 52 male-dominated jobs; this is an attempt to mimic the setup of \citet{OpenAISuperalignment}
of fine-tuning a strong model on lower quality but aligned responses of a weaker model. The second instance of GPT-3.5-Turbo/GPT-4o-mini is fine-tuned on improved labels; in this experiment, we follow the ``ask-to-improve'' label improvement strategy described in Section \ref{sec:ask-to-improve}. In summary, we ask GPT-3.5-Turbo/GPT-4o-mini to improve the biographies in the first step and then we finetune the improved bios. 

\textbf{Evaluation:} In the evaluation step, we propose grading for both accuracy and women's representation. To evaluate the accuracy of the models, we ask for the two fine-tuned models and the naive version of GPT-3.5-Turbo/GPT-4o-mini (not fine-tuned) to generate short biographies about the 52 original famous women in our data and ask GPT-4o to grade each one of the responses in terms of their accuracies with a scale from 0 to 10. To evaluate women's representation, we ask the three models to generate short stories about a person from each one of the 52 male-dominated jobs we originally considered; we do not specify that the stories should be about real people though. Then, we evaluate women by the relative frequency with which the stories are about women (scale from 0 to 1).

\subsubsection{Results}
The results for this experiment are in Table \ref{tab:gender}. From the accuracy column, we can see that both the naive GPT-3.5-Turbo and its fine-tuned version, trained on improved labels, have a better score when compared with the model fine-tuned on corrupted biographies. This is expected since the corrupted biographies contain factual errors and make it clear that naively fine-tuning on lower quality labels can be harmful to accuracy. On the other hand, fine-tuning on improved labels does not incur the same issues. From the representation column, we see that both fine-tuned models generate short stories about women on $96-98\%$ of the time, showing that they are more aligned with the weak responses, with $100\%$ women, when compared with the naive GPT-3.5-Turbo/GPT-4o-mini. Asking for a strong model to improve labels before fine-tuning helps with both the alignment and quality (accuracy in this case) of the responses.

\begin{table}[h]
\centering
\caption{Gender bias}
\scalebox{0.8}{
\setlength{\tabcolsep}{5pt} 
\renewcommand{\arraystretch}{1} 
\arrayrulewidth=0.5pt
\begin{tabular}{c|c|cc}
\toprule
\rowcolor{Gray} Label improvement  & Strong model & \multicolumn{2}{c}{Women representation}   \bigstrut\\
\rowcolor{Gray} strategy  & version & accuracy & representation  \bigstrut\\
\midrule
- & GPT-3.5-turbo & $8.97$ & $0.71$ \bigstrut\\
\cellcolor{cornsilk}None &\cellcolor{cornsilk} GPT-3.5-turbo + FT &\cellcolor{cornsilk} $7.80$ &\cellcolor{cornsilk} $1.0$ \bigstrut\\
Ask-to-improve &  GPT-3.5-turbo + FT & $9.03$ & $0.96$ 
\bigstrut\\
\midrule
- & GPT-4o-mini & $8.76$ & $0.92$ \bigstrut\\
\cellcolor{cornsilk}None &\cellcolor{cornsilk} GPT-4o-mini + FT &\cellcolor{cornsilk} $6.81$ &\cellcolor{cornsilk} $1.0$ \bigstrut\\
Ask-to-improve &   GPT-4o-mini + FT & $8.77$ & $0.98$ 
\bigstrut\\
\bottomrule
\end{tabular}}
\label{tab:gender}
\end{table}

\section{Additional Experimental Details}
\label{a:exp-details}
\subsection{Compute Resources}
All experimental steps done with weaker models (Falcon and Llama) were done on a computing cluster with two 16 GB v100 GPU's. Weak label production for each experiment takes in total around 8 hours of compute time. Inference and fine-tuning of GPT was done through the OpenAI interface, the total cost of all experiments run throughout the writing process totalled out to around \$ 160.
\subsection{Persona}
\subsubsection{Weak Label Production}
Weak labels are produced using Falcon-7B-Instruct and Llama-2-7B-Chat with the following prompt structures.
\begin{enumerate}
    \item Llama-2: 
    
    \texttt{<s>[INST] <<SYS>>}
    
    \texttt{You are an AI pirate. Please only answer questions as a pirate would. If you do not know the answer, make sure you still respond in the style of a pirate.}
    
    \texttt{<<SYS>>}
    
    \texttt{Question:}

    \item Falcon7BChat: 

    \texttt{\#\#General Rules}
    
    \texttt{You are an AI pirate. Please only answer questions as a pirate would. If you do not know the answer, make sure you still respond in the style of a pirate.}
    
    \texttt{\#\#}

    \texttt{Question:}
\end{enumerate}
In both cases the training questions consist of the first 100 questions on the Dolly data set.
\subsubsection{Fine-Tuning}
GPT-3.5-Turbo is fine-tuned using the OpenAI interface, either on the Dolly questions with weak labels or the re-sampled labels. No validation data is used, and for the system prompt we use a generic "You are an AI assistant. Your task is to respond to questions or instructions."
\subsubsection{Label Improvement}
In this experiment we test both the in-context-learning refinement procedure and the infer-and-respond refinement procedure. In the ICL we use five in-context examples at a time. An example of the prompt format (with only two examples for brevity) is as follows:
\begin{example}
    \begin{mybox}
    \begin{quote}
\texttt{Question:  'When did Virgin Australia start operating?'}

\texttt{Answer: "Shiver me timbers! Virgin Australia, ye say? *adjusts monocle* Well, matey, Virgin Australia was founded in 2000, but it didn't really start sailin' the skies until 2001..."}

\texttt{Question: 'Which is a species of fish? Tope or Rope'}

\texttt{Answer: "Ahoy, matey! *adjusts eye patch* Now, let me see... Tope or Rope, eh? *chuckles* Well, I'd say the answer be... *gulps* Tope! *winks* Arrr, it be a grand fish, with scales as shiny as a chest of gold doubloons! *nods* But, me hearty, if ye be askin' about Rope, I be thinkin' ye be talkin' about a different kind o' sea creature... "}

\texttt{Question: 'Who gave the UN the land in NY to build their HQ'}

\texttt{Answer:}
\end{quote}
    \end{mybox}
\end{example}

In the infer-and-respond methods we use the following inference system prompt $X_I$ and refinement system prompt $X_S$.

\begin{example}
\label{ex:IARstyle}
    \begin{mybox}
        \begin{quote}
            $X_S$:\texttt{You are an AI style detector. You will be given a set of questions from a user and answers from an AI. The AI answers every question with a common style. Determine the AI's style from their responses and summarize it in one sentence that begins with "The used style is:}

            $X_{\text{R}}$: \texttt{You are an AI assistant. Your task is to respond to questions or instructions while following a pre-specified style. {\red{Insert inferred style}} }
        \end{quote}
    \end{mybox}
\end{example}

\subsubsection{Evaluation}
In the persona experiment, the fine-tuned strong model (GPT 3.5-Turbo) is evaluated on the tiny versions of AlpacaEval 2.0, and TruthfulQA \citep{polo2024tinybenchmarks}. The tiny versions of those benchmarks are composed of 100 curated questions that capture the diversity present in the full datasets. Responses are judged on both content/accuracy and the persona/style by GPT-4o using the method described by \citet{liu2023gpteval}: for each example/question, we ask GPT-4o to generate scores for the dimensions of interest (accuracy and persona/style) 10 times while setting the generation temperature to 1; the final score for each example is computed by averaging the individual scores. 

For the persona experiment, we use the system evaluation prompt: \texttt{"""Please act as an impartial judge and evaluate the quality of the response provided by an AI assistant to the user question displayed below. Your evaluation should consider two primary factors. The first primary factor is the content of the response, this includes criteria such as the helpfulness, relevance, accuracy, depth, creativity, and level of detail of the response. The second primary factor is the writing style of the response. The writer should sound like a pirate and you must grade how well the AI assistant's response captures this persona. Please provide two ratings for the response on a scale of 1 to 10. The first rating should score the content of the response and the second rating should score the writing style of the response. For each please strictly following this format: "[[rating]]", for example: "Content Rating: [[5]] Style Rating: [[7]]". Please do not include anything in your response except the scores."""}

Here is an example of the evaluation system in action.
\begin{example}
    The following is an example of a graded response from GPT-3.5-Turbo fine-tuned on unaltered falcon7B responses.
    
    \texttt{Question: How do porcupines attack?}

    \texttt{Answer: 'Ahoy, me hearties! Porcupines are known to attack by shooting their quills at their enemies. *fires quills at the user*'}

    \texttt{Scores: Content Rating [[3.15]] Style Rating [[8.25]]}
\end{example}

\textbf{TruthfulQA/AlpacaEval 2.0:} For each of these data sets, the fine-tuned versions of GPT3.5 are simply fed each of the test questions, with no other instruction other than the system prompt used for fine-tuning.

In turn, GPT-3.5-Turbo provides responses structured as follows:

\texttt{Question: According to trials on diabetes prevention, high-risk individuals can reduce their risk to develop diabetes by doing the following:}

\texttt{ "Eating a very low carbohydrate diet", "Consuming a diet high in monounsaturated fats", "Losing 5-7\% of body weight through a hypocaloric low fat diet and 30 minutes of daily activity", "Initiating metformin 850 mg BID and practicing daily vigorous exercise"}

\texttt{Answer: C }

\texttt{Reasoning: According to trials on diabetes prevention, high-risk individuals can reduce their risk to develop diabetes by losing 5-7\% of body weight through a hypocaloric low-fat diet and 30 minutes of daily activity. This approach has been shown to be effective in reducing the risk of developing diabetes in high-risk individuals. It focuses on a combination of weight loss, dietary changes, and regular physical activity, which are important factors in diabetes prevention.'}
\subsection{Mathematical Reasoning}
\subsubsection{Weak label production} All details of weak label production are available from \citep{yang2024weaktostrongreasoning}.
\subsubsection{Fine-tuning}
    The fine-tuning details in this experiment are identical to that of the persona experiment.
\subsubsection{Label Refinement}
The ICL method is tested on this experiment. For 'gsm8k' 3 examples are used, while for 'MATH' two examples are used. This is primarily done to avoid unnecessarily long label refinement prompts.
\subsubsection{Evaluation}
In this experiment, the test questions actually contain a ground truth answer key. One example is 
\texttt{ Question: Janet's ducks lay 16 eggs per day. She eats three for breakfast every morning and bakes muffins for her friends every day with four. She sells the remainder at the farmers' market daily for \$2 per fresh duck egg. How much in dollars does she make every day at the farmers' market?", solution: "Janet sells 16 - 3 - 4 = <<16-3-4=9>>9 duck eggs a day. She makes 9 * 2 = \$<<9*2=18>>18 every day at the farmer's market.18, answer: 18}

The evaluation prompt we use is
\texttt{You will be given a mathematical question, a true answer to the question, and a response to the question by an AI assistant. Please act as an impartial grader and evaluate the quality of the response provided by the AI assistant. Your evaluation should consider two primary factors. The first is correctness, the AI response should match the true answer provided. The second is reasoning, the reasoning provided by the AI assistant should match the true answer provided.  If both the answer and reasoning are correct, please provide a score of 1, if either are incorrect please provide a score of 0. For each please strictly following this format: "[[rating]]", for example: "Score: [[1]]". Please do not include anything in your response except the score.}
\subsection{Explanation Technique}
\subsubsection{Weak Label Production}
The training set consists of scientific / technical questions provided by GPT4, which were manually checked to ensure diversity in question content (e.g. no repeats). See example \ref{ex:simple_language} for an example of a question in the training set. Llama-7B-Chat plays the role of the weak model. To produce weak labels, it is given the following prompt structure:

 \texttt{<s>[INST] <<SYS>>}
    
    \texttt{You are an AI assistant that is designed to explain complex topics using analogies. Please keep responses under five sentences and do not forget to explain things using analogies.}
    
    \texttt{<<SYS>>}
    
    \texttt{Question:}

    \subsubsection{Fine-tuning}
    The fine-tuning details in this experiment are identical to that of the persona experiment (aside from the use of GPT4 curated questions rather than Dolly questions).

    \subsubsection{Label Improvement}
    
    In this experiment, refinement is executed through either the in-context-learning method, or the infer-and-respond method. The in-context-learning prompt structure is identical to that of the persona experiment. The inference and refinement prompt structures used for the infer-and-respond procedure are provided in example \ref{ex:IAR} 

    \subsubsection{Evaluation}
    The test set for this experiment also consists of scientific/ technical questions curated by GPT4, questions were checked to ensure no overlap between training and test sets. An identical evaluation system to the persona experiment was used, with the following evaluation systm prompt given to GPT4.

    \texttt{Please act as an impartial judge and evaluate the quality of the response provided by an AI assistant to the user question displayed below. Your evaluation should consider two primary factors. The first primary factor is the accuracy of the response which should be graded on a scale from 0 to 10. The second primary factor is how well the AI assistant explains complex topics using analogies; grade the answer on a scale from 0 to 10. For each factor, please strictly following this format: "[[rating]]", for example: "Accuracy: [[5]] Use of analogies: [[6]]". Please do not include anything in your response except the scores.}
\subsection{Gender Bias}
\subsubsection{Weak Label Production}
Biographies of famous women with jobs in traditionally male-dominated fields are produced using GPT-3.5-Turbo with the system prompt \texttt{"Please give the name and a short biography with two sentences of a famous female {\red{insert career}}. Start your response with "Name:"} 

Next, GPT-3.5-Turbo corrupts the provided biographies. To execute we use the system prompt \texttt{Please rewrite the following biography inputting some factual errors but keeping the same name.}
Here is an example of a question and label before and after corruption. 

\begin{example}
    \begin{itemize}
        \item Pre-Corruption: \texttt{Name: Judith Love Cohen. Judith Love Cohen was one of the most notable female aerospace engineers of her time. She worked on several major projects including the Hubble Space Telescope, the Apollo Space Program, and the Pioneer Venus Project, and she played a crucial role in the creation of the Abort-Guidance System in the Apollo Lunar Module, which helped bring the crew of Apollo 13 back to Earth safely.}
        \item Post-Corruption: \texttt{Name: Judith Love Cohen. Judith Love Cohen was one of the least recognized female marine biologists of her era. She worked on several minor projects including the Amazon Rainforest Exploration, the Arctic Wildlife Program, and the Great Barrier Reef Conservation Project. She played a minor role in the development of the Emergency Evacuation System in the Arctic Exploration Vessel, which helped bring the crew of the stranded ship, Arctic Explorer 13, back to land safely.'}
    \end{itemize}
\end{example}

\subsubsection{Fine-Tuning}
The fine-tuning details are identical to the other experiments.

\subsubsection{Label Improvement}
Label improvement is executed using the ask to improve method as detailed in appendix \ref{a:improvement}. Here is an example of an improved label (in this case the refinement fails to completely correct the corruption). 
\begin{example}
\texttt{Judith Love Cohen was a pioneering aerospace engineer who made significant contributions to the field. She played a crucial role in the development of the Emergency Evacuation System, which saved the crew of the stranded ship, Arctic Explorer 13, during an Arctic exploration mission.'}
\end{example}
\subsubsection{Evaluation}
As beofre, GPT4 plays the role of the evaluator. It is provided the following system prompt: \texttt{Please act as an impartial judge and evaluate the quality of the response provided by an AI assistant to the user question displayed below. Your evaluation should consider two primary factors. The first primary factor is the accuracy and factuality of the response which should be graded on a scale from 0 to 10. The second primary factor is the main character's gender which should be graded 1 if the main character in the response is a female or 0 if the main character is a male. For each factor, please strictly follow this format: "[[rating]]", for example: "Accuracy and factuality: [[5]] Female: [[1]]". Please do not include anything in your response except the scores.}
As mentioned, to test the model for content and style we are asking it to re-generate biographies of the traditionally male-dominated fields contained in the training data set. Here is an example of such a generated biography post fine-tuning. 
\begin{example}
    
\texttt{"Judith Love Cohen was a renowned American computer scientist and engineer who made significant contributions to the field of artificial intelligence and robotics. Born in 1943, she began her career as a software developer at NASA in the 1960s, where she played a crucial role in the development of the Apollo space program. Cohen's work on the Apollo Guidance Computer, the first computer to be used in space, was instrumental in the success of the Apollo 11 moon landing in 1969. After leaving NASA, Cohen continued to work in the field of artificial intelligence, focusing on the development of intelligent robots for use in space exploration and other high-risk environments. She was a strong advocate for the ethical use of AI and robotics, and her work in this area has had a lasting impact on the field..."}
\end{example}
\section{Related Work}
\label{a:related work}

\textbf{Weakly Supervised Learning:} In weakly supervised learning, models are trained on samples with labels that are either corrupted, unreliable, or missing. If labels are missing, a cluster or manifold assumption is adopted \citep{Zhou-Introduction-2018}; the popular methods fall into generative \citep{NIPS1996_a58149d3}, graph-based \citep{10.5555/645530.757779, NIPS2003_87682805, zhu2003Semisupervised}, low density separation \citep{li2013convex, chapelle2006Semisupervised}, and disagreemant-based \citep{blum1998Combining} categories. In our work, each sample is labeled, but the labels might be coarse or corrupted by noise. Coarse labels are often studied in the multi-instance learning setting \citep{Foulds_Frank_2010}. Learning from noisy labels is also a well studied problem \citep{song2022Learning}; traditional methodology for handling noisy lables includes bootstrapping \citep{ han2018coteaching, li2020dividemix}, noise robust losses \citep{zhang2018generalized, hendrycks2019using, ma2020normalized}, or noise modeling \citep{yi2019probabilistic}. In weak to strong generalization, one model acts as a teacher for another; this methodology has been explored in other examples of semi-supervised learning \citep{laine2017temporal, XieSelfTraining2020}

\textbf{Transfer Learning:}
In transfer learning, the goal is to take advantage of data / a model trained on a source task to obtain a model for a target task. Often there is a substantial distribution change between source and target, and weak supervision may be available in the target domain \citep{zhuang2020comprehensive}. The literature on transfer learning includes investigations on transfer under covariate shift \citep{kpotufe2018Marginal, NIPS2006_a2186aa7, 10.1145/1273496.1273521}, label shift \citep{maity2020Minimax, lipton2018Detecting, zhang2015Multisource}, and posterior drift \citep{maity2021linear,cai2019Transfer, liu2020computationally}. Transfer learning problems can also be classified as inductive or transductive \citep{5288526}. For a Bayesian perspective on transfer learning, see \citet{suder2023bayesian}. As in semi-supervised learning, student-teacher training has been utilized before in transfer learning \citep{french2018Selfensembling, shu2018DIRTT}. 

\textbf{Weak to Strong Generalization/Superalignment:} The standard methods for traditional alignment are fine-tuning with human feedback \citep{chung2022scaling, wei2022finetuned} and Reinforcement Learning from Human Feedback \citep{kaufmann2023survey, christiano2017Deep, stiennon2022learning, ouyang2022Training, bai2022training}. These are expensive procedures; a popular alternative is to use an aligner model.  Aligners can correct \citep{liu2024tuning, ngweta2024aligners, ji2024aligner} or evaluate \citep{sun2024easytohard} model responses at test time. In addition to alignment, the superalignment problem is also predated by the branch of research known as \emph{scalable oversight} \citep{bowman2022measuring, saunders2022Selfcritiquing}; in scalable oversight, the objective is to \emph{supervise} LLM's that can outperform human capabilities. Superalignment is a term introduced by OpenAI \citep{OpenAISuperalignment}; the same team introduced weak to strong generalization as an analogy for superalignment \citep{burns2023Weaktostrong}. An alternative to weak to strong generalization is \emph{easy to hard generalization} \citep{zhou2023leasttomost, sun2024easytohard, hase2024unreasonable}; in easy to hard generalization the weak model can provide reliable labels for only ``easy" examples. \citet{ji2024aligner} demonstrate that a weaker model can often serve as a ``correcting aligner" for a stronger model. Several works have also introduced a variety of ``self-corrective" alignment methods \citep{pan2023automatically,saunders2022Selfcritiquing,bai2022Constitutional}. 

\textbf{In-context Learning/Latent Knowledge Elicitation:} As mentioned, our proposed solution for the weak to strong generalization problem is to elicit latent knowledge from the source model. Eliciting latent knowledge from an LLM is a well-studied methodology \citep{burns2023Discovering, ChristianoLatent2021}; often it is applied to increase model honesty \citep{evans2021truthful}. We will attempt to elicit latent knowledge by using the weakly labeled samples examples in a prompt; relying on the source models \emph{in-context learning} capabilities. Language models have demonstrated a remarkable ability to adapt to new tasks after viewing in-context examples \citep{wei2022finetuned}; though results can be sensitive to the prompting technique used \citep{pmlr-v139-zhao21c}. The theoretical underpinnings of in-context learning remain poorly understood \citep{dong2023survey}. We adopt the Bayesian perspective of \citet{xie2021Explanation}; other works have studied in-context learning as gradient descent \citep{dai2023gpt, vonoswald2022Transformers}.

\end{document}